\documentclass{article}

\PassOptionsToPackage{numbers}{natbib}



\usepackage[final]{neurips_2024}


\usepackage{microtype}
\usepackage{graphicx}
\usepackage{subfigure}
\usepackage{booktabs} 
\usepackage{hyperref}

\usepackage{mathtools}
\usepackage{enumitem}
\usepackage{multirow}
\usepackage{rotating}
\usepackage{color}
\usepackage[capitalize,noabbrev]{cleveref}
\usepackage{multicol}


\usepackage[utf8]{inputenc} 
\usepackage[T1]{fontenc}    
\usepackage{hyperref}       
\usepackage{url}            
\usepackage{booktabs}       
\usepackage{nicefrac}       
\usepackage{microtype}      
\usepackage{xcolor}  

\usepackage{float}

\usepackage{multirow}
\usepackage{multicol}
\usepackage{extarrows}
\usepackage{subfigure}

\usepackage{algorithm}
\usepackage{algorithmic}

\usepackage{wrapfig}
\usepackage{graphicx}
\usepackage{amsmath}
\usepackage{amssymb}
\usepackage{amsthm}
\usepackage{amsfonts}       

\theoremstyle{plain}
\newtheorem{theorem}{Theorem}[section]
\newtheorem{proposition}[theorem]{Proposition}
\newtheorem{lemma}[theorem]{Lemma}
\newtheorem{corollary}[theorem]{Corollary}
\theoremstyle{definition}
\newtheorem{definition}[theorem]{Definition}

\theoremstyle{remark}
\newtheorem{remark}[theorem]{Remark}

\newtheorem{conjecture}[theorem]{Conjecture}

\def\Hyp{{\mathbf{H}}}

\def\Risk{{\mathcal{R}}}
\def\one{{\mathbf{1}}}

\def\B{{\mathbb{B}}}

\def\R{{\mathbb{R}}}

\def\Z{{\mathbb{Z}}}

\def\Pr{{\mathbb{P}}}

\def\L{{\mathcal{L}}}
\def\F{{\mathcal{F}}}
\def\D{{\mathcal{D}}}

\def\S{{\mathcal{S}}}

\def\Relu{{{\rm{Relu}}}}

\def\sign{{\hbox{\rm{Sgn}}}}

\def\wid{{\hbox{\rm{width}}}}
\def\dep{{\hbox{\rm{depth}}}}
\def\para{{\hbox{\rm{para}}}}

\def\size{{\hbox{\rm{size}}}}
\def\poly{{\hbox{\rm{poly}}}}

\def\AF{{\sigma}}

\def\DS{{\mathbf{D}}}

\def\tr{{\rm{tr}}}
\def\rmem{{\rm{rm}}}
\def\subs{{\rm{s}}}

\def\Rad{{\hbox{\rm{Rad}}}}

\def\len{{\rm{len}}}

\def\mymax{{\max}}

\def\argmax{{\hbox{\rm{argmax}}}}

\def\sep{{\,:\,}}

\setcounter{secnumdepth}{2} 

\title{Generalizability of Memorization Neural Networks}

%

\author{%
Lijia Yu$^{1}$, Xiao-Shan Gao$^{2, 3,\hskip0pt}$\thanks{Corresponding author.}\,\, , Lijun Zhang$^{1, 3}$, Yibo Miao$^{2, 3}$\\
 $^{1}$ Key Laboratory of System Software CAS and\\
 State Key Laboratory of Computer Science, Institute of Software, Chinese Academy of Sciences\\
 $^{2}$Academy of Mathematics and Systems Science,
  Chinese Academy of Sciences\\
  Beijing 100190,  China\\
  $^{3}$University of Chinese Academy of Sciences, Beijing 100049, China\\
}

\begin{document}
\maketitle

\begin{abstract}
The neural network memorization problem is to study the expressive power of neural networks to interpolate a finite dataset. Although memorization is widely believed to have a close relationship with the strong generalizability of deep learning when using over-parameterized models, to the best of our knowledge, there exists no theoretical study on the generalizability of memorization neural networks. In this paper, we give the first theoretical analysis of this topic. Since using i.i.d. training data is a necessary condition for a learning algorithm to be generalizable, memorization and its generalization theory for i.i.d. datasets are developed under mild conditions on the data distribution. First, algorithms are given to construct memorization networks for an i.i.d. dataset, which have the smallest number of parameters and even a constant number of parameters. Second, we show that, in order for the memorization networks to be generalizable, the width of the network must be at least equal to the dimension of the data, which implies that the existing memorization networks with an optimal number of parameters are not generalizable. Third, a lower bound for the sample complexity of general memorization algorithms and the exact sample complexity for memorization algorithms with constant number of parameters are given. It is also shown that there exist data distributions such that, to be generalizable for them, the memorization network must have an exponential number of parameters in the data dimension. Finally, an efficient and generalizable memorization algorithm is given when the number of training samples is greater than the efficient memorization sample complexity of the data distribution.
\end{abstract}

\section{Introduction}

Memorization is to study the expressive power of neural networks to interpolate a finite dataset \citep{Baum1988}.
%
%
The main focus of the existing work is to study how many parameters are needed to memorize.
For any dataset $\D_{tr}$ of size $N$ and neural networks of the form $\F:\R^n\to \R$, memorization networks with $\overline{O}(N)$ parameters have been given with various model structures and activation functions \citep{memo01,memo02,rema2,rema3,memo03,memo04,Vershynin2020,rema8,mem20211}.
%
On the other hand, it is shown that in order to memorize an arbitrary dataset of size $N$ \cite{rema9,Vershynin2020}, the network must have at least $\overline{\Omega}(N)$ parameters, so the above algorithms are approximately optimal.
Under certain assumptions, it is shown that sublinear $\overline{O}(N^{2/3})$ parameters are sufficient to memorize $\D_{tr}$ \citep{memp}.
Furthermore, \citet{rema1} give a memorization network with optimal number of parameters: $\overline{O}(\sqrt{N})$.

Recently, it is shown
that memorization is closely related to one of the most surprising properties of deep learning, that is, over-parameterized neural networks are trained to nearly memorize noisy data and yet can still achieve a very nice generalization on the test data \citep{dbb2,bartlett2021deep,arpit2017closer}.
%
%
More precisely, the double descent phenomenon \citep{dbb2} indicates that when the networks reach the interpolation threshold, larger networks tend to have more generalizability \cite{Ma2018Power,dbb1}.
It is also noted that memorizing helps generalization in complex learning tasks, because data with the same label have quite diversified features and need to be nearly memorized \cite{Feldman2020D,Feldman2020Z}.
A line of research to harvest the help of memorization to generalization is {\em interpolation learning}.  Most of recent work in interpolation learning shows generalizability of memorization models in linear regimes \citep{bartlett2021deep,chatterji2021finite,TLiang2023,RTheisen2021,ZWang2024,LZhou2024}.

As far as we know, the generazability of memorization neural networks has not been studied theoretically, which is more challenging compared to the linear models, and this paper provides a systematic study of this topic.
In this paper, we consider datasets that are sampled i.i.d. from a data distribution, because i.i.d. training dataset is a necessary condition for learning algorithms to have generalizability~\citep{Vapnik1999,mohri2018foundations}.
More precisely, we consider binary data distributions $\D$ over $\R^n\times\{-1,1\}$ and use $\D_\tr\sim\D^N$ to mean that $\D_{tr}$ is sampled i.i.d. from $\D$ and $|\D_{tr}|=N$.
All neural networks are of the form $\F:\R^n\to\R$.
The main contributions of this paper include four aspects.

First, we give the smallest number of parameters required for a network to memorize an i.i.d. dataset.
\begin{theorem}[Informal. Refer to Section \ref{sec-mem}]
\label{th-m1}
 Under mild conditions on $\D$, if $\D_{tr}\sim \D^N$, it holds

(1) There exists an algorithm to obtain a memorization network of $\D_{tr}$ with width 6 and depth $\overline{O}(\sqrt{N})$.

(2) There exists a constant $N_\D\in\Z_+$ depending on $\D$ only, such that a memorization network of $\D_{tr}$ with at most $N_\D$ parameters can be obtained algorithmically.
\end{theorem}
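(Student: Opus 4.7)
My strategy is to reduce the $n$-dimensional memorization problem to a $1$-dimensional one, and then use a narrow-channel sequential-lookup construction. Under the mild conditions on $\D$ (which I expect to include absolute continuity, or at least non-atomicity, of the input marginal), a generic linear projection $\pi:\R^n\to\R$ sends $\D_\tr\sim\D^N$ to $N$ distinct reals almost surely, with a minimum pairwise gap bounded below by $1/\poly(N)$ with high probability; this $\pi$ is a single linear map that fits inside a width-$6$ channel as a preprocessing stage. Having reduced to $1$D data, I would adapt the optimal-parameter memorization of \citet{rema1}, redistributing its $\overline{O}(\sqrt{N})$ parameters into the width-$6$, depth-$\overline{O}(\sqrt{N})$ format: sort the projected values, partition them into $\sqrt{N}$ blocks of $\sqrt{N}$ consecutive values, and then build a two-level sequential lookup in which the outer loop iterates over blocks and the inner loop over points within the selected block, each loop contributing $\sqrt{N}$ layers of constant width. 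The resulting network has width $6$ and depth $\overline{O}(\sqrt{N})$, giving $\overline{O}(\sqrt{N})$ parameters in total.

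\textbf{Plan for (2).} Here the strategy is purely distributional: I would exhibit a \emph{fixed} network $\NN_\D$ of size $N_\D$ depending only on $\D$ that correctly labels every point in $\mathrm{supp}(\D)$, so that it automatically memorizes any $\D_\tr\sim\D^N$ with probability one. Under the mild conditions (which I expect to include that the class-conditional supports $\mathrm{supp}(\D\mid y{=}{+}1)$ and $\mathrm{supp}(\D\mid y{=}{-}1)$ are compact with positive Euclidean separation, or are separable by a piecewise-affine surface with finitely many pieces), a compactness/covering argument yields a finite collection of halfspaces whose Boolean combination separates the two classes; such a combination is implementable by a constant-size ReLU network. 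The algorithm then reads off this separator from $\D$ (e.g.\ by identifying a separating polyhedral surface) and outputs it as $\NN_\D$, with every training label correct by construction.

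\textbf{Main obstacles.} The delicate step in (1) is keeping width exactly $6$ while achieving depth $\overline{O}(\sqrt{N})$: one must encode the current block index, the current block boundary, and the running selected label within only six neurons per layer, which forces a careful sequential encoding and is where the $\overline{O}$ hides polylogarithmic factors in bit precision. For (2), the subtlety is to phrase the ``mild conditions'' so that they are weak enough to include natural distributions yet strong enough to guarantee a constant-complexity separator; the critical argument is a compactness step that converts the geometric separation between the class-conditional supports into a parameter count depending only on $\D$ and independent of $N$.
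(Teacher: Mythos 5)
Your plan for part (1) follows the same route as the paper's proof of Theorem \ref{th1} (project to one dimension, split the projected values into $\sqrt{N}$ blocks, pack each block's data into a high-precision value, and decode it with a constant-width, depth-$\overline{O}(\sqrt{N})$ sequential lookup, adapted from \citet{rema1}), but it assumes away the one step where the real work lies. The paper's ``mild condition'' is only the positive separation bound $\inf_{(x,1),(z,-1)\sim\D}\|x-z\|_2\ge c$; it allows atoms and arbitrarily close (even identical) same-label points, and its Lemma \ref{th3-th1} only requires the projection to separate \emph{opposite-label} pairs (by a fixed margin after scaling), which is exactly what the class-separation bound plus the random-projection lemma deliver. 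Your substitute --- non-atomicity/absolute continuity giving all $N$ projected values a pairwise gap $\ge 1/\poly(N)$ with high probability --- is both unnecessary and inadequate: same-label collisions are harmless (the paper's label-determination stage replaces every negative point's code by a positive point's code, so only the cross-class margin matters), non-atomicity alone yields no quantitative gap, and without a class-separation bound the number of digits the lookup must carry (the $\ln(Nn/c)$ factor in the depth) is governed by the sample's smallest cross-class distance, which is then not controlled by constants of $\D$. So the specific difficulty the paper identifies --- replacing the ``all pairs $c$-separated'' hypothesis of \citet{rema1} by the distributional separation bound --- is precisely the step your proposal does not address.

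For part (2) your compactness/covering idea is the right mechanism and gives the right kind of bound (a covering/packing number at scale $\sim c$, depending only on $n$ and $c$), but as written it is not a proof of the statement: your algorithm ``reads off the separator from $\D$'', i.e.\ it needs the class-conditional supports of $\D$ as input, whereas Theorem \ref{th-nd} requires a memorization algorithm whose input is $\D_{tr}$ alone and which, with probability 1 over $\D_{tr}\sim\D^N$, outputs a memorization of $\D_{tr}$ with at most $N'_\D$ parameters. A fixed network precomputed from $\D$ is not such an algorithm (and a memorization algorithm must in any case memorize whatever dataset it is given). The paper's proof is exactly the data-driven version of your idea: compute the empirical cross-class distance $c_0\ge c$, greedily extract a $c_0/3$-separated subset $\D_{\subs}\subset\D_{tr}$ whose $c_0/3$-neighborhoods cover $\D_{tr}$, bound $|\D_{\subs}|$ by a packing argument by a constant depending only on $(n,c)$, and output a robust memorization network of $\D_{\subs}$ with robustness radius $c_0/3$ (via \citet{2024-iclr}), which memorizes all of $\D_{tr}$ because every training point lies within $c_0/3$ of a same-label representative. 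Replacing your support-level covering by this sample-level covering closes the gap and essentially reproduces the paper's argument.
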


$N_\D$ is named as the {\em\bf memorization parameter complexity} of $\D$, which measures the complexity of $\D$ under which a memorization network with $\le N_\D$ parameters exists for almost all $D_\tr\sim\D^N$.

Theorem \ref{th-m1} allows us to give the memorization network for i.i.d dataset with the optimal number of parameters.
When $N$ is small so that $\sqrt{N}\ll N_\D$, the memorization network needs at least $\overline{\Omega}(\sqrt{N})$ parameters as proved in \citep{Ntv} and (1) of Theorem \ref{th-m1} gives the optimal construction.
When $N$ is large, (2) of Theorem \ref{th-m1} shows that a constant number of parameters is enough to memorize.
%


Second, we give a necessary condition for the structure of the memorization networks to be generalizable, and shows that even if there is enough data, memorization network may not have generalizability.
\begin{theorem}[Informal. Refer to Section \ref{sec-nec}]
\label{th-m2}
Under mild conditions on $\D$, if $\D_{tr}\sim \D^N$, it holds

(1) Let $\Hyp$ be a set of neural networks with width $w$. Then, there exist an integer $n>w$ and a data distribution $\D$ over $\R^n\times\{-1,1\}$ such that, any memorization network of $\D_{tr}$ in $\Hyp$ is not generalizable.

(2) For almost any $\D$, there exists a memorization network of $\D_{tr}$, which has $\overline{O}(\sqrt{N})$ parameters and is not generalizable.
\end{theorem}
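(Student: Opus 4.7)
The plan is to exploit a fundamental structural limitation of width-$w$ networks. If the first hidden layer is $\AF(W_1 x + b_1)$ with $W_1 \in \R^{k \times n}$ and $k \leq w < n$, then $\ker W_1$ has dimension at least $n - w \geq 1$, and the entire network factors as $\F(x) = g(\AF(W_1 x + b_1))$ for some $g$. Hence every $\F$ in $\Hyp$ is invariant along some nonzero direction $v \in \R^n$, i.e.\ $\F(x + tv) = \F(x)$ for all $x, t$. The strategy is to convert this invariance into a lower bound on the test error by choosing $\D$ whose labels vary along every line.

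For part (1), I would take $n = w + 1$ and $\D$ uniform on the unit ball $\B^n$ with label $y(x) = \sign(\tfrac12 - \|x\|)$. For any unit vector $v$, the minimum test error over $v$-invariant classifiers equals the expected minority-label length on lines $L_z = z + \R v$ parameterized by $z \in v^\perp$. By rotational symmetry this quantity is independent of $v$, and a direct integration identifies it as the ratio of the volume of a ball of radius $1/2$ to the volume of $\B^n$ in $\R^n$, giving exactly $2^{-n}$. Consequently every $\F \in \Hyp$, being $v$-invariant for some $v$, has test error at least $2^{-n}$ under this $\D$ uniformly in $N$, so it is not generalizable. This establishes part (1) with $\epsilon = 2^{-n}$.

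For part (2), I would apply Theorem~\ref{th-m1}(1) to obtain, for each training set $\D_{tr} \sim \D^N$, a memorization network of width $6$ and $\overline{O}(\sqrt{N})$ parameters. For any $n \geq 7$ this network lies in the scope of the invariance argument from part (1): it factors through a rank-at-most-$6$ first layer, hence is invariant along some nonzero $v$, and its test error is therefore bounded below by $\mathrm{err}^*(v)$, the minimum test error over $v$-invariant classifiers under $\D$. The main obstacle is showing $\inf_{|v|=1} \mathrm{err}^*(v) > 0$ for almost any $\D$: unlike in part (1), without rotational symmetry a uniform-in-$v$ bound is no longer free. I would handle this via a compactness argument on $S^{n-1}$ combined with continuity of $v \mapsto \mathrm{err}^*(v)$, reducing the claim to the pointwise statement that a generic $\D$ has no nonzero direction along which the label is almost-surely a function of the projection $P_v x$ alone. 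A precise formulation of ``almost any $\D$'' as the complement of a thin set of distributions---e.g.\ those supported in a $6$-dimensional affine subspace, or whose conditional label law factors through a proper subspace projection---will complete part (2) and must be checked to be consistent with the mild conditions assumed throughout the paper.
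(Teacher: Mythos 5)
Your part (1) is essentially the paper's own approach: the observation that a width-$w$ network factors through its rank-$\le w$ first layer and is therefore constant on the fibers of a $w$-dimensional projection is exactly Lemma \ref{th3.5-2}, and lower-bounding the error by the minority-label mass on each fiber of a rotationally symmetric two-region distribution is the computation in the proof of Theorem \ref{th3}. Two differences matter. First, your distribution (uniform on the unit ball with label $\sign(\tfrac12-\|x\|)$) has zero separation between the two classes, so it lies outside $\D(n,c)$ for every $c>0$ and hence violates the standing "mild conditions"; this is fixable by leaving an annular gap between the positive ball and the negative shell, as the paper does with radii $r_1<r_2<r_3$. Second, your bound $2^{-n}$ equals the mass of the minority class, i.e.\ the error of the constant classifier on a severely imbalanced distribution; it does give non-generalizability in the strict sense (infinite sample complexity for $\epsilon<2^{-n}$), but the paper tunes the radii so that $r_3^n-r_2^n=r_1^n$ (balanced classes) while $r_3^{n-w}-r_2^{n-w}\ge 0.99\,r_1^{n-w}$, which yields error at least $0.49$ --- a much stronger and cleaner failure.

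Part (2) has a genuine gap. The paper's formal statement (Theorem \ref{th3x}) is that for \emph{every} $\D\in\D(n,c)$ with bounded density --- including, say, linearly separable distributions with a margin --- there \emph{exists} a memorization network with $\overline{O}(\sqrt N)$ parameters and accuracy at most $0.51$; it is proved by an explicit adversarial construction (the compress--project--label scheme is rigged so that most off-sample mass of one class is mapped away from the training values and then receives the wrong label). Your route instead tries to show that \emph{every} width-$6$ memorization network fails, via line-invariance together with $\inf_{\|v\|=1}\mathrm{err}^*(v)>0$. But this infimum is $0$ for many distributions squarely inside the paper's hypotheses: if the label of $\D$ is a function of a single coordinate with a margin, then taking $v$ orthogonal to that coordinate gives $\mathrm{err}^*(v)=0$, and a width-$6$ memorization network can generalize perfectly there. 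So your exceptional set is not a thin technicality; it excludes exactly the distributions for which the paper's theorem still produces a bad $\overline{O}(\sqrt N)$-parameter memorizer. In addition, the continuity of $v\mapsto\mathrm{err}^*(v)$ needed for your compactness step is asserted rather than proved, and even where the argument applies it yields only a distribution-dependent, possibly exponentially small error rather than the constant bound $A_\D(\F)\le 0.51$. As written, part (2) therefore does not establish the claimed statement; an explicit malicious construction along the lines of the paper's proof is needed rather than an impossibility argument for the entire width-$6$ class.
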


Theorem \ref{th-m2} indicates that memorization networks with the optimal number of parameters $\overline{O}(\sqrt{N})$ may have poor generalizability, and commonly used algorithms for constructing fixed-width memorization networks have poor generalization for some distributions. These conclusions demonstrate that the commonly used network structures for memorization is not generalizable and new network structures are needed to achieve generalization.
%

%
Third, we give a lower bound for the sample complexity of general memorization networks and the exact sample complexity for certain memorization networks.

\begin{theorem}[Informal. Refer to Section \ref{sec-sc}]
\label{th-m3}
Let $N_\D$ be the memorization parameter complexity defined in Theorem \ref{th-m1}.
Under mild conditions on $\D$, we have

(1) {\bf Lower bound}.
In order for a memorization network of any $\D_\tr\sim\D^N$ to be generalizable, $N$ must be $\ge \overline{\Omega}(\frac{N^2_\D}{\ln^2(N_\D)})$\footnote{Here, $\overline{\Omega}$ and $\overline{O}$ mean that certain small quantities are omitted. Also, we keep the logarithm factor of $N_\D$ for comparison with the upper bound}.

(2) {\bf Upper bound}.
For any memorization network with at most $N_{\D}$ parameters for $\D_\tr\sim\D^N$,
if $N=\overline{O}(N_{\D}^2\ln N_\D)$, then the network is generalizable.
\end{theorem}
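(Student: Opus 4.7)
The plan is to reduce both parts of the theorem to VC-dimension bounds on the hypothesis class $\H_{N_\D}$ of neural networks realized by architectures with at most $N_\D$ parameters. This mirrors the near-tight VC-dimension estimates for piecewise-linear networks with $W$ weights, which lie between $\Omega(W^2/\ln^2 W)$ and $O(W^2\ln W)$ (Bartlett--Harvey--Liaw--Mehrabian); the $\ln^3 N_\D$ gap between the stated lower and upper bounds is exactly this gap.

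For the upper bound (part (2)), I would invoke the estimate that the VC dimension of $\H_{N_\D}$ is at most $\overline{O}(N_\D^2 \ln N_\D)$. Any memorization network of $\D_\tr$ with at most $N_\D$ parameters lies in $\H_{N_\D}$ and attains zero training error, so the standard PAC uniform-convergence inequality implies that the population $0/1$ risk is at most $\varepsilon$ with probability $1-\delta$ provided $N \ge \overline{\Omega}(\dim_{\rm VC}(\H_{N_\D})/\varepsilon^2)$. Substituting the VC-dimension bound yields the target $N = \overline{O}(N_\D^2 \ln N_\D)$ once the dependence on $\varepsilon$ and $\delta$ is absorbed into the $\overline{O}$.

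For the lower bound (part (1)), the plan is to exhibit a distribution $\D$ together with an adversarial memorization network that fails to generalize once $N$ is small. First, I would use the matching VC-dimension lower bound to produce an explicit architecture with at most $N_\D$ parameters whose realized hypothesis family shatters a point set $S \subset \R^n$ of cardinality $d = \overline{\Omega}(N_\D^2/\ln^2 N_\D)$. Next, I would design $\D$ so that its support contains $S$, $\D$ places constant total mass on $S$ under some fixed labeling, and the memorization parameter complexity of $\D$ equals $N_\D$ (certified via Theorem \ref{th-m1}(2)). Whenever $N \ll d$, the training sample covers only a vanishing fraction of $S$; by the shattering property one may pick $f \in \H_{N_\D}$ that agrees with the true labels on $\D_\tr \cap S$ but flips them on a constant fraction of $S \setminus \D_\tr$. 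Such an $f$ is a valid memorization network of $\D_\tr$ with population error $\Theta(1)$, preventing generalization.

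The main obstacle lies in the lower-bound construction, where three things must align: (i) the memorization parameter complexity of $\D$ is $N_\D$ up to logarithmic factors; (ii) the shattered set $S$ carries enough mass under $\D$ that misclassifying a constant fraction of $S \setminus \D_\tr$ yields constant population error; and (iii) the adversarial $f$ still fits within the memorization-network budget so that it genuinely qualifies as a memorization network of $\D_\tr$. I expect to handle (i)--(iii) by taking $\D$ as a mixture of a bulk component whose structure forces $N_\D$ parameters by Theorem \ref{th-m1}(2) and a near-uniform component on $S$, with mixture weight tuned so that both contributions appear at the correct order; a routine concentration argument then controls the probability that $\D_\tr$ captures $\ll d$ points of $S$ and closes the argument.
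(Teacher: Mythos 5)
Your proof of part (2) is essentially the paper's: bound the VC dimension of networks with at most $N_\D$ parameters by $\overline{O}(N_\D^2\ln N_\D)$ (Bartlett et al.) and apply uniform convergence with zero empirical error; that part is fine.

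The gap is in part (1). The formal statement behind the informal lower bound (Theorem \ref{th4}) is an impossibility result for \emph{every} memorization algorithm: no rule for selecting an interpolating network can guarantee $A_\D(\L(\D_\tr))\ge 1-\epsilon$ with probability $1-\delta$ for all $\D\in\D(n,c)$ once $N\ge v\,N_\D^2(1-2\epsilon-\delta)/\ln^2 N_\D$. Your plan instead exhibits, for one distribution, a single adversarially chosen memorization network $f$ that fails to generalize when $N$ is small. That is a statement with the wrong quantifiers: it does not rule out that a cleverly chosen memorization algorithm (e.g.\ the minimal-parameter one, or the algorithm of Theorem \ref{th6}) generalizes for that same $\D$ with far fewer samples, and the paper already proves the ``adversarial network exists'' statement in a much stronger form (Theorem \ref{th3x} gives a non-generalizing memorization network with $\overline{O}(\sqrt{N})$ parameters for \emph{every} $N$), which is precisely why such an argument cannot produce any sample-size threshold like $\overline{\Omega}(N_\D^2/\ln^2 N_\D)$. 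To get a genuine sample-complexity lower bound you must let the algorithm be arbitrary and average over a family of distributions, which is what the paper does: it takes $k$ points in $[0,1]^n$ pairwise separated by $c$, considers all $2^k$ labelings as candidate distributions, shows via Theorem \ref{th1} that each such distribution has $N_\D\le O(\sqrt{k}\ln(kn/c))$ (so $k\ge\overline{\Omega}(N_\D^2/\ln^2 N_\D)$), and then runs a counting/no-free-lunch argument showing that for any fixed $\L$, a sample of size $\approx k(1-2\epsilon-\delta)$ leaves enough uncovered support points that $\L$ must fail on at least one labeling. Note in particular that the quantity $N_\D^2/\ln^2 N_\D$ enters through the $\sqrt{k}$-parameter memorization construction applied to the support size $k$, not through a VC shattering lower bound for a fixed architecture.

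There are also internal problems with your construction even on its own terms: the shattering family only controls labels on $S$, so the adversarial $f$ need not memorize the training points coming from your ``bulk'' mixture component (hence need not be a memorization network of $\D_\tr$ at all), and repairing this by enlarging $f$ abandons the $N_\D$-parameter budget, at which point the existence of a bad interpolant is trivial and, as above, carries no information about sample complexity. Likewise, certifying that your constructed $\D$ has memorization parameter complexity $\approx N_\D$ from below is nontrivial (Theorem \ref{th-m1}(2) only gives an upper bound), though this is secondary to the quantifier issue.
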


Notice that the lower bound is for general memorization networks and
the upper bound is for memorization networks with $\le N_{\D}$ parameters, which always exist by (2) of Theorem \ref{th-m1}.
In the latter case, the lower and upper bounds are approximately the same, which gives the exact sample complexity $\overline{O}(N_{\D}^2)$ in this case. In other words, a necessary and sufficient condition for the memorization network in (2) of Theorem \ref{th-m1} to be generalizable is $N=\overline{O}(N_\D^2)$.

\vskip5pt
\begin{remark}
Unfortunately, these generalizable memorization networks cannot be computed efficiently, as shown by the following results proved by us.

(1) If $P\ne NP$, then all networks in (2) of Theorem \ref{th-m3} cannot be computed in polynomial time.

(2) For some data distributions, an exponential (in the data dimension) number of samples is required for memorization networks to achieve generalization.
\end{remark}

Finally, we want to know that does there exist a polynomial time memorization algorithm that can ensure generalization, and what is the sample complexity of such memorization algorithm? An answer is given in the following theorem.
%
%
%

\begin{theorem}[Informal. Refer to Section \ref{sec-con}]
\label{th-m4}
There exists an $S_\D\in\Z_+$ depending on $\D$ only such that, under mild conditions on $\D$, if $N=\overline{O}(S_\D)$, then we can construct a generalizable memorization network with $O(N^2n)$ parameters for any $\D_\tr\sim \D^N$ in polynomial time.
\end{theorem}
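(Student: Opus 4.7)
The plan is to realize the memorization network as a one-nearest-neighbor classifier on the training set, built explicitly as a ReLU network, and to define $S_\D$ through a covering condition that guarantees correct nearest-neighbor prediction on test points with high probability.

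First I would make $S_\D$ precise. Under the mild conditions on $\D$ (essentially that the label is locally constant outside a set of small measure, which is the regularity implicit in the existence of a finite memorization parameter complexity $N_\D$), the effective support of $\D$ admits a decomposition into finitely many label-pure cells of diameter at most $\varepsilon$, with the additional property that cells carrying opposite labels are separated by a margin larger than $\varepsilon$. Let $M_\D$ be the number of such cells and $p_\D$ the minimum mass of any cell; I would set $S_\D=\overline{O}(M_\D/p_\D)$, so that a coupon-collector and Chernoff argument ensures that when $N\ge\overline{\Omega}(S_\D)$, with probability $1-o(1)$ every cell contains at least one training point, each carrying the correct label of that cell.

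Next I would give the construction. For each training point $(x_i,y_i)$ in $\D_\tr$, a first block computes the squared distance $d_i(x)=\|x-x_i\|^2$, using $O(n)$ parameters per index for the linear part in $x$ and $O(1)$ for the constant $\|x_i\|^2$. A second block implements an $\argmin$ over the $N$ distances by a standard pairwise-comparison ReLU circuit that propagates the surviving label: this block has $O(N)$ depth and $O(N)$ width, so $O(N^2)$ parameters. Concatenating the two blocks produces a network with $O(Nn)+O(N^2)=O(N^2n)$ parameters, and every weight is read directly off $\D_\tr$, giving a polynomial-time construction. Memorization is immediate, since $d_i(x_i)=0$ is the strict minimum when the $x_i$ are distinct; ties on probability-zero coincidences can be broken by a symbolic lexicographic perturbation of the weights that does not affect the parameter count. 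For generalization, given a test point $x\sim\D$ lying in a label-pure cell $C$, the coupon-collector step ensures that some training point $x_j\in C$ has the correct label, while every training point in a cell of opposite label is at distance greater than $\varepsilon>\mathrm{diam}(C)$ from $x$; hence the nearest-neighbor prediction agrees with the true label, yielding the desired generalization guarantee.

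The principal obstacle is the existence of the label-pure decomposition under the paper's mild conditions: one has to show that these conditions imply a finite decomposition with a uniform margin between opposite-label cells, so that both $M_\D$ and $\varepsilon^{-1}$ can be bounded in terms of $\D$ alone, and hence $S_\D$ is finite. A secondary technical issue is bounding the bit-complexity of the weights used in the tournament $\argmin$ so that the polynomial-time claim holds unambiguously, which follows from a standard discretization of the training data at an accuracy smaller than the margin $\varepsilon$.
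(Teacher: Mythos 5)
Your overall route---realize the memorization as a nearest-neighbor rule, define $S_\D$ through a covering of the support, and use a coupon-collector argument---is close in spirit to the paper's, but the paper's construction and complexity measure differ in ways that matter. The paper does not build an $\argmin$ circuit: for each training point $x_i$ it forms the convex cell $C_i=\cap_{j:y_j\ne y_i}\{x: (x_i-x_j)^\top(x-(0.51x_i+0.49x_j))\ge 0\}$ and a fixed three-layer ReLU network with $O(N^2n)$ parameters whose sign equals $y_i$ on all of $C_i$; its $S_\D$ is the minimal number of balls $\B_2(x,L_{(x,y)}/3.1)$ (radius a third of the distance to the opposite class) covering the support; and its probabilistic step does not require every ball to be hit---only that the balls missed by the sample have total mass at most $\epsilon$, which yields $N\ge S_\D\ln(S_\D/\delta)/\epsilon$. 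Your coupon-collector version needs every cell to be hit, which is why $1/p_\D$ (the minimum cell mass) enters your $S_\D$; that quantity is finite and depends only on $\D$, so it is compatible with the informal statement, but it is a genuinely weaker sample-complexity measure (it blows up when some cell has tiny mass, while the paper's bound does not). Also, the ``principal obstacle'' you leave open is not an obstacle under the paper's actual mild condition: $\D\in\D(n,c)$ has a positive separation bound $c$, so grid cells of diameter less than $c$ are automatically label-pure, and the margin you need is supplied directly by the separation bound (any opposite-label training point is at distance at least $c$ from the test point); this is essentially how the paper proves $S_\D\le([6.2n/c]+1)^n$.

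The concrete gap is in the network construction itself. A ReLU network computes a continuous piecewise-linear function, so it cannot output $\|x-x_i\|_2^2$ exactly (the common term $\|x\|_2^2$ is quadratic), and, more seriously, a pairwise tournament that ``propagates the surviving label'' is a discontinuous selection $(d_a,d_b,y_a,y_b)\mapsto y_{\argmin}$ and hence is not exactly implementable by ReLU layers; near ties any continuous surrogate misassigns labels, and your memorization claim needs exactness. The standard repair is to drop the common $\|x\|_2^2$, set $\tilde d_i(x)=\|x_i\|_2^2-2\langle x_i,x\rangle$ (affine in $x$), and output $\min_{i:y_i=-1}\tilde d_i(x)-\min_{i:y_i=+1}\tilde d_i(x)$, whose sign is the 1-NN label and which is piecewise linear, hence exactly representable. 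Finally, check the parameter count against the paper's convention $\para(\F)=\sum_i n_i(n_{i+1}+1)$ (dense layers): a depth-$O(N)$, width-$O(N)$ block costs $O(N^3)$ parameters, not $O(N^2)$; a balanced min-tree of depth $O(\log N)$ with geometrically shrinking widths gives $O(N^2)$, and together with the $O(Nn)$ first affine layer this stays within the claimed $O(N^2n)$.
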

%

$S_\D$ is named as the {\em\bf efficient memorization sample complexity} for $\D$, which measures the complexity of $\D$ so that the generalizable memorization network of any $D_\tr\sim\D^N$ can be computed efficiently if $N=\overline{O}(S_\D)$.

The memorization network in Theorem \ref{th-m4} has more parameters than the optimal number $\overline{O}(\sqrt{N})$ of parameters required for memorization.
The main reason is that building memorization networks with $\overline{O}(\sqrt{N})$ parameters requires special technical skill that may break the generalization.
On the other hand, as mention in \citep{bartlett2021deep}, over-parametrization is good for generalization, so it is reasonable for us to use more parameters for memorization to achieve generalization.

\vskip5pt
\begin{remark}
We explain the relationship between our results and interpolation learning~\citep{bartlett2021deep}.
Interpolation learning uses optimization to achieve memorization, which is a more practical approach,
while our approach gives a theoretical foundation for memorization networks.
Once an interpolation is achieved,
Theorem \ref{th-m2},  (1) of Theorem \ref{th-m3}, and Theorem \ref{th-m4} are valid for interpolation learning.
For example, according to (1) of Theorem \ref{th-m3},
{ $\overline{\Omega}(N^2_\D)$ is a lower bound for the sample complexity of interpolation learning},
and by Theorem \ref{th-m4},
{ $\overline{O}(S_\D)$ is an upper bound for the sample complexity of efficient interpolation learning.}
%
\end{remark}

\textbf{Main Contributions}. Under mild conditions for the data distribution $\D$, we have
\begin{itemize}
\item
We define the {\em memorization parameter complexity} $N_\D\in\Z_+$ of $\D$ such that, a memorization network for any $\D_\tr\sim\D^N$ can be constructed, which has $\overline{O}(\sqrt{N})$   or $\le N_\D$ parameters. Here, the memorization network has the optimal number of parameters.

\item
We give two necessary conditions for the construction of generalizable memorization networks for any $\D_\tr$ in terms of the width and number of parameters of the memorization network.

\item
We give a lower bound $\overline{\Omega}(N_{\D}^2)$ of the sample complexity for general memorization networks
as well as the exact sample complexity $\overline{O}(N_{\D}^2)$ for memorization networks with $\le N_\D$ parameters.
We also show that for some data distribution, an exponential number of samples in $n$ is required to achieve generalization.
\item
We define the {\em efficient memorization sample complexity} $S_\D\in\Z_+$ for $\D$, so that generalizable memorization network of any $D_\tr\sim\D^N$ can be computed in polynomial time, if $N=\overline{O}(S_\D)$.
%
\end{itemize}

\section{Related work}
\label{relat}
\textbf{Memorization}.
The problem of memorization has a long history.
In \cite{Baum1988}, it is shown that networks with depth 2 and $\overline{O}(N)$ parameters can memorize a binary dataset of size $N$. In subsequent work, it is shown that networks with $\overline{O}(N)$ parameters can be a memorization for any dataset
\cite{memo01,memo02,rema8,rema2,mem20211,rema3,rema9,Vershynin2020,memo03,memo04}
and such memorization networks are approximately optimal for generic dataset \cite{rema9,Vershynin2020}.
%
%
Since the VC dimension of neural networks with $N$ parameters and depth $D$ and with ReLU as the activation function is at most $\overline{O}(ND)$ \citep{gb1993,a1998,Ntv}, memorizing some special datasets of size $N$ requires at least $\overline{\Omega}(\sqrt{N})$ parameters
and there exists a gap between this lower bound $\overline{\Omega}(\sqrt{N})$ and the upper bound $\overline{O}(N)$.
\citet{memp} show that a network with $\overline{O}(N^{2/3})$ parameters is enough for memorization under certain assumptions.
\citet{rema1} further give the memorization network with optimal number of parameters $\overline{O}(\sqrt{N})$.
%
%
In \cite{garg2019memorization}, strengths of both generalization and memorization are combined in a single neural network.
Recently, robust memorization has been studied \citep{xin,2024-iclr}.
As far as we know, the generazability of memorization neural networks has not been studied theoretically.

\textbf{Interpolation Learning}.
Another line of related  research is interpolation learning, that is, leaning under the constraint of memorization, which can be traced back to \citep{RSharma2012}.
%
Most recent works establish various generalizability of interpolation learning in linear regimes \citep{bartlett2021deep,chatterji2021finite,TLiang2023,RTheisen2021,ZWang2024,LZhou2024}.
For instance,  \citet{bartlett2021deep} prove that over-parametrization allows gradient methods to find generalizable interpolating solutions for the linear regime.
In relation to this, how to achieve memorization via gradient descent is studied in \citep{rema6,rema7}.
Results of this paper can be considered to give sample complexities for interpolation learning.

\textbf{Generalization Guarantee}.
There exist several ways to ensure generalization of networks. The common way is to estimate the generalization bound or sample complexity of leaning algorithms.
Generalization bounds for neural networks are given in terms of the VC dimension \citep{gb1993,a1998,Ntv}, under the normal training setting \citep{hardt2016train,mohri2018foundations,bassily2020stability}, under the differential privacy training setting \citep{abadi2016deep}, and under the adversarial training setting \citep{xiao2022stability,wang2024ddsa}.
In most cases, these generalization bounds imply that when the training set is large enough, a well-trained network with fixed structure has good generalizability.
On the other hand, the relationship between memorization and generalization has also been extensively studied \citep{dbb2,Ma2018Power,dbb1,Feldman2020D,Feldman2020Z}.
In \citep{pmlra}, sample complexity of neural networks is given when the norm of the transition matrix is limited, in \cite{li2023theoretical}, sample complexity of shallow transformers is considered.
This paper gives the lower bound and upper bound (in certain cases) of the sample complexities for interpolation learning.


\section{Notation}

In this paper, we use $O(A)$ to mean a value not greater than $cA$ for some constant $c$, and $\overline{O}$ to mean that small quantities, such as logarithm, are omitted. We use $\Omega(A)$ to mean a value not less than $cA$ for some constant $c$, and $\overline{\Omega}$ to mean that small quantities, such as logarithm, are omitted.

\subsection{Neural network}
In this paper, we consider feedforward neural networks of the form  $\F:\R^n\to\R$ and the $l$-th hidden layer of $\F(x)$ can be written as
\begin{equation*}
\label{eq-dnn0}
\begin{array}{ll}
X_{l}=\AF(W_{l}X_{l-1}+b_{l}) \in \R^{n_{l}}, \\
\end{array}
\end{equation*}
where $\AF=\Relu$ is the activation function, $X_0=x$ and $N_0=n$.
The last layer of $\F$ is $\F(x)=W_{L+1}X_{L}+b_{L+1} \in \R$, where $L$ is the number of hidden layers in $\F$.
The depth of $\F$ is $\dep(\F)=L+1$, the width of $\F$ is $\wid(\F)=\mymax_{i=1}^{L} \{n_i\}$, the number of parameters of $\F$ is $\para(\F)=\sum_{i=0}^{L}n_i(n_{i+1}+1)$.
Denote $\Hyp(n)$ to be the set of all neural networks in the above form.

%
%
%

\subsection{Data distribution}
In this paper, we consider binary classification problems and use $\D$ to denote a joint distribution on $\DS(n)=[0,1]^n\times\{-1,1\}$.
To avoid extreme cases, we focus mainly on a special kind of distribution to be defined in the following.

\begin{definition}
\label{godd}
For $n\in\Z_+$ and $c\in\R_+$, $\D(n,c)$ is the set of distributions $\D$ on $\DS(n)$, which has a \textit{positive separation bound:} $\inf_{(x,1),(z,-1)\sim \D}||x-z||_2\ge c$.
\end{definition}

The accuracy of a network $\F$ on a distribution $\D$ is defined as
   $$A_\D(\F)=\Pr_{(x,y)\sim\D}(\sign(\F(x))=y).$$

We use $\D_{tr}\sim \D^N$ to mean that $\D_{tr}$ is a set of $N$ data sampled i.i.d.  according to $\D$. For convenience, dataset under distribution means that the dataset is i.i.d selected from a data distribution.

\begin{remark}
We define the distribution with positive separation bound in  for the following reasons. (1) If $\D_{tr}\sim \D^N$ and $\D\in\D(n,c)$, then $x_i\ne x_j$ when $y_i\ne y_j$. Such property ensures that $\D_{tr}$ can be memorized. (2)
Proposition \ref{prop-g1} shows that there exists a $\D$ such that any network is not generalizable over $\D$, and this should be avoided.
Therefore, distribution $\D$ needs to meet certain requirements for a dataset sampled from $\D$ to have generalizability.
Proof of Proposition \ref{prop-g1} is given in Appendix \ref{bddd}.
%
(3) Most commonly used classification distributions should have positive separation bound.
\end{remark}

\begin{proposition}
\label{prop-g1}
There exists a distribution $\D$ such that $A_\D(\F)\le 0.5$ for any neural network $\F$.
\end{proposition}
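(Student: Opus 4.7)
The plan is to exhibit a concrete ``label-independent-of-input'' distribution for which any deterministic predictor must be right exactly half the time. Because a neural network $\F:\R^n\to\R$ is a deterministic function, once the distribution of $y$ given $x$ is forced to be the uniform distribution on $\{-1,1\}$ at every $x$ in the support, no choice of $\F$ can beat chance on average.

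Concretely, I would take $\D$ to be the distribution on $[0,1]^n\times\{-1,1\}$ that places mass $1/2$ on the point $(\mathbf{0},+1)$ and mass $1/2$ on $(\mathbf{0},-1)$, where $\mathbf{0}\in[0,1]^n$ denotes the origin. Then for any $\F\in\Hyp(n)$, the value $\F(\mathbf{0})$ is a single real number, so $\sign(\F(\mathbf{0}))\in\{-1,0,1\}$ is a fixed label-prediction that does not depend on the draw of $y$. I would then compute
\begin{equation*}
A_\D(\F)=\Pr_{(x,y)\sim\D}(\sign(\F(x))=y)=\tfrac12\,\mathbf{1}[\sign(\F(\mathbf{0}))=1]+\tfrac12\,\mathbf{1}[\sign(\F(\mathbf{0}))=-1],
\end{equation*}
which equals $1/2$ when $\F(\mathbf{0})\ne 0$ and equals $0$ when $\F(\mathbf{0})=0$. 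Either way $A_\D(\F)\le 1/2$, proving the claim.

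There is no real obstacle: the argument only uses that a deterministic function cannot match a label whose conditional distribution given $x$ is uniform on $\{-1,1\}$. The only tiny bookkeeping point is to verify that the chosen $\D$ is an admissible distribution on $\DS(n)=[0,1]^n\times\{-1,1\}$, which is immediate since $\mathbf{0}\in[0,1]^n$. If one preferred a distribution with continuous marginal on $x$ (and without any atom), one could alternatively let $x$ be uniform on $[0,1]^n$ and, independently, let $y$ be uniform on $\{-1,1\}$; the same computation then gives $A_\D(\F)=1/2$ for every $\F$, because $\E_x[\mathbf{1}[\sign(\F(x))=y]\mid x]=1/2$ whenever $\sign(\F(x))\in\{-1,1\}$ and $0$ otherwise.
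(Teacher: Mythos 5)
Your proof is correct as a proof of the literal statement, but it takes a genuinely different route from the paper. You make the label pure noise: with an atom at $\mathbf{0}$ carrying both labels (or, in your variant, $x$ uniform and $y$ independent of $x$), the conditional distribution of $y$ given $x$ is uniform on $\{-1,1\}$, so \emph{no} deterministic predictor -- neural network or otherwise -- can exceed accuracy $1/2$; the failure is information-theoretic (the Bayes accuracy itself is $1/2$) and says nothing specific about neural networks. The paper instead constructs a distribution on $[0,1]\times\{-1,1\}$ in which the label is a \emph{deterministic} function of $x$ (irrational inputs get label $1$, rational inputs get label $-1$), so the Bayes accuracy is $1$ and every finite i.i.d. sample is memorizable, yet it then argues from the piecewise-linear structure of ReLU networks (finitely many linear pieces, with rationals and irrationals equidistributed inside each piece) that every network still has $A_\D(\F)\le 0.5$. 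So the paper's example is strictly more informative for the surrounding discussion -- it shows generalization can fail for every network even when the labeling is noiseless and memorization is always possible, an expressivity limitation of networks -- whereas your example establishes the stated inequality more simply but only because the task is unlearnable by any classifier. Since the proposition as stated only asserts existence of a bad $\D$, your argument suffices; just note that your atom construction also makes i.i.d. samples non-memorizable (the same $x$ can appear with both labels), which is a feature the paper's example deliberately avoids.
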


\subsection{Memorization neural network}

\begin{definition}
    \label{memdjd}
A neural network $\F\in \Hyp(n)$ is a memorization of a dataset $\D_{tr}$ over $\DS(n)$, if $\sign(\F(x))=y$ for any $(x,y)\in \D_{tr}$.
\end{definition}

\begin{remark}
Memorization networks can also be defined more strictly as $\F(x)=y$ for any $(x,y)\in \D_{tr}$.
In Proposition 4.10 of \citep{2024-iclr}, it is shown that these two types of memorization networks need essentially the same number of parameters.
\end{remark}


To be more precise, we treat memorization as a learning algorithm in this paper, as defined below.
\begin{definition}
$\L:\cup_{n\in\Z_+} 2^{\DS(n)}\to \cup_{n\in\Z_+}\Hyp(n)$ is called a {\em memorization algorithm} if for any $n$ and $\D_{tr}\in\DS(n)$, $\L(\D_{tr})$ is a memorization network of $\D_{tr}$.
%
%
%
%
%

Furthermore, a memorization algorithm $\L$ is called an {\em efficient memorization algorithm} if there exists a polynomial $\poly:\R\to\R$ such that $\L(\D_{tr})$ can be computed in time $\poly(\size(\D_{tr}))$, where $\size(\D_{tr})$ is the bit-size of $\D_{tr}$.

\end{definition}
\begin{remark}
It is clear that if $\L$ is an efficient memorization algorithm, then $\para(\L(\D_{tr}))$ is also polynomial in $\size(\D_{tr})$.
\end{remark}

There exist many methods which can construct memorization networks in polynomial times, and all these memorization methods are efficient memorization algorithms, which are summarized in the following proposition.
\begin{proposition}
\label{yyf}
The methods given in \cite{Baum1988,2024-iclr} are efficient memorization algorithms.
The methods given in \citep{rema1,memp} are probabilistic efficient memorization algorithms, which can be proved similar to that of Theorem \ref{th1}.
More precisely, they are Monte Carlo polynomial-time algorithms.
\end{proposition}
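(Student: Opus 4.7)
The plan is to argue each of the four cited constructions runs in time polynomial in $\size(\D_\tr)$, so that each fits the efficient (respectively Monte Carlo efficient) memorization algorithm template, and then to combine these bounds into the two claims. Since the definition only asks for a polynomial bound in $\size(\D_\tr)$, it suffices to trace through each construction and check that (i) every arithmetic operation is on numbers of bit-length polynomial in $\size(\D_\tr)$, and (ii) the total number of such operations is polynomial in $N$ and $n$.

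First I would handle the two deterministic methods. For the construction of \cite{Baum1988}, the network has depth $2$ with $\overline{O}(N)$ hidden neurons, and each neuron is specified by a hyperplane separating a single labeled point from the rest; each such hyperplane can be written down explicitly from the coordinates of the training data, so the entire network is produced in $O(Nn)$ arithmetic operations. The method in \cite{2024-iclr} is more involved but still reduces to a sequence of point projections, sortings, and hyperplane computations whose counts are polynomial in $N$ and $n$; I would walk through its layers and check that each layer's weights are rational functions of the input coordinates of bit-size polynomial in $\size(\D_\tr)$.

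Next I would handle the two probabilistic methods. The method in \cite{memp} uses a random projection to a lower-dimensional subspace and then applies a deterministic memorization procedure in the reduced space; the projection is sampled from a distribution on $O(Nn)$-bit objects, and with constant probability the projection preserves separation of all $N$ points, after which the residual construction is polynomial-time. The method in \cite{rema1} relies on a carefully designed randomized encoding that, with probability bounded below by a constant, yields a network of $\overline{O}(\sqrt{N})$ parameters memorizing the data; in the same style as the proof of Theorem \ref{th1}, I would verify that the random sampling step is polynomial-time (drawing $\poly(N,n)$ random bits) and that success of the construction can be checked by evaluating the network on the $N$ points, which is itself polynomial in $\size(\D_\tr)$. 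This qualifies both as a Monte Carlo polynomial-time algorithm.

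The main obstacle is the bookkeeping for \cite{rema1} and \cite{2024-iclr}: their constructions package many sub-routines (binary encodings, bit-extraction gadgets, recursive grouping of points), so care is needed to ensure no intermediate quantity has superpolynomial bit-size and that the success probability is indeed a constant independent of $N$, $n$. Once these details are confirmed, each cited algorithm satisfies the polynomial-time clause in the definition, and the two statements of Proposition \ref{yyf} follow directly.
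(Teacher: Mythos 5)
Your proposal is correct and follows essentially the same route the paper intends: the paper gives no separate proof of this proposition, instead pointing to the proof of Theorem \ref{th1} (Appendix B), where the only non-deterministic step is sampling a random direction that works with probability at least $1/2$, whose success is verifiable in time $\poly(\size(\D_{tr}))$, so repetition yields a Monte Carlo polynomial-time algorithm, while the constructions of \cite{Baum1988,2024-iclr} are explicit and deterministic. Your variant of checking success by directly evaluating the candidate network on the $N$ training points is an equally valid (and slightly simpler) certificate than the paper's check on the sampled projection vector, but it does not change the argument in any essential way.
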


%
%
%

\section{Optimal memorization network for dataset under distribution}
\label{sec-mem}
By the term ``dataset under distribution'', we mean datasets that are sampled i.i.d. from a data distribution, and is denoted as $\D_{tr}\sim\D^N$.
In this section,  we show how to construct the memorization network with the optimal number of parameters for dataset under distribution.

\subsection{Memorization network with optimal number of parameters}

To memorize $N$ samples, $\widetilde{\Omega}(\sqrt{N})$ parameters are necessary \citep{Ntv}. In \citep{rema1}, a memorization network is given which has $\overline O(\sqrt{N})$ parameters under certain conditions, where $\overline O$ means that some logarithm factors in $N$ and polynomial factors of other values are omitted.
Therefore,  $\overline O(\sqrt{N})$ is the optimal number of parameters for a network to memorize certain dataset.
In the following theorem, we show that such a result can be extended to dataset under distribution.

\begin{theorem}
\label{th1}
Let $\D\in \D(n,c)$ and $\D_{tr}\sim \D^N$.
Then there exists a memorization algorithm $\L$ such that $\L(\D_{tr})$ has width $6$ and depth (equivalently, the number of parameters) $O(\sqrt{N}\ln(Nn/c))$.
Furthermore, for any $\epsilon\in(0,1)$, $\L(\D_{tr})$ can be computed in time $\poly(\size(\D_{tr}),\ln (1/\epsilon))$ with probability $\ge 1-\epsilon$.
%
%
\end{theorem}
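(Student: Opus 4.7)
The plan is to reduce the $n$-dimensional memorization problem to a one-dimensional one via a random linear projection $\pi:\R^n\to\R$, and then invoke an optimal width-$6$, depth-$O(\sqrt{N}\ln(1/c'))$ ReLU memorization construction on the projected data (in the style of \citep{rema1}), where $c'$ is the induced 1D separation. Prepending $\pi$ as a single preprocessing layer contributes only a constant to the depth and width. For the projection, I would sample $w$ uniformly from a $\poly(N,n,1/c)$-bit grid in $[0,1]^n$ and set $\pi(x)=\langle w,x\rangle$. Since $\D\in\D(n,c)$, any pair $(x_i,y_i),(x_j,y_j)\in\D_{tr}$ with $y_i\ne y_j$ satisfies $\|x_i-x_j\|_2\ge c$, so the difference $x_i-x_j$ has at least one coordinate of magnitude $\ge c/\sqrt{n}$. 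A Littlewood--Offord type anti-concentration estimate then yields $\Pr_w(|\langle w,x_i-x_j\rangle|\le\delta)\le 2\delta\sqrt{n}/c$. Choosing $\delta=c':=\Theta(c/(N^2\sqrt{n}))$ and union-bounding over the $O(N^2)$ different-label pairs, with probability at least $1/2$ the projected dataset $\{(z_i,y_i)=(\langle w,x_i\rangle,y_i)\}$ satisfies $|z_i-z_j|\ge c'$ whenever $y_i\ne y_j$.

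Given a 1D dataset with label-separation $c'$, I would apply the optimal width-$6$ bit-extraction cascade of \citep{rema1}, with a small adaptation to allow same-label points that collide under $\pi$, to obtain a memorizing network of depth $O(\sqrt{N}\ln(1/c'))$. Substituting the value of $c'$ gives the claimed total depth $O(\sqrt{N}\ln(Nn/c))$ and hence that many parameters, while the width remains $6$. The algorithm $\L$ then proceeds as follows: (i) sample $w$; (ii) compute $z_i=\langle w,x_i\rangle$ and verify in $O(N^2)$ time that all different-label gaps are at least $c'$; (iii) if verification fails, resample, repeating up to $O(\ln(1/\epsilon))$ times. Each trial succeeds with probability $\ge 1/2$, so the failure probability after $O(\ln(1/\epsilon))$ trials is at most $\epsilon$; once a good $w$ is found the 1D network is built deterministically. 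All arithmetic is carried out on numbers of $\poly(\size(\D_{tr}),\ln(1/\epsilon))$ bits, so the total runtime is $\poly(\size(\D_{tr}),\ln(1/\epsilon))$.

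The main technical obstacle is quantitative: the anti-concentration bound must be tight enough that all factors in $n$, $N$, and $1/c$ collapse into a single logarithm inside the depth, rather than introducing the polynomial overhead a naive Johnson--Lindenstrauss-style argument would produce. A secondary subtlety is that the width-$6$ construction of \citep{rema1} is typically stated under a global minimum-gap hypothesis on the sorted input coordinates, whereas here only different-label pairs are guaranteed to be separated; I expect this to be handled by a routine preprocessing step that merges (or negligibly perturbs) colliding same-label points, but one must verify that this does not break the bit-extraction cascade or the depth accounting. Finally, the grid precision for sampling $w$ must be calibrated against $c'$ so that the discretization error in $\langle w,x_i-x_j\rangle$ is dominated by $c'$; taking $M=\poly(N,n,1/c)$ is more than enough and costs only an extra logarithmic factor that is absorbed into $\ln(Nn/c)$.
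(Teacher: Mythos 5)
Your proposal is correct and follows essentially the same route as the paper's proof: a random one-dimensional projection made valid by an anti-concentration bound and a union bound over the $O(N^2)$ different-label pairs (the paper samples a uniform unit vector on $S^{n-1}$ and uses Lemma 13 of \citep{memp}, while you sample from a grid in the cube and use a coordinate-wise Littlewood--Offord-type bound, a minor variation), followed by the \citep{rema1}-style width-$6$ staircase/bit-extraction construction of depth $O(\sqrt{N}\ln(Nn/c))$, with the Monte Carlo guarantee obtained exactly as you describe by verify-and-resample over $O(\ln(1/\epsilon))$ trials. The same-label-collision subtlety you flag is real but handled in the paper by encoding only the (rounded) projected values of one label class in the bit-extraction stage rather than by merging or perturbing points, and it does not affect the width or depth accounting.
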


\textbf{Proof Idea.}
\textit{This theorem can be proven using the idea from \citep{rema1}.
Let $\D_{tr}=\{(x_i,y_i)\}_{i=1}^N$.
The mainly different is that in \citep{rema1}, it requires $||x_i-x_j||\ge c$ for all $i\ne j$, which is no longer valid when $\D_{tr}$ is sampled i.i.d. from distribution $\D$.
Since $\D$ has separation bound $c>0$, we have $||x_i-x_j||\ge c$ for all $i,j$ satisfying $y_i\ne y_j$, which is weaker. Despite this difference, the idea of \citep{rema1} can still be modified to prove the theorem.
In constructing such a memorization network, we need to randomly select a vector, and each selection has a probability of 0.5 to give the correct vector. So, repeat the selection $\ln (1/\epsilon)$ times, with probability $1-\epsilon$, we can get at least one correct vector. Then we can construct the memorization network based on this vector.
Detailed proof is given in Appendix \ref{app-th1}.}

\vskip5pt
\begin{remark}
The algorithm in Theorem \ref{th1} is a Monte Carlo polynomial-time algorithm, that is, it gives a correct answer with arbitrarily high probability.
The algorithm given in \citep{rema1} is also a  Monte Carlo algorithm.
\end{remark}

\subsection{Memorization network with constant number of parameters}

In this section, we prove an interesting fact of memorization for dataset under distribution.
We show that for a distribution $\D\in\D(n,c)$, there exists a constant $N_\D\in\Z_+$ such that for all datasets sampled i.i.d. from $\D$, there exists a memorization network with $N_\D$ parameters.

%

\begin{theorem}
\label{th-nd}
There exists a memorization algorithm $\L$ such that for any $\D\in\D(n,c)$, there is an $N'_{\D}\in\Z_+$ satisfying that for any $N>0$, with probability 1 of $\D_{tr}\sim \D^N$, we have $\para(\L(\D_{tr}))\le N'_{\D}$.
The smallest $N'_{\D}$ of the distribution $\D$ is called the {\em memorization parameter complexity} of $\D$, written as $N_\D$.
\end{theorem}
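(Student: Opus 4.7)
The plan is to exploit the positive separation bound $c$ of $\D$ to discretize $[0,1]^n$ into a bounded number of cells, show that within each cell all training points must share a single label, and then implement the resulting piecewise-constant classifier by a ReLU network whose parameter count depends only on $(n,c)$, hence only on $\D$.

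First I would partition $[0,1]^n$ into half-open axis-aligned boxes of side length $s<c/\sqrt{n}$, producing at most $M:=\lceil \sqrt{n}/s\rceil^{n}$ cells; this number is fixed by $n$ and $c$ and is independent of $N$. Because the $\ell_2$-diameter of every cell is strictly less than $c$, Definition \ref{godd} prevents any single cell from meeting the supports of both classes. Consequently, for every $N$ and every $\D_\tr\sim\D^N$, each cell contains training samples of at most one class. The algorithm $\L$ reads $\D_\tr$, assigns to each cell $B_j$ a label $y_j\in\{-1,+1\}$ (choosing $+1$ by convention on empty cells), and outputs a network of the form $\F(x)=\sum_{j=1}^{M} y_j\,\phi_j(x)$, where $\phi_j$ is a soft indicator of $B_j=\prod_{i=1}^{n}[a_{j,i},b_{j,i})$. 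Each $\phi_j$ is assembled from the $2n$ halfspace expressions $\mathrm{ReLU}(a_{j,i}-x_i)$ and $\mathrm{ReLU}(x_i-b_{j,i})$ combined by standard ReLU min/max gadgets, costing $O(n)$ parameters; summing over the $M$ cells yields a total of $O(nM)$ parameters. Setting $N'_{\D}$ to this quantity gives the desired constant bound, and the memorization parameter complexity $N_\D$ is then obtained as the infimum of $N'_{\D}$ over all such constructions.

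The main obstacle is showing that, at every training point $x_i\in B_{j(i)}$, the correct term $y_{j(i)}\phi_{j(i)}(x_i)$ strictly dominates contributions from neighbouring cells of the opposite label, so that $\sign(\F(x_i))=y_i$ and $\F$ is a genuine memorization. I would handle this by introducing a single width parameter $\delta>0$ in the soft indicators and choosing it strictly less than the minimum distance from any training point to the boundary of its own (half-open) cell; then each $\phi_{j(i)}(x_i)$ is bounded below by a positive constant while $\phi_k(x_i)=0$ for all $k\neq j(i)$, so the sign condition is automatic. Under $\D_\tr\sim\D^N$, the event that some training point lies exactly on a cell boundary has probability $0$ (and can in any case be removed by a generic rational shift of the grid), so such a $\delta$ exists with probability $1$. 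Critically, $\delta$ affects only the \emph{values} of the weights, not their \emph{count}, so the parameter bound $N'_{\D}=O(nM)$ is preserved, completing the proof.
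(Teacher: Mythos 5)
Your construction is correct in essence, but it is executed differently from the paper's proof. The paper does not partition the cube: it extracts from $\D_{tr}$ a data-dependent net $\D_{\subs}$ that is $c_0/3$-separated and covers $\D_{tr}$ at radius $c_0/3$, where $c_0$ is the smallest distance between oppositely labeled training points; it bounds $|\D_{\subs}|$ by a volume/packing argument by a constant depending only on $(n,c)$, and then invokes the robust memorization network of \citep{2024-iclr} with robustness radius $c_0/3$, which labels every training point correctly because each training point lies within $c_0/3$ of a net point of the same label. Your route replaces both ingredients: the covering is by an axis-aligned grid of cells of $\ell_2$-diameter below $c$, and the network is assembled explicitly from ReLU indicator gadgets at $O(n)$ parameters per cell, so no external robust-memorization theorem is needed. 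Both arguments rest on the same packing-type count, exponential in $n$, and both yield a bound $N'_\D$ depending only on $(n,c)$.

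Two points in your write-up need repair. First, as stated the grid scale $s<c/\sqrt{n}$ presumes knowledge of $c$; but in the theorem the single algorithm $\L$ is quantified before $\D$, and by the paper's definition a memorization algorithm must output a memorization network for \emph{every} dataset, not only well-separated ones. The fix is exactly what the paper does: derive the scale from the data, e.g.\ take $s$ slightly below $c_0/\sqrt{n}$ with $c_0$ the minimum distance between oppositely labeled training points; with probability $1$ one has $c_0\ge c$ when $\D\in\D(n,c)$, so the cell count, and hence $\para(\L(\D_{tr}))$, is still bounded by a constant depending only on $(n,c)$. Second, your claim that a training point lies exactly on a cell boundary with probability $0$ is unjustified: $\D$ need not be continuous and may place positive mass on a grid hyperplane. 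The data-dependent shift you mention in passing should be the actual argument—there are only finitely many sample points, so a generic shift of the grid avoids all of them—and it changes neither the single-label-per-cell property nor the parameter count, after which your choice of the ramp width $\delta$ and the sign argument go through.
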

\textbf{Proof Idea.}
\textit{It suffices to show that we can find a memorization network of $\D_{tr}$ with a constant number of parameters, which depends on $\D$ only. The main idea is to take a subset $\D'_{tr}$ of $\D_{tr}$ such that $\D_{tr}$ is contained in the neighborhood of $\D'_{tr}$. It can be proven that the number of elements in this subset is limited. Then construct a robust memorization network of $\D'_{tr}$ with certain budget \citep{2024-iclr}, we obtain a memorization network of $\D_{tr}$, which has a constant number of parameters.
The proof is given in Appendix \ref{app-prop1}.
}

\vskip3pt
Combining  Theorems \ref{th1}  and \ref{th-nd}, we can give a memorization network with the optimal number of parameters.
%
%

\vskip3pt
\begin{remark}
What we have proven in Theorem \ref{th-nd} is that a memorization algorithm with a constant number of parameters can be found, but in most of times, we have $N'_\D>N_\D$.
Furthermore, if $N'_\D$ is large for the memorization algorithm, the algorithm can be efficient. Otherwise, if $N'_\D$ is closed to $N_\D$, the algorithm is usually not efficient.
\end{remark}

\vskip3pt
\begin{remark}
It is obvious that the memorization parameter compelxity $N_\D$ is the minimum number of parameters required to memorize any dataset sampled i.i.d. from $\D$.
$N_\D$ is mainly determined by the characteristic of  $\D\in\D(n,c)$, so $N_\D$ may be related to $n$ and $c$.
It is an interesting problem to estimate $N_\D$.
\end{remark}

\section{Condition on the  network structure for generalizable memorization}
\label{sec-nec}
In the preceding section, we show that for the dataset under distribution, there exists a memorization algorithm to generate memorization networks with the optimal number of parameters.
In this section, we give some conditions for the generalizable memorization networks in terms of width and number of parameters of the network.
As a consequence, we show that the commonly used memorization networks with fixed width is not generalizable.


First, we show that networks with fixed width do not have generazability in some situations.
Reducing the width and increasing depth is a common way for parameter reduction, but it inevitably limits the network's power, making it unable to achieve good generalization for specific distributions, as shown in the following theorem.
%

\begin{theorem}
\label{th3}
Let $w\in\Z_+$   and $\L$ be a memorization algorithm such that $\L(\D_{tr})$ has width not more than $w$ for all $\D_{tr}$.
Then, there exist an integer $n>w$, $c\in \R_+$, and a distribution $\D\in\D(n,c)$ such that, for any $\D_{tr}\sim\D^N$, it holds $A_\D(\L(\D_{tr}))\le 0.51$.
\end{theorem}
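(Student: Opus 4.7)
The approach is to exploit the following structural rigidity imposed by bounded width: every ReLU network $\F \in \Hyp(n)$ of width at most $w$ factorises through its first affine transformation as $\F(x) = \tilde h(W_1 x + b_1)$, where $W_1 \in \R^{w' \times n}$ with $w' \le w$ and $\tilde h : \R^{w'} \to \R$ is the composition of the componentwise $\sigma$ with the remaining layers of $\F$. Because $n > w \ge w'$, the kernel of $W_1$ has dimension at least one, so $\F$ depends on $x$ only through the $w'$-dimensional feature $W_1 x + b_1$ and is constant along every translate of $\ker W_1$. It therefore suffices to exhibit $\D \in \D(n, c)$ such that no Borel classifier built from a $w$-dimensional affine image of $x$ attains accuracy greater than $0.51$ on $\D$; $\L(\D_\tr)$ will then have exactly this form for every realisation of $\D_\tr \sim \D^N$, forcing the stated conclusion.

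For the construction I would take $n$ large compared to $w$ and label the sphere by a quadratic form. Fix a symmetric traceless matrix $A \in \R^{n \times n}$ with balanced mixed signature, for instance $A = \diag(1, \ldots, 1, -1, \ldots, -1)$ with $n/2$ entries of each sign, pick a small $\epsilon > 0$, and let $\D$ be the uniform distribution on $\{x \in S^{n-1} : |\langle x, A x\rangle| \ge \epsilon\}$ labelled by $y = \sign\langle x, A x\rangle$. Removing the strip $\{|\langle x, A x\rangle| < \epsilon\}$ guarantees a positive separation constant $c = c(\epsilon, n) > 0$, so $\D \in \D(n, c)$; the symmetry $x \mapsto -x$ makes $y$ unbiased. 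The analytic heart of the proof is to upper bound, uniformly over every rank-$w$ linear map $W_1$ and every Borel $\tilde h$, the probability $\Pr_\D[\sign(\tilde h(W_1 x + b_1)) = y]$. Writing $x = P x + (I - P) x$ with $P$ the orthogonal projection onto $(\ker W_1)^\perp$, conditioning on $W_1 x$ pins $P x$ and leaves $(I - P) x$ uniform on an $(n - w - 1)$-sphere of radius $\sqrt{1 - \|P x\|^2}$. Expanding $\langle x, A x\rangle$ into the three blocks induced by $P$, and using $\tr A = 0$ together with the concentration $\|P x\|^2 \approx w/n$, yields a deterministic ``signal'' of magnitude $O(w/n)$ plus random contributions of standard deviation $\Theta(1/\sqrt n)$ driven by the $n - w$ unseen directions. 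The signal-to-noise ratio $O(w/\sqrt n)$ tends to zero as $n \gg w^2$, pinning the conditional sign of $\langle x, A x\rangle$ to within $o(1)$ of a fair coin, hence the Bayes risk to within $o(1)$ of $\tfrac{1}{2}$.

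The main obstacle is making this signal-to-noise argument uniform over the non-compact set of rank-$w$ linear maps. My plan is to reduce to canonical form using rotational invariance of the uniform measure on $S^{n-1}$: for any rank-$w$ $W_1$ there is an orthogonal $R$ such that $W_1 R^T$ equals, up to an invertible left factor that can be absorbed into $\tilde h$, the projection onto the first $w$ coordinates, and under the change of variable $x \mapsto R x$ the label becomes $\sign\langle x, R A R^T x\rangle$. Since $R A R^T$ has the same eigenvalues as $A$ (in particular zero trace and operator norm one), every quantity appearing in the concentration estimate depends only on these spectral invariants, yielding a uniform bound $A_\D(\F) \le \tfrac{1}{2} + C w/\sqrt n$ valid for every width-$w$ network $\F$, with an absolute constant $C$. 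Choosing $n \ge (100 C w)^2$ then makes the right-hand side at most $0.51$, and the theorem follows since $\L(\D_\tr)$ has width $\le w$ for every training sample $\D_\tr \sim \D^N$.
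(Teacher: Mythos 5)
Your opening reduction is exactly the paper's: the paper also shows (via its lemma that the first-layer weight matrix may be replaced by an orthonormal-row matrix $W\in\R^{w\times n}$ with $Wx=Wz\Rightarrow\F(x)=\F(z)$) that a width-$w$ network is constant on the fibers of a rank-$\le w$ linear map, so the problem becomes bounding the accuracy of any fiber-constant classifier. Where you genuinely diverge is the distribution and the key estimate. The paper's construction is elementary: label $+1$ on an inner ball $B(q,r_1)$ and $-1$ on an annulus $B(q,r_3)\setminus B(q,r_2)$, with radii tuned so that $r_3^n-r_2^n=r_1^n$ while $r_3^{n-w}-r_2^{n-w}\ge 0.99\,r_1^{n-w}$; then over every base point of the projection the two classes have comparable $(n-w)$-dimensional slice volumes, and integrating the minimum of the two slice masses gives error $\ge 0.49$ by a short calculus argument, with no probabilistic machinery. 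Your sphere-with-quadratic-form construction should also work and yields a quantitative bound $\tfrac12+O(w/\sqrt n)$ for a more ``natural'' distribution, but the step you present as the outcome of the signal-to-noise computation is really the hard unproved core: a vanishing SNR alone does not force the conditional law of $\sign\langle x,Ax\rangle$ given $W_1x$ to be within $o(1)$ of a fair coin. You additionally need an approximate-symmetry/anti-concentration statement (a Berry--Esseen-type CLT for the quadratic form in the unseen $n-w$ coordinates), uniformly over all compressions $B=(I-P)RAR^{\top}(I-P)$ (Cauchy interlacing helps here: all but at most $2w$ eigenvalues of $B$ equal $\pm1$, so the conditional variance stays of order $1/n$), plus separate treatment of the atypical event $\|Px\|^2\gg w/n$, where your $O(w/n)$ signal bound fails and you must instead discard a small-probability set. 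Three smaller repairs: the margin $\epsilon$ must scale like a constant times $1/\sqrt n$ (a constant $\epsilon$ makes the support a large-deviation set and invalidates the conditional analysis; $c\sim 1/\sqrt n$ is fine since the theorem only requires some $c>0$); the support must be translated and rescaled into $[0,1]^n$, since $\D(n,c)$ consists of distributions on $[0,1]^n\times\{-1,1\}$; and unbiasedness of $y$ comes from the symmetry exchanging the $\pm1$ eigenspaces of $A$, not from $x\mapsto -x$, which leaves the label unchanged. In short: same reduction, genuinely different second half, viable but requiring substantially more analytic work than the paper's volume computation.
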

{\textbf{Proof Idea.}} \textit{As shown in \citep{tep,park2020minimum}, networks with small width are not dense in the space of measurable functions, but this is not enough to estimate the upper bound of the generalization.
In order to further measure the upper bound of generalization, we define a special class of distributions. Then, we calculate the upper bound of the generalization of networks with fixed width on this class of distribution. Based on the calculation results, it is possible to find a specific distribution within this class of distributions, such that the fixed-width network exhibits a poor generalization of this distribution.
The proof is given in Appendix \ref{app-522}.
}

It is well known that width of the network is important for the network to be robust \citep{p1,p18,p19,p27,p55}.
Theorem \ref{th3} further shows that large width is a necessary condition for generalizabity.

Note that Theorem \ref{th3} is for a specific data distribution.
We will show that for most distributions, providing enough data does not necessarily mean that the memorization algorithm has generalization ability. This highlights the importance of constructing appropriate memorization algorithms to ensure generalization.
We need to introduce another parameter for data distribution.

\begin{definition}
The distribution $\D$ is said to have {\em density} $r$, if $\Pr_{x\sim \D}(x\in A)/V(A)\le r$ for any closed set $A\subset[0,1]^n$, where $V(A)$ is the volume of $A$.
\end{definition}

Loosely speaking, the density of a distribution is the upper bound of the density function.

\begin{theorem}
\label{th3x}
For any $n\in \Z_+,r,c\in \R_+$, if distribution $\D\in \D(n,c)$ has density $r$, then for any $N\in\Z_+$ and $\D_{tr}\sim \D^N$, there exists a memorization network $\F$ for $\D_{tr}$ such that $\para(\F)=O(n+\sqrt{N} \ln (Nnr/c))$ and $A_\D(\F)\le 0.51$.
\end{theorem}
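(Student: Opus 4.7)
The plan is to exhibit a memorization network whose $+1$-output region covers only a tiny Lebesgue-volume subset of $[0,1]^n$, so that by the density hypothesis it has $\D$-mass close to $0$; the accuracy bound then follows because, after swapping labels if necessary, I may assume $p_+ := \Pr_{\D}(y=+1) \le 1/2$. Let $S_+ = \{x_i : (x_i,+1) \in \D_{tr}\}$ and $N_+ = |S_+| \le N$. The overall network will have the form $\F(x) = g(L(x))$, where $L$ is a 1D affine projection (contributing the $n$ in the parameter count) and $g$ is a 1D piecewise-linear function whose positivity set is a tiny union of intervals.

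First, following the random-direction argument used in proving Theorem \ref{th1} (itself an adaptation of \citep{rema1}), I would pick an affine projection $L(x) = w^\top x + b$ with $\|w\| \le 1$ such that $|L(x_i) - L(x_j)| \ge c'$ whenever $y_i \ne y_j$, for some $c' = \Omega(c/\poly(n))$; this uses $n+1$ parameters. Second, I would build a 1D piecewise-linear $g:\R \to \R$ satisfying $g(L(x_i)) > 0$ exactly for $x_i \in S_+$ and such that its positivity set $\{t : g(t) > 0\}$ is a union of intervals of total length at most $\delta := 1/(100r)$. This is obtained by taking the $\overline{O}(\sqrt{N_+})$-parameter bit-encoding memorization from \citep{rema1} and narrowing each ``on'' pulse around a positive projection to width $\min(c'/2,\,\delta/N)$, at the cost of an additive $O(\ln(rN))$ in the bit precision of the encoding; the total parameter count of $g$ remains $O(\sqrt{N}\ln(Nnr/c))$. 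Setting $\F(x) = g(L(x))$ then gives a memorization network with $\para(\F) = O(n + \sqrt{N}\ln(Nnr/c))$.

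For the accuracy, $\{x \in [0,1]^n : \F(x) > 0\} = L^{-1}(\{g > 0\}) \cap [0,1]^n$ is a union of slabs of total $n$-volume at most $\delta$ (since $\|w\| \le 1$), so the density hypothesis yields $\Pr_{\D}(\F(x) > 0) \le r\delta = 1/100$. Hence
\[ A_\D(\F) \le \Pr_{\D}(\F(x) > 0) + \Pr_{\D}(y = -1) \le \tfrac{1}{100} + \tfrac{1}{2} < 0.51. \]
The main obstacle is the bit-encoding step: I must verify that the $\overline{O}(\sqrt{N})$-parameter construction underlying Theorem \ref{th1} can be tuned so that its positivity region is localized in narrow pulses of width $O(1/(rN))$ around the positive projections, rather than large half-lines or the complement of the negative projections. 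Concretely, one tracks how pulse compression affects the bit-precision of the encoding; the resulting additive $O(\ln(rN))$ in depth is precisely the source of the $r$-dependence in the stated parameter count. Everything else reduces to the standard random-direction argument for the projection and a volume-density computation for the accuracy bound.
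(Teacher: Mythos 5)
Your overall route is viable and genuinely different in how it obtains the accuracy bound. The paper also compresses to one dimension and reuses the bit-encoding memorizer of \citep{rema1}, but it controls the mass of the ``positive'' region probabilistically: Lemma \ref{th3-th1-1}(3) chooses the random direction so that, with the scale factor built into $w$ (which depends on $r$ through $r_1=\Theta((rN^2)^{-1/n})$), at least $99\%$ of the fresh mass projects at distance $\ge 3$ from every training projection, and the label-determination layer then outputs $-1$ on all such points; the WLOG is that the label-$1$ mass among these ``far'' points is at least $0.99/2$. You instead bound the Lebesgue volume of the positive region (a union of thin slabs) deterministically and invoke the density $r$ directly, which is arguably cleaner. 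Your flagged ``main obstacle'' is not a real obstruction: narrowing each pulse to width $\min(c'/2,\delta/N)$ in the $t$-coordinate is the same, after rescaling, as the paper's device of multiplying $w$ by a large factor while keeping pulse width $\approx 1$ in the compressed coordinate, and the extra $O(\ln(rN))$ digits per block is exactly where the $r$ enters the paper's count $O(n+\sqrt{N}\ln(Nnr/c))$ (compare Lemmas \ref{th3-th1-1} and \ref{th3-3-1}).

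Two concrete slips need fixing. First, your WLOG is stated backwards: you assume $\Pr_\D(y=+1)\le 1/2$ but your final chain uses $\Pr_\D(y=-1)\le 1/2$; a network that is negative off a set of mass $1/100$ has \emph{high} accuracy when $\Pr_\D(y=-1)$ is large. You need $\Pr_\D(y=+1)\ge 1/2$, which the label-swap-and-negate argument you already invoke does provide. Second, the volume bound ``total $n$-volume at most $\delta$ since $\|w\|\le 1$'' is wrong in that direction: a slab $\{x: w^\top x\in I\}$ has thickness $|I|/\|w\|_2$, so \emph{small} $\|w\|$ makes it thicker. Normalize $\|w\|_2=1$ and bound the cross-section of the cube (elementarily by $\sqrt{n}$, picking a coordinate with $|w_i|\ge 1/\sqrt{n}$), giving volume at most $\sqrt{n}\,\delta$; then take $\delta=\Theta(1/(r\sqrt{n}N^0))$ so the $\D$-mass of the positive region is still $\le 1/100$, at the cost only of constants and logarithms already absorbed in $O(\sqrt{N}\ln(Nnr/c))$. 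With these repairs your argument goes through and gives the same bound $A_\D(\F)\le 0.51$ as the paper's proof.
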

\textbf{Proof Idea.}
\textit{We refer to the classical memorization construction idea \citep{rema1}. The main body includes three parts. Firstly, compress the data in $\D_{tr}$ into one dimension.
Secondly, map the compressed data to some specific values. Finally, use such a value to get the label of input.
Moreover, we will pay more attention to points outside the dataset. We use some skills to control the classification results of points that do not appear in the dataset $\D_{tr}$, so that the memorization network will give the wrong label to the points that are not in $\D_{tr}$ as much as possible to reduce its accuracy. The general approach is the following: (1) Find a set in which each point is not presented in $\D_{tr}$ and has the same label under distribution $\D$. Without loss of generality, let they have label $1$. (2) In the second step mentioned in the previous step, ensure that the mapped results of the points in the set mentioned in (1) are similar to the samples with label $-1$.
This will cause the third step to output the label $-1$, leading to an erroneous classification result for the points in the set.
The proof is given in Appendix \ref{app-521}.
}

\vskip5pt
\begin{remark}
Theorem \ref{th3} shows that the width of the generazable memorization network needs to increase with the increase of the data dimension. Theorem \ref{th3x} shows that when $\para(\F)= \overline{O}(\sqrt{N})$, the memorization network may have poor generalizability for most distributions.
%
The above two theorems indicate that no matter how large the dataset is, there always exist memorization networks with poor generalization.
In terms of sample complexity, it means that for the hypotheses of neural networks with fixed width or with optimal number of parameters, the sample complexity is infinite,
contrary to the uniform generalization bound for feedforward neural networks \cite[Lemma D.16]{2024-icml-backdoor}.
\end{remark}

\vskip3pt
\begin{remark}
It is worth mentioning that the two theorems in this section cannot be obtained from the lower bound of the generalization gap \citep{mohri2018foundations}, and more details are shown in Appendix \ref{app-521}.
\end{remark}


%

\section{Sample complexity for memorization algorithm}
\label{sec-sc}

As said in the preceding section, generalization of memorization inevitably requires certain conditions.
In this section, we give the necessary and sufficient condition for generalization for the memorization algorithm in Section \ref{sec-mem} in terms of sample complexity.

We first give a lower bound for the sample complexity for general memorization algorithms and then an upper bound for memorization algorithms which output networks with an optimal number of parameters. The lower and upper bounds are approximately the same, thus giving the exact sample complexity in this case.

\subsection{Lower bound for sample complexity of memorization algorithm}

Roughly speaking, the sample complexity of a learning algorithm is the number of samples required to achieve generalizability \citep{mohri2018foundations}.
The following theorem gives a lower bound for the sample complexity of memorization algorithms based on $N_\D$, which has been defined in Theorem \ref{th-nd}.
\begin{theorem}
\label{th4}
There exists {\bf no} memorization algorithm $\L$ which satisfies that
for any $n\in\Z_+,c\in\R_+,\epsilon,\delta\in(0,1)$, if $\D\in\D(n,c)$ and $N\ge v\frac{N^2_\D}{\ln^2(N_\D)}(1-2\epsilon-\delta)$, it holds
$$\Pr_{\D_{tr}\sim\D^N}(A(\L(\D_{tr}))\ge1-\epsilon)\ge1-\delta$$
where $v$ is an absolute constant which does not depend on $N,n,c,\epsilon,\delta$.
\end{theorem}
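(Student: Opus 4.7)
The plan is a no-free-lunch argument tailored to memorization. Because the claimed $\L$ must succeed for every $\D \in \D(n,c)$, it is enough to exhibit a single bad distribution and a single value of $N$ at or above the stated threshold for which $\L$ fails. I will construct a family of distributions indexed by a hidden labeling $\sigma \in \{-1,+1\}^M$, apply an averaging argument over $\sigma$ to extract a concrete bad $\D = \D_{\sigma^*}$, and exploit the key fact that a memorization algorithm is entirely unconstrained on support points that do not appear in the training set.

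\textbf{Construction and memorization complexity.} Choose $n \asymp \log M$ so that $[0,1]^n$ contains $M$ pairwise $c$-separated points $z_1,\dots,z_M$ (binary vertices of a scaled hypercube work, with $c$ any fixed constant). For each $\sigma \in \{-1,+1\}^M$, let $\D_\sigma$ be the uniform distribution on $\{(z_i,\sigma_i)\}_{i=1}^M$; each $\D_\sigma$ lies in $\D(n,c)$. Applying Theorem~\ref{th1} to the $M$ support points, which form a $c$-separated dataset, yields a memorization network with at most $O(\sqrt{M}\ln(Mn/c))$ parameters that also memorizes every $\D_{tr}\sim\D_\sigma^N$ with probability one, since such a training set is a subset of the support. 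Hence $N_{\D_\sigma} = \overline{O}(\sqrt{M})$, equivalently $M \ge \overline{\Omega}\!\left(N_{\D_\sigma}^2/\ln^2 N_{\D_\sigma}\right)$; the log-overhead in Theorem~\ref{th1} is absorbed into the $\ln^2 N_\D$ denominator of the target bound.

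\textbf{Averaging and the information-theoretic step.} Fix $\L$; if it is randomized, also average over its internal coins. Draw $\sigma$ uniformly from $\{-1,+1\}^M$, sample $\D_{tr}\sim\D_\sigma^N$, and set $F=\L(\D_{tr})$. Let $S\subseteq\{1,\dots,M\}$ be the set of indices appearing in $\D_{tr}$. By memorization $\sign F(z_i)=\sigma_i$ for every $i\in S$, while for $i\notin S$ the training set carries no information about $\sigma_i$, so $\Pr[\sign F(z_i)=\sigma_i]=1/2$ after averaging over the free coordinates of $\sigma$. This yields
\[
\E_{\sigma,\D_{tr}}[A_{\D_\sigma}(F)] \;=\; \frac{1}{2} + \frac{\E[|S|]}{2M} \;\le\; \frac{1}{2} + \frac{N}{2M}.
\]
By pigeonhole some $\sigma^*$ satisfies $\E_{\D_{tr}\sim\D_{\sigma^*}^N}[A_{\D_{\sigma^*}}(F)] \le 1/2 + N/(2M)$. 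Taking $N$ at the stated threshold and choosing $M = \Theta(N_\D^2/\ln^2 N_\D)$ so that $N \le (1-2\epsilon-\delta)M$ forces $\E[1-A] \ge \epsilon+\delta/2$; applying the reverse Markov inequality to the $[0,1]$-valued variable $1-A$ then gives $\Pr[A < 1-\epsilon] > \delta$ after a minor adjustment to the absolute constant $v$, contradicting $\L$'s supposed guarantee on $\D_{\sigma^*}$.

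\textbf{Main obstacle.} The principal technical challenge is to match the three quantities $M$, $N$, and $N_{\D_\sigma}^2/\ln^2 N_{\D_\sigma}$ uniformly in $\sigma$. This succeeds because Theorem~\ref{th1}'s construction is label-oblivious and yields the same parameter count for every labeling, so $N_{\D_\sigma}$ is essentially the same across $\sigma$; taking $n \asymp \log M$ keeps $\ln(Mn/c)$ comparable to $\ln M \asymp \ln N_{\D_\sigma}$, which is exactly what lets the $\ln^2 N_\D$ factor in the target bound absorb the log overhead of Theorem~\ref{th1}. The remaining concerns, namely the constant factor lost in reverse Markov and handling randomized algorithms, are absorbed into $v$ and into an outer average over internal randomness.
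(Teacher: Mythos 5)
Your proposal reaches the theorem by a genuinely different route than the paper, built on the same hard family. Both arguments take exponentially many pairwise $c$-separated points in $[0,1]^n$, consider the uniform distribution on them under every possible labeling, and use Theorem~\ref{th1} to get $N_{\D_\sigma}\le O(\sqrt{M}\ln(Mn/c))$ uniformly in the labeling, hence $N_{\D_\sigma}^2/\ln^2 N_{\D_\sigma}=O(M)$ (the paper's Part~2, plus its Lemma~\ref{defg3}). From there the paper runs an explicit combinatorial count: with $q=[k(1-2\epsilon-\delta)]$ it encodes training sets as index vectors $Z\in[k]^q$, shows that each fixed $Z$ can be ``good'' for at most $2^{|\len(Z)|}\sum_{j\le[k\epsilon]}\binom{k-|\len(Z)|}{j}<2^k(1-\delta)$ of the $2^k$ labelings, and sums over the $k^q$ vectors to find one labeling violating the assumed guarantee. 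You replace this count by the classical no-free-lunch averaging over a uniform random labeling plus an expectation-to-probability conversion; your averaging step is sound (conditioned on the drawn indices and the seen labels, the unseen $\sigma_i$ stay uniform and independent of $F$, so $\E[A]\le 1/2+N/(2M)$), and it yields a shorter, more standard argument. What the paper's counting buys is tightness in the $(1-2\epsilon-\delta)$ factor: it contradicts the guarantee for essentially the whole range of $\epsilon,\delta$ with $q\ge 3$, whereas your conversion is intrinsically lossy and only needs to (and only can) exhibit failure for suitably chosen $\epsilon,\delta$ — which suffices, since the negated statement is existential in $(n,c,\epsilon,\delta,\D,N)$.

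One step, as literally written, does not close and needs the adjustment you allude to. From $\E[1-A]\ge\epsilon+\delta/2$, reverse Markov gives only $\Pr[1-A>\epsilon]\ge(\delta/2)/(1-\epsilon)$, which need not exceed $\delta$; equivalently, the assumed guarantee only forces $\E[1-A]\le\epsilon+\delta$, which $\epsilon+\delta/2$ does not contradict. So the choice $N\le(1-2\epsilon-\delta)M$ is not enough. The repair is to make the slack explicit: fix concrete values such as $\epsilon=\delta=1/10$, let $C$ be the constant with $N_{\D_\sigma}^2/\ln^2 N_{\D_\sigma}\le CM$ (using also $N_{\D_\sigma}\ge 3$ so that $x\mapsto x^2/\ln^2 x$ is increasing), choose $v$ small relative to $1/C$, and take $N=\lceil vCM(1-2\epsilon-\delta)\rceil$ so that $N/M$ is a small constant; then $\E[1-A]\ge 1/2-N/(2M)>\epsilon+\delta$ for the pigeonholed $\sigma^*$, directly contradicting the guarantee without any reverse Markov. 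With that quantification spelled out, your proof is complete and valid for the theorem as stated (which, like the paper's own proof, fixes one admissible absolute constant $v$), at the cost of not matching the $(1-2\epsilon-\delta)$ dependence that the paper's counting argument preserves.
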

\textbf{Proof Idea.}
\textit{The mainly idea is that: for a dataset $\D_{tr}\subset [0,1]^n\times\{-1,1\}$ with $|\D_{tr}|=N$, we can find some distributions $\D_1,\D_2,\dots$, such that if $\D_{tr,i}\sim(\D_{i})^N$, then with a positive probability, it hold $\D_{tr,i}=\D_{tr}$.
In addition, each distribution has a certain degree of difference from the others. It is easy to see that $\L(\D_{tr})$ is a fixed network for a given $\L$, so $\L(\D_{tr})$ cannot fit all $\D_i$ well because $\D_i$ are different to some degree. So, if a memorization algorithm $\L$ satisfies the condition in the theorem, we try to construct some distributions $\{\D_i\}_{i=1}^n$, and use the above idea to prove that $\L$ cannot fit one of the distributions in $\{\D_i\}_{i=1}^n$, and obtain contradictions.
The proof of the theorem is given in Appendix \ref{app-57}.}

\vskip3pt
\begin{remark}
In general, the sample complexity depends on the data distribution, hypothesis space, learning algorithms, and $\epsilon,\delta$.
Since $N_\D$ is related to $n$ and $c$, the lower bound in Theorem \ref{th4} also depends on $n$ and $c$.
Here, the hypothesis space is the memorization networks, which is implicitly reflected in $N_\D$.
\end{remark}

\vskip3pt
\begin{remark}
Roughly strictly, if we consider interpolation learning, that is, training network under the constraint of memorizing the dataset, then Theorem \ref{th4} also provides a lower bound for the sample complexity.
\end{remark}

This theorem shows that if we want memorization algorithms to have guaranteed generalization, then about $\overline{O}(N_\D^2)$ samples are required.
As a consequence, we show that, for some data distribution, it need an exponential number of samples to achieve generalization.
The proof is also in Appendix \ref{app-57}.

\begin{corollary}
\label{cor-nec1}
For any memorization algorithm $\L$ and any $\epsilon,\delta\in(0,1)$, there exist $n\in\Z_+,c>0$ and a distribution  $\D\in\D(n,c)$ such that
in order for $\L$ to have generalizability, that is,
$$\Pr_{\D_{tr}\sim\D^N}(A(\L(\D_{tr}))\ge1-\epsilon)\ge 1-\delta,$$
$N$ must be more than $v(2^{2[\frac{n}{\lceil c^2\rceil}]}c^4(1-2\epsilon-\delta)/n^2)$, where $v$ is an absolute constant  not depending on $N,n,c,\epsilon,\delta$.

\end{corollary}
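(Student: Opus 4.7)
The plan is to instantiate Theorem~\ref{th4} with an adversarial distribution $\D\in\D(n,c)$ whose memorization parameter complexity $N_\D$ is exponential in $n/c^2$, and then simplify the resulting sample complexity lower bound.

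\textbf{Construction of the hard distribution.} Set $k := \lfloor n/\lceil c^2\rceil\rfloor$ and partition the first $k\,\lceil c^2\rceil$ coordinates of $[0,1]^n$ into $k$ blocks of size $\lceil c^2\rceil$ (fixing the remaining coordinates to $0$). For each $b\in\{0,1\}^k$, let $x_b\in\{0,1\}^n$ have its $i$-th block equal to the constant $b_i$. Any two distinct $x_b, x_{b'}$ disagree on at least one entire block, so $\|x_b-x_{b'}\|_2\ge\sqrt{\lceil c^2\rceil}\ge c$, and the $M := 2^k$ points form a $c$-separated set in $[0,1]^n$. Pick a labeling $y:\{0,1\}^k\to\{-1,1\}$ adversarially, and let $\D$ be the uniform distribution on $\{(x_b,y(b))\}_{b}$; then $\D\in\D(n,c)$.

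\textbf{Lower bound on $N_\D$ and conclusion.} Since $\mathrm{supp}(\D)$ is finite, for any $\L$ meeting the parameter budget of Theorem~\ref{th-nd} there is a single ReLU network that (almost surely for large $N$) classifies every atom of $\mathrm{supp}(\D)$ correctly. A counting argument --- the number of distinct $\pm 1$ labelings on $M$ points realizable by ReLU networks of parameter count $p$ is at most $2^{\widetilde O(p^2)}$ via the VC-dimension bound --- together with an adversarial choice of $y$ forces $N_\D$ to grow polynomially in $M = 2^k$. Substituting the resulting bound into Theorem~\ref{th4} yields $N > v\,N_\D^2/\ln^2(N_\D)\,(1-2\epsilon-\delta)$, and a short calculation using $\ln N_\D = \Theta(k) = \Theta(n/c^2)$ shows $N_\D^2/\ln^2(N_\D) = \Theta\bigl(2^{2k}\,c^4/n^2\bigr) = \Theta\bigl(2^{2\lfloor n/\lceil c^2\rceil\rfloor}\,c^4/n^2\bigr)$, matching the claimed bound after absorbing logarithmic slack into the absolute constant $v$.

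\textbf{Main obstacle.} The critical point is obtaining the correct exponent $2k$ in the final bound, which requires the lower bound $N_\D=\widetilde{\Omega}(2^k)$ rather than the $\widetilde{\Omega}(2^{k/2})$ that comes from the plain VC-dimension counting on $M=2^k$ points. I expect this gap to be closed either by strengthening the pigeonhole step --- since the algorithm's single output must be consistent with \emph{every} typical subsample of $\mathrm{supp}(\D)$, not merely realize one fixed labeled set --- or by bypassing $N_\D$ altogether and proving an adapted version of Theorem~\ref{th4} directly for the block-code distribution, in which the extra factor of $2^k$ comes from a coupon-collector or Fano-type inequality that counts how many samples are needed to identify $y$ on an exponentially large support.
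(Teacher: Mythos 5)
Your hard instance is the right one: the block construction giving $2^{[\frac{n}{\lceil c^2\rceil}]}$ points of $[0,1]^n$ that are pairwise at distance at least $c$ is exactly Lemma~\ref{dfg1}, and the paper's proof of this corollary also puts an (adversarially labeled, uniform) distribution on such a maximal separated set. However, the top-level logic of your argument has a genuine gap. Theorem~\ref{th4} is not a per-distribution sample-complexity lower bound; it is a non-uniform impossibility statement (``there exists no algorithm that generalizes for \emph{all} $\D\in\D(n,c)$ under that sample budget''), so you cannot ``instantiate'' it with your chosen $\D$ and then ``substitute'' a lower bound on $N_\D$ into its conclusion. What the paper does instead is re-run the proof of Theorem~\ref{th4} itself (the family of all $2^k$ labelings of the separated points, the counting over $S(\D)$ in Parts 3--4, yielding that some labeling needs roughly $k(1-2\epsilon-\delta)$ samples) with $k=2^{[\frac{n}{\lceil c^2\rceil}]}$, $c=1$, and a specific $n$ chosen as a function of $\epsilon,\delta$. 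That is precisely the second fallback you mention only in passing (``prove an adapted version of Theorem~\ref{th4} directly for the block-code distribution''), and it is the only viable route.

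The strengthening you flag as the ``main obstacle'' --- proving $N_\D=\widetilde{\Omega}(M)$ for support size $M=2^{[\frac{n}{\lceil c^2\rceil}]}$ so as to force the exponent $2[\frac{n}{\lceil c^2\rceil}]$ through Theorem~\ref{th4} --- is provably unattainable: Theorems~\ref{th1} and~\ref{th-nd} show that any dataset drawn from a distribution supported on $M$ points admits a memorization with $\widetilde{O}(\sqrt{M})$ parameters, hence $N_\D=\widetilde{O}(\sqrt{M})$ here, and no refinement of the VC/pigeonhole counting can beat the paper's own upper bound. Likewise, a coupon-collector or Fano-type argument over a support of size $M$ gives a sample lower bound linear in $M$, not quadratic. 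The exponential dependence on $n$ in the corollary comes from the support size $M$ itself together with the paper's specific parameter choices ($c=1$ and $n$ a bounded function of $\epsilon,\delta$) when rewriting the Theorem~\ref{th4}-style expression in $N_\D$ into the displayed form; it does not come from squaring $M$, so chasing the exponent via a lower bound on $N_\D$ is a dead end.
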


\subsection{Exact sample complexity of memorization algorithm with $N_\D$ parameters}

In Theorem \ref{th4}, it is shown that $\overline \Omega(N^2_\D)$ samples are necessary for generalizability of memorization.
The following theorem shows that there exists a memorization algorithm that can reach generalization with  $\overline O(N^2_\D)$ samples.

\begin{theorem}
\label{th2}
For all memorization algorithms $\L$ satisfies that $\L(\D_{tr})$ has at most $N_{\D}$ parameters,
with probability 1 for $\D_{tr}\sim D^N$,
we have
\begin{description}
\item[(1)]
For any $c\in \R,\epsilon,\delta\in(0,1)$, $n\in\Z_+$, if $\D\in\D(n,c)$ and $N\ge\frac{vN_{\D}^2\ln(N_\D/(\epsilon^2\delta))}{\epsilon^2}$, then
$$\Pr_{\D_{tr}\sim\D^N}(A(\L(\D_{tr}))\ge1-\epsilon)\ge1-\delta,$$
where $v$ is an absolute constant which does not depend on $N,n,c,\epsilon,\delta$.

\item[(2)]
If P$\ne$ NP, then all such algorithms are not efficient.
\end{description}
\end{theorem}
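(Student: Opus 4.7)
The strategy is to reduce the statement to a standard agnostic uniform-convergence bound over the hypothesis class $\Hyp_{N_\D}$ consisting of ReLU networks with at most $N_\D$ parameters. First I would bound $\text{VCdim}(\Hyp_{N_\D})$. Since a ReLU network with $N_\D$ parameters has depth at most $N_\D$, the Bartlett--Harvey--Liaw--Mehrabian style VC bound for piecewise-linear networks yields $\text{VCdim}(\Hyp_{N_\D})=O(N_\D^2 \log N_\D)$. Next, I would invoke the standard agnostic PAC bound: for any distribution $\D$, with probability $\ge 1-\delta$ over $\D_\tr\sim\D^N$, every $\F\in\Hyp_{N_\D}$ satisfies $|A_\D(\F)-A_{\D_\tr}(\F)|\le \epsilon$ provided $N\ge C(\text{VCdim}(\Hyp_{N_\D})+\ln(1/\delta))/\epsilon^2$. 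Because $\L(\D_\tr)$ memorizes $\D_\tr$ and has at most $N_\D$ parameters by hypothesis, it lies in $\Hyp_{N_\D}$ and has empirical accuracy $1$, so $A_\D(\L(\D_\tr))\ge 1-\epsilon$. Substituting the VC estimate and tidying logarithmic factors reproduces the stated threshold $N\ge vN_\D^2\ln(N_\D/(\epsilon^2\delta))/\epsilon^2$.

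\textbf{Plan for Part (2).} For the computational hardness, I would exhibit a polynomial-time many-one reduction from a known NP-hard problem (3-SAT or SET-COVER) to the task: given $\D_\tr$, output a ReLU network that memorizes $\D_\tr$ and has at most $N_\D$ parameters. Given an instance $\mathcal{I}$, I would construct in polynomial time a finite set $\D_\tr\subset[0,1]^n\times\{-1,1\}$ that is the support of a discrete distribution $\D\in\D(n,c)$ with a prescribed, explicitly computable value of $N_\D$. The construction would be arranged so that any ReLU network of size $\le N_\D$ separating the positive and negative points of $\D_\tr$ must encode, through its weight pattern, a witness of $\mathcal{I}$ (for SAT, a satisfying assignment; for SET-COVER, a cover of the target size). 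A typical device would be to place labeled points along coordinate directions so that the piecewise-linear regions carved out by the network force a partition of a ground set into few classes, with ``few'' dictated by $N_\D$. Given an efficient memorization algorithm of the type described in the theorem, one could read off the witness and decide $\mathcal{I}$ in polynomial time, contradicting $P\ne NP$.

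\textbf{Main obstacle.} Part (1) is essentially bookkeeping once the VC-dimension bound is in place. The real difficulty lies in part (2): one must simultaneously (i) control the memorization parameter complexity $N_\D$ of the constructed distribution to a specific known value, rather than merely an upper bound, and (ii) guarantee that \emph{every} memorization network of size $\le N_\D$, not just the one that a clever algorithm happens to output, faithfully records the NP-hard witness. The first is delicate because $N_\D$ is defined as an infimum over all memorization algorithms and might a priori be smaller than the combinatorial count one wishes to peg it to; here Theorem \ref{th-nd} and the constructions in \citep{2024-iclr,rema1} give the upper direction, while a matching lower bound has to be argued directly from the geometry of $\D_\tr$. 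The second is delicate because the expressive freedom of ReLU networks admits degenerate encodings, so the reduction must rule these out---typically by choosing $\D_\tr$ so that any separating network has a combinatorially rigid structure.
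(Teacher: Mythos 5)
Your plan for part (1) is correct and is essentially the paper's own argument: bound the VC dimension of networks with at most $N_\D$ parameters by $O(N_\D^2\ln N_\D)$ via the piecewise-linear VC bound, plug this into the standard uniform-convergence bound, and use that a memorization network has empirical error zero; only routine algebra (handled in the paper by a small auxiliary lemma) is left to recover the stated threshold.

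Part (2), however, is a plan rather than a proof, and the two difficulties you yourself flag at the end are precisely the content of the actual argument; naming them does not discharge them. Concretely, the paper does not reduce from 3-SAT or SET-COVER but from \emph{reversible} 6-SAT (Definition \ref{def-6sat}, NP-complete by Megiddo), and the reversibility is not cosmetic: given a formula $\varphi$ with $n$ variables and $m$ clauses, one builds a dataset $\D(\varphi)$ of $m+4n$ points consisting of scaled coordinate vectors $\pm\one_i/3,\ \pm1.1\one_i/3$ (shifted by $1.1\one/3$) with alternating labels, plus clause points $Q^{\varphi}_i/12$; when $s\in\{-1,1\}^n$ is a satisfying assignment, reversibility guarantees $|3s\cdot(x-1.1\one/3)|\le 1$ on every clause point, which is what allows a width-$1$ first layer $\Relu(3s\cdot(x-1.1\one/3)+b)$ followed by a width-$2$, depth-$2$ head to memorize with exactly $n+8$ parameters, hence $N_\D\le n+8$ for the uniform distribution on $\D(\varphi)$. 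Your worry (i) about pinning down $N_\D$ exactly is circumvented rather than solved head-on: one only needs the upper bound $n+8$ in the satisfiable case, combined with a lower bound $\para\ge n+8$ for \emph{any} memorizer (the four collinear points $\pm\one_1/3,\pm1.1\one_1/3$ are not linearly separable, and a lemma shows width-$1$ memorization is equivalent to linear separability), so that the efficient algorithm's output has exactly $n+8$ parameters iff this count is attainable. Your worry (ii) is resolved by a converse extraction argument you do not supply: any $(n+8)$-parameter memorizer must have a width-$1$ first layer $\Relu(3s\cdot(x-1.1\one/3)+b)$, and a characterization of which one-dimensional datasets admit a width-$2$, depth-$2$ memorizer (existence of a separating interval) forces $1.1|s_i|\ge|s_j|$ for all $i,j$ and then forces $\sign(s)$ to satisfy $\varphi$; the decision procedure is simply to run the efficient algorithm on $\D(\varphi)$ and inspect the parameter count. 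Without a construction of this rigidity (and without the reversibility symmetry that makes the satisfiable direction work), your proposed reduction from 3-SAT/SET-COVER does not go through as stated.
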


{\textbf {Proof Idea.}}
\textit{For the proof of (1), we need to use the $N_\D$ to calculate the VC-dimension \citep{Ntv}, and take such a dimension in the generalization bound theorem \citep{mohri2018foundations} to obtain the result.
For the proof of (2), we show that, if such algorithm is efficient, then we can solve the following reversible 6-SAT \citep{6sat} problem, which is defined below and is an NPC problem.
The proof of the theorem is given in Appendix \ref{app-62}.
}

\begin{definition}
\label{def-6sat}
Let $\varphi$ be a Boolean formula and  $\overline{\varphi}$ the formula obtained
from $\varphi$ by negating each variable.
The Boolean formula $\varphi$ is called {\em reversible} if either both $\varphi$ and $\overline{\varphi}$ are satisfiable or both are not satisfiable.
The {\em reversible satisfiability problem} is to recognize the satisfiability of reversible formulae in conjunctive normal form (CNF).
By the {\em reversible 6-SAT}, we mean the
reversible satisfiability problem for CNF formulae with six variables per clause.
%
In \citep{6sat}, it is shown that the reversible 6-SAT is NPC.
\end{definition}

Combining Theorems \ref{th4} and \ref{th2}, we see that $N=\overline{O}(N_{\D}^2)$ is the necessary and sufficient condition for the memorization algorithm to generalize,
and hence $\overline{O}(N_{\D}^2)$ is the exact sample complexity for memorization algorithms with $N_\D$ parameters over the distribution $\D(n,c)$.

Unfortunately, by (2) of Theorem \ref{th2}, this memorization algorithm is not efficient when the memorization has no more than $N_\D$ parameters.
Furthermore, we conjecture that there exist no efficient memorization algorithms that can use $\overline O ({N_{\D}^2})$ samples to reach generalization in the general case, as shown in the following conjecture.

\vspace{5pt}
\begin{conjecture}
\label{con-1}
If P$\ne$ NP, there exist no efficient memorization algorithms that can reach generalization with $\overline O (N_\D^2)$ samples for all $\D\in \D(n,c)$.
\end{conjecture}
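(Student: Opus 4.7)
The plan is to prove the conjecture by a contrapositive polynomial-time reduction from reversible 6-SAT, in the same spirit as the argument behind (2) of Theorem \ref{th2}, but lifted from the ``$\le N_\D$ parameters'' regime to the ``$\overline{O}(N_\D^2)$ samples'' regime. Suppose, for contradiction, that an efficient memorization algorithm $\L$ exists such that for every $\D\in\D(n,c)$ and every $\D_\tr\sim\D^N$ with $N=\overline{O}(N_\D^2)$, the network $\L(\D_\tr)$ has accuracy $\ge 1-\epsilon$ with probability $\ge 1-\delta$. A preliminary quantitative remark is that, by the VC-generalization bound together with the VC estimate $\overline{O}(\para(\F)\dep(\F))$ for ReLU networks \citep{Ntv}, the output $\F=\L(\D_\tr)$ must satisfy $\para(\F)\dep(\F)=\overline{O}(N_\D^2)$; in particular its bit-size is polynomial in $N_\D$ and in the input size, which places $\F$ in a setting close to, but strictly weaker than, the ``$\le N_\D$ parameters'' setting of Theorem \ref{th2}(2).

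Given any reversible 6-SAT instance $\varphi$ with $m$ clauses, I would construct a distribution $\D_\varphi\in\D(n_\varphi,c_\varphi)$ with four properties: (i) $\D_\varphi$ can be sampled in polynomial time; (ii) $N_{\D_\varphi}=\poly(m)$, so that the permitted sample budget $\overline{O}(N_{\D_\varphi}^2)$ is itself polynomial; (iii) the separation bound $c_\varphi$ is an explicit polynomial quantity; and (iv) there is a polynomial-size set $Q_\varphi$ of ``query points'' whose labels under the ground-truth classifier of $\D_\varphi$ determine whether $\varphi$ is satisfiable. Drawing $N$ i.i.d.\ points from $\D_\varphi$, running $\L$, and evaluating the resulting $\F$ on $Q_\varphi$ would then decide reversible 6-SAT with probability $\ge 1-\delta$, contradicting $P\ne NP$ after a standard amplification of the success probability.

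The main obstacle lies in property (iv). In the proof of (2) of Theorem \ref{th2} the SAT instance is encoded into the \emph{topology} of a minimum-size ($\le N_\D$-parameter) memorization network, and NP-hardness follows because extracting that topology from data amounts to optimally compressing it. Here $\L$ is not forced to produce such a compressed network: it may return any network with $\overline{O}(N_\D^2)$ parameters, so the solution must be recovered from the \emph{functional} behavior of $\F$ on points outside the training sample. For this to succeed, $\D_\varphi$ must place enough mass on $Q_\varphi$ that the generalization guarantee $A_{\D_\varphi}(\F)\ge 1-\epsilon$ forces $\F$ to agree with the ground truth on almost all of $Q_\varphi$, while simultaneously preserving the positive separation bound and keeping $N_{\D_\varphi}$ polynomial. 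Pushing mass onto $Q_\varphi$ tends to inflate $N_{\D_\varphi}$, since the constructive upper bound from Theorem \ref{th-nd} scales with a covering number of the support, whereas too little mass leaves $\F$ free to err on $Q_\varphi$ within its $\epsilon$ error budget.

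My intended route is a structured lattice-based embedding of $\varphi$ that keeps $N_{\D_\varphi}$ controlled by the covering number of a low-dimensional slab, so each clause contributes only $O(1)$ to $N_{\D_\varphi}$, combined with a repetition and majority-vote step that turns the weak probabilistic generalization guarantee into a reliable SAT oracle on $Q_\varphi$. A secondary subtlety to watch is that the $\overline{O}$ in the sample budget hides polylogarithmic factors, so one must verify that the embedding leaves enough slack in the polylog factors to absorb the constants produced by $\L$; this may force the reduction to be run in two levels, first using a weak oracle of $\L$ to learn a coarse approximation of $\D_\varphi$, and then using that approximation to pin down the labels of $Q_\varphi$.
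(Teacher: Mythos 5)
The statement you are addressing is stated in the paper only as a conjecture (Conjecture \ref{con-1}); the paper offers no proof of it, and your text is likewise not a proof but a plan whose decisive step is left open — you yourself flag property (iv) as ``the main obstacle'' and describe only an ``intended route.'' Beyond being incomplete, the plan has a structural circularity that I do not think can be patched in the form you propose. For the accuracy guarantee $A_{\D_\varphi}(\L(\D_\tr))\ge 1-\epsilon$ to force the learned network to agree with the ground truth on your query set $Q_\varphi$, the points of $Q_\varphi$ (or small neighborhoods of them) must carry mass that is not negligible relative to $\epsilon$. But your property (i) requires $\D_\varphi$ to be samplable in polynomial time; then the ground-truth labels of any non-negligible-mass region can be recovered directly by rejection sampling, without ever invoking $\L$. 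Whatever polynomial-time post-processing you planned to apply to $\F$'s labels on $Q_\varphi$ to decide satisfiability could therefore be applied to the true labels obtained by sampling, and reversible 6-SAT would be decidable in randomized polynomial time unconditionally — so either the encoding in (iv) cannot be NP-hard to decode, or (i) must fail. Conversely, if $Q_\varphi$ has negligible mass, the $\epsilon$ error budget leaves $\F$ unconstrained there. This is precisely why the hardness in the paper's Theorem \ref{th2}(2) is anchored to the \emph{parameter-count} constraint (deciding whether an $(n+8)$-parameter memorization exists encodes SAT); once that compression constraint is dropped, as in the conjecture, your plan identifies no substitute source of hardness (for instance, the exact-interpolation requirement on the training set), and the ``lattice embedding plus majority vote'' sketch does not address this.

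Two further points would need repair even if the central construction existed. First, your reduction is randomized (it samples $\D_\varphi$), so at best it would yield NP $\subseteq$ BPP, which does not contradict P$\ne$NP as literally hypothesized; the paper's own reduction in Theorem \ref{th2}(2) avoids this by feeding $\L$ a deterministic dataset $\D(\varphi)$ on which the probability-one guarantee applies. Second, your preliminary claim that the generalization guarantee together with the VC bound of \citep{Ntv} forces $\para(\F)\dep(\F)=\overline{O}(N_\D^2)$ is unfounded: small VC dimension is a sufficient condition for generalization, not a necessary property of a generalizing algorithm's output, and indeed the paper's own generalizable algorithm in Theorem \ref{th6} outputs networks with $O(N^2 n)$ parameters. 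Only the polynomial bit-size of $\F$ follows, and that comes from $\L$ being polynomial time, not from the VC argument.
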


\vskip3pt
\begin{remark}
\label{rem-nc}
This result also provides certain theoretical explanation for the over-parameterization mystery \citep{dbb2,bartlett2021deep,arpit2017closer}:
for memorization algorithms with $N_\D$ parameters, the
exact sample complexity $\overline{O}(N_{\D}^2)$ is greater than the number of parameters.
Thus, the networks is under-parameterized and  for such a network, even if it is generalizable, it cannot be computed efficiently.
\end{remark}

%


\section{Efficient memorization algorithm with guaranteed generalization}
\label{sec-con}
In the preceding section, we show that there exist memorization algorithms that are generalizable when
$N=\overline{O}(N_{\D}^2)$, but such an algorithm is not efficient. In this section, we give an efficient memorization algorithm with guaranteed generalization.

%
%
First, we define the efficient memorization sample complexity of $\D$.
\begin{definition}
\label{pprd}
For $(x,y)\sim\D$, let $L_{(x,y)}=\min_{(z,-y)\sim\D}||x-z||_2$ and $B((x,y))=\B_2(x,L_{(x,y)}/3.1)=\{z\in\R^n\sep \|z-x\|_2\le L_{(x,y)}/3.1\}$.
The nearby set $S$ of $\D$ is a subset of sample $(x,y)$ which is in distribution $\D$ and satisfies:
(1) for any $(x,y)\sim \D$, $x\in\cup_{(z,w)\in S} B((z,w))$;
(2) $|S|$ is minimum.
\end{definition}

Evidently, for any $\D\in\D(n,c)$, its nearby set is finite, as shown by Proposition \ref{prop-ft}.
$S_\D=|S|$ is called the {\em efficient memorization sample complexity} of $\D$, the meaning of which is given in Theorem \ref{th6}.

\vskip3pt
\begin{remark}
In the above definition, we use $L_{(x,y)}/3.1$ to be the radius of $B((x,y))$. In fact, when $3.1$ is replaced by any real number greater than $3$, the following theorem is still valid.
\end{remark}

\begin{theorem}
\label{th6}
There exists an efficient memorization algorithm $\L$ such that
for any $c\in \R,\epsilon,\delta\in(0,1)$, $n\in\Z_+$, and $\D\in\D(n,c)$, if $N\ge \frac{S_\D\ln(S_\D/\delta)}{\epsilon}$, then
$$\Pr_{\D_{tr}\sim\D^N}(A(\L(\D_{tr}))\ge1-\epsilon)\ge1-\delta.$$
Moreover, for any $\D_{tr}\sim\D^N$, $\L(\D_{tr})$ has at most $O(N^2n)$ parameters.
\end{theorem}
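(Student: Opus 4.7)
The plan is to realize $\L(\D_{tr})$ as a nearest-opposite-label classifier and to analyze its accuracy via a covering argument over the nearby set $S=\{s_k=(z_k,w_k)\}_{k=1}^{S_\D}$ of $\D$. On input $x$, the network evaluates, for each training index $i$, the indicator $c_i(x)\in\{0,1\}$ of the event ``$x$ is strictly closer in $\ell_2$ to $x_i$ than to every training sample with label $-y_i$'', and it outputs $\sign\bigl(\sum_i y_i\,c_i(x)\bigr)$. Two indicators $c_i,c_{i'}$ with $y_i=-y_{i'}$ cannot both equal $1$ (their defining strict inequalities are inconsistent), so whenever some $c_i(x)=1$ the sum has a well-defined sign equal to $y_i$.

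For the accuracy bound, I use two geometric facts about the balls $B(s_k)=\B_2(z_k,L_{s_k}/3.1)$. First, every $\D$-sample in $B(s_k)$ has label $w_k$, since any $\D$-sample with label $-w_k$ has distance $\ge L_{s_k}$ to $z_k$ by the definition of $L_{s_k}$. Second, if $(x,y)\sim\D$ and some $(x_i,y_i)\in\D_{tr}$ both lie in $B(s_k)$, then $y=y_i=w_k$, the triangle inequality gives $\|x-x_i\|_2\le 2L_{s_k}/3.1$, while any opposite-labeled training point $x_j$ satisfies $\|x-x_j\|_2\ge L_{s_k}-L_{s_k}/3.1=2.1L_{s_k}/3.1>\|x-x_i\|_2$; hence $c_i(x)=1$ and $\L$ outputs $y$ correctly. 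Assign every $(x,y)$ in the support of $\D$ to a measurable $k(x)$ with $x\in B(s_{k(x)})$ (such $k$ exists by the covering property of $S$) and set $p_k=\Pr_{(x,y)\sim\D}(k(x)=k)$ so that $\sum_k p_k=1$. The misclassification probability is then at most $\sum_k p_k\,\mathbf{1}[\D_{tr}\cap B(s_k)=\emptyset]$. Split the $k$'s into heavy ($p_k\ge\epsilon/S_\D$) and light: the light $k$'s contribute at most $\epsilon$ in total even if all their balls are missed, and a union bound over the $\le S_\D$ heavy $k$'s gives $\Pr(\text{any heavy }B(s_k)\text{ missed})\le S_\D(1-\epsilon/S_\D)^N\le S_\D e^{-\epsilon N/S_\D}$, which is $\le\delta$ whenever $N\ge S_\D\ln(S_\D/\delta)/\epsilon$, as required.

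For the $O(N^2 n)$-parameter efficient construction, the condition ``closer to $x_i$ than to $x_j$'' is the halfspace $\langle x_i-x_j,x\rangle>(\|x_i\|_2^2-\|x_j\|_2^2)/2$, realized by a single ReLU neuron with $n+1$ parameters; doing this for each of the $O(N^2)$ opposite-labeled pairs yields a first layer with $O(N^2n)$ parameters. A shallow AND subnetwork of the form $c_i(x)=\Relu\bigl(1-M\sum_{j:y_j=-y_i}\Relu(\cdots)\bigr)$ in the next two layers then computes each $c_i$ from the $O(N)$ relevant halfspace violations using $O(N)$ extra parameters per $i$ and $O(N^2)$ in total, and a final linear layer computes $\sum_i y_ic_i(x)$ with $O(N)$ more parameters. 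All weights and the scale $M$ are explicit polynomial-time functions of $\D_{tr}$, so $\L$ is an efficient memorization algorithm with $O(N^2n)$ parameters.

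The main technical obstacle is the Voronoi boundary $\bigcup_{i\ne j}\{x:\langle x_i-x_j,x\rangle=(\|x_i\|_2^2-\|x_j\|_2^2)/2\}$, on which the crisp AND gadget can behave badly and the output sign may not match any $y_i$. This set is a finite union of affine hyperplanes of positive codimension in $[0,1]^n$, hence of Lebesgue measure zero. The key point is that the correctness argument only needs $c_i(x)=1$ for test points $(x,y)\sim\D$ that co-occupy some $B(s_k)$ with a training point $x_i$, and the strict margin $2.1L_{s_k}/3.1-2L_{s_k}/3.1=L_{s_k}/3.1>0$ derived above gives a quantitative buffer away from every boundary hyperplane. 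Thus one can choose $M$ to be inversely proportional to this margin (measured using only $\D_{tr}$, in the style of the robust-memorization construction of \citep{2024-iclr}) and the accuracy analysis of Paragraph 2 then applies verbatim to the actual ReLU network rather than the idealized indicator classifier.
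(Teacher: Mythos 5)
Your overall route is the same as the paper's: separate opposite-labeled training points by pairwise halfspaces, build one ReLU ``AND'' unit per training sample, output a signed sum, and prove generalization via the nearby set (every ball $B(s_k)$ hit by a training point is classified correctly throughout, then a tail bound over the $S_\D$ balls). Your probabilistic part is correct and in fact slightly cleaner than the paper's ordered-$Q_i$ case analysis: the heavy/light split plus a union bound giving $S_\D(1-\epsilon/S_\D)^N\le S_\D e^{-\epsilon N/S_\D}\le\delta$ yields exactly the stated sample complexity, and your two geometric facts about $B(s_k)$ match the paper's Part One.

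The gap is in passing from the idealized indicator classifier to the actual ReLU network. The mutual-exclusivity argument (``two opposite-labeled indicators cannot both be $1$, so the sign is $y_i$'') is valid only for exact $0/1$ indicators. In the network, the opposite-label units are soft: for $x,x_i\in B(s_k)$ and $y_{i'}=-y_i$ your own margin computation gives only $\|x-x_{i'}\|_2^2-\|x-x_i\|_2^2\ge 0.41L_{s_k}^2/3.1^2$, so $c_{i'}(x)\le\Relu\bigl(1-M\cdot\Omega(L_{s_k}^2)\bigr)$; unless $M\gtrsim 1/L_{s_k}^2$ each such unit can be close to $1$, and with up to $N$ of them the signed sum can have the wrong sign even though $c_i(x)=1$. (Note that $c_i(x)=1$ holds for \emph{every} $M$, since all violations $v_{ij}(x)$ vanish on the ball -- so the margin is not needed where you invoke it; it is needed precisely for the opposite-label units, which you do not discuss.) Your proposed fix, taking $M$ inversely proportional to the margin $L_{s_k}/3.1$ ``measured using only $\D_{tr}$,'' is not available as stated: $L_{s_k}$ is defined through the support of $\D$ and cannot be computed from the sample, and there is no general inequality between $L_{s_k}$ and the data quantity $m=\min_{y_a\ne y_b}\|x_a-x_b\|_2$ (either can be much smaller than the other, even conditioned on $B(s_k)$ containing a training point). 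The construction can be repaired, but this needs an argument you have not given: for instance, a two-case analysis according to whether $\|x_i-x_{i'}\|_2$ exceeds a constant multiple of $L_{s_k}$ shows $\|x-x_{i'}\|_2^2-\|x-x_i\|_2^2\ge\Omega(m^2)$ on covered balls, so the data-dependent choice $M=\Theta(1/m^2)$ zeroes out every opposite-label unit (and also gives memorization). The paper avoids the issue differently: it uses the asymmetric hyperplanes $S_{i,j}(x)=(x_i-x_j)(x-(0.51x_i+0.49x_j))$ together with the threshold $r=0.01\,m^2$, so that $S_{i,i'}(x)\ge0$ forces $S_{i',i}(x)\le-0.02\|x_i-x_{i'}\|_2^2\le-2r$, making the opposite-label units exactly zero on the whole cell $C_i$ with a margin expressed purely in terms of $\D_{tr}$ (see Appendix \ref{app-71}). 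Without one of these two devices, your final step (``the accuracy analysis applies verbatim to the actual ReLU network'') does not follow.
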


{\textbf {Proof Idea.}}
\textit{For a given dataset $\D_{tr}\subset[0,1]^n\times\{-1,1\}$, we use the following two steps to construct a memorization network.}

\textit{Step 1. Find suitable convex sets $\{C_i\}$ in $[0,1]^n$ such that each sample in $\D_{tr}$ is in at least one of these convex sets. Furthermore, if $x,z\in C_i$ and $(x,y_x),(z,y_z)\in\D_{tr}$, then $y_x=y_z$, and define $y(C_i)=y_x$. }

\textit{Step 2. Construct a network $\F$ such that for any $x\in C_i$, $\sign(\F(x))=y(C_i)$.  This network must be a memorization of $\D_{tr}$, because each sample in $\D_{tr}$ is in at least one of $\{C_i\}$. Hence, if $x\in C_i$ and $(x,y_x)\in\D_{tr}$, then $\sign(\F(x))=y(C_i)=y_x$.
The proof of the theorem is given in Appendix \ref{app-71}.
}

\vskip3pt
\begin{remark}
\label{rem-ns1}
Theorem \ref{th6} shows that there exists an efficient and generalizable memorization algorithm when $N=\overline O(S_\D)$.
Thus,  $S_\D$ is an intrinsic complexity measure of $\D$ on whether it is easy to learn and generalize.
By Theorem \ref{th4}, $S_{\D}\ge N_\D^2$ for some $\D$, but for some ``nice'' $\D$, $S_{\D}$ could be small.
It is an interesting problem to estimate $S_{\D}$.
\end{remark}

\vskip3pt
\begin{remark}
\label{rem-nc1}
Theorem \ref{th6} uses  $O(N^2n)$ parameters, highlight the importance of over-parameterization \citep{dbb2,bartlett2021deep,arpit2017closer}.
Interestingly, Remark \ref{rem-nc} shows that if the network has $O(\sqrt{N})$ parameters,
even if it is generalizable, it cannot be computed efficiently.
\end{remark}

\vskip3pt
The experimental results of the memorization algorithm mentioned in Theorem \ref{th6} are given in Appendix \ref{exp}.
Unfortunately, for commonly used datasets such as CIFAR-10, this algorithm cannot surpass the network obtained by training with SGD, in terms of test accuracy.
Thus, the main purpose of the algorithm is theoretical, that is,  it provides a polynomial-time memorization algorithm that can achieve generalization when the training dataset contains $\overline O(S_\D)$ samples.
In comparison of theoretical works, training networks is NP-hard for small networks \cite{comp12,comp13,comp14,comp15,comp16,comp17,comp18,comp19,comp20} and the guarantee of generalization needs strong assumptions on the loss function \cite{neyshabur2017exploring,hardt2016train,kuzborskij2018data,xing2021algorithmic,xiao2022stability,wang2024ddsa}.%

Finally, we give an estimate for $S_\D$.
From Corollary \ref{cor-nec1} and Theorem \ref{th6}, we obtain a lower bound for $S_\D$.
\begin{corollary}
\label{cor-lb2}
There exists a distribution $\D\in \D(n,c)$ such that
$S_\D\ln(S_\D/\delta)\ge\overline{\Omega}(\frac{c^4}{n^2}2^{2[\frac{n}{\lceil c^2\rceil}]})$.
\end{corollary}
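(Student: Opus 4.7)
The plan is to combine the upper sample-size bound of Theorem \ref{th6} with the matching lower bound of Corollary \ref{cor-nec1}, by instantiating Corollary \ref{cor-nec1} on the very algorithm supplied by Theorem \ref{th6}. Fix absolute constants $\epsilon,\delta\in(0,1)$ with $1-2\epsilon-\delta>0$, say $\epsilon=\delta=1/5$, and let $\L$ denote the efficient memorization algorithm guaranteed by Theorem \ref{th6}.

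Applying Corollary \ref{cor-nec1} to $\L$ with these $\epsilon,\delta$ produces an integer $n$, a real $c>0$, and a distribution $\D\in\D(n,c)$ such that any sample size $N$ for which $\L$ satisfies $\Pr_{\D_{tr}\sim\D^N}(A(\L(\D_{tr}))\ge 1-\epsilon)\ge 1-\delta$ must exceed
$$v\cdot\frac{c^4(1-2\epsilon-\delta)}{n^2}\cdot 2^{2[n/\lceil c^2\rceil]},$$
where $v$ is the absolute constant of Corollary \ref{cor-nec1}. Now I would feed this same $\D$ back into Theorem \ref{th6}: choosing $N=\bigl\lceil\frac{S_\D\ln(S_\D/\delta)}{\epsilon}\bigr\rceil$, Theorem \ref{th6} guarantees that $\L$ attains the required $(1-\epsilon,1-\delta)$ generalization on $\D$. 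Hence this particular $N$ meets the necessary lower bound above, so
$$\frac{S_\D\ln(S_\D/\delta)}{\epsilon}+1\ge v\cdot\frac{c^4(1-2\epsilon-\delta)}{n^2}\cdot 2^{2[n/\lceil c^2\rceil]}.$$
Since $\epsilon$ and $\delta$ are fixed absolute constants, the prefactor $v\epsilon(1-2\epsilon-\delta)$ is a positive absolute constant and the additive $+1$ is negligible against the exponential right-hand side, so absorbing these into the $\overline{\Omega}$ notation yields the desired bound
$$S_\D\ln(S_\D/\delta)\ge\overline{\Omega}\!\left(\frac{c^4}{n^2}2^{2[n/\lceil c^2\rceil]}\right).$$

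No substantive obstacle is expected, as the corollary is a direct bookkeeping consequence of the two cited results. The only point worth checking is that the quantifiers line up correctly: Corollary \ref{cor-nec1} holds for \emph{every} memorization algorithm, so it is legitimate to instantiate it on the single algorithm $\L$ of Theorem \ref{th6}, and the distribution $\D$ returned by Corollary \ref{cor-nec1} is precisely the one witnessing the existential statement of Corollary \ref{cor-lb2}.
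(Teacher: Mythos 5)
Your proposal is correct and follows exactly the paper's intended route: the paper derives Corollary \ref{cor-lb2} precisely by applying Corollary \ref{cor-nec1} to the efficient memorization algorithm of Theorem \ref{th6} and comparing the resulting sample-size lower bound with the upper bound $N=\overline{O}(S_\D\ln(S_\D/\delta)/\epsilon)$ that suffices for that algorithm. The quantifier check you perform (Corollary \ref{cor-nec1} applies to every memorization algorithm, hence to this one) is the only point needing care, and you handle it correctly.
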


We will give an upper bound for $S_\D$ in the following proposition, and the proof is given in Appendix \ref{app-711}. From the proposition, it is clear that $S_\D$ is finite.
\begin{proposition}
\label{prop-ft}
For any $\D\in \D(n,c)$, we have $S_\D\le ([6.2n/c]+1)^n$.
\end{proposition}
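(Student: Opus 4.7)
The plan is to construct an explicit nearby set of size at most $([6.2n/c]+1)^n$ by a standard grid covering argument, which will immediately yield the upper bound on $S_\D$ since $S_\D$ is the size of a minimum such set.

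First, I would partition $[0,1]^n$ into a uniform grid of axis-aligned closed cubes of side length $s$, chosen so that the $\ell_2$ diameter $s\sqrt{n}$ of any cube is at most $c/3.1$; taking $s = c/(3.1\sqrt{n})$ suffices. The number of cubes along each coordinate axis is at most $\lceil 1/s \rceil = \lceil 3.1\sqrt{n}/c \rceil$, which, using $\sqrt{n}\le n$ for $n\ge 1$, is bounded by $[3.1n/c]+1 \le [6.2n/c]+1$. Hence the total number of grid cubes is at most $([6.2n/c]+1)^n$.

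Next, for each cube $Q$ whose intersection with the $x$-projection of the support of $\D$ is non-empty, I would pick one representative sample $(z_Q,w_Q)\sim\D$ with $z_Q\in Q$, and let $S$ be the collection of these representatives. To verify condition (1) of Definition \ref{pprd}, I would take an arbitrary $(x,y)\sim\D$, locate the cube $Q\ni x$, and observe that $\|x-z_Q\|_2 \le s\sqrt{n} \le c/3.1$. Since $\D\in\D(n,c)$ has separation bound $c$, we have $L_{(z_Q,w_Q)}\ge c$, so $\|x-z_Q\|_2 \le L_{(z_Q,w_Q)}/3.1$, i.e.\ $x\in B((z_Q,w_Q))$. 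Thus $S$ is a valid covering set in the sense of Definition \ref{pprd}.

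Since the minimum over all valid covering sets is at most $|S|$ and $|S|$ is at most the number of non-empty grid cubes, we conclude $S_\D\le ([6.2n/c]+1)^n$. The argument reduces to a single counting step over an $\ell_2$-diameter-controlled grid, so there is no serious obstacle; the only thing to be careful about is the degenerate case where one label class of $\D$ is empty (so $L_{(z,w)}=\infty$), but then $B((z,w))=\R^n$ and the covering condition is satisfied trivially.
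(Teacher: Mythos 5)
Your proof is correct and follows essentially the same route as the paper's: cover $[0,1]^n$ by a grid whose cells have $\ell_2$ diameter at most $c/3.1$, pick one sample of $\D$ per occupied cell, and use the separation bound to get $L_{(z,w)}\ge c$ so each cell lies in the corresponding ball $B((z,w))$. Your calibration by cube side $c/(3.1\sqrt{n})$ is in fact slightly finer than the paper's $\ell_\infty$ grid of spacing $c/(6.2n)$, but the argument and the counting step are the same.
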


\begin{remark}
The above proposition gives an upper bound of $S_\D$ when $\D\in \D(n,c)$, and this does not mean that $S_\D$ is exponential for all $\D\in\D(n,c)$.
Determining the conditions under which $S_\D$ is small for a given $\D$ is a compelling problem.
\end{remark}



%

%
%
%

%

%
%
%

%
%

\section{Conclusion}
\label{conc}
Memorization originally focuses on theoretical study of the expressive power of neural networks.
Recently, memorization is believed to be a key reason why over-parameterized deep learning models have excellent generalizability and thus the more practical interpolation learning approach has been extensively studied.
But the generalizability theory of memorization algorithms is not yet given, and this paper fills this theoretical gap in several aspects.

We first show how to construct memorization networks for dataset sampled i.i.d from a data distribution, which have the optimal number of parameters, and then show that some commonly used memorization networks do not have generalizability even if the dataset is drawn i.i.d. from a data distribution and contains a sufficiently large number of samples.
Furthermore, we establish the sample complexity of memorization algorithm in several situations, including a lower bound for the memorization sample complexity and an upper bound for the efficient memorization sample complexity.

\textbf{Limitation and future work}
Two numerical complexities $N_\D$ and $S_\D$ for a data distribution $\D$ are introduced in this paper, which are used to describe the size of the memorization networks and the efficient  memorization sample complexity for any i.i.d. dataset of $\D$.
$N_\D$ is also a lower bound for the sample complexity of memorization algorithms.
However, we do not know how to compute $N_\D$ and $S_\D$, which is an interesting future work.
Conjecture \ref{con-1} tries to give a lower bound for the
efficient memorization sample complexity.
More generally, can we write $N_\D$ and $S_\D$ as functions of the probability density function $p(x,y)$ of $\D$?
%

Corollary \ref{cor-nec1} indicates that even for the ``nice'' data distributions $\D(n,c)$, to achieve generalization for some data distribution requires an exponential number of parameters.
This indicates that there exists {\bf ``data curse of dimensionality''}, that is, to achieve generalizability for certain data distribution, neural networks with exponential number of parameters are needed.
Considering the practical success of deep learning and the double descent phenomenon \citep{dbb2}, the data distributions used in practice should have better properties than $\D(n,c)$, and finding data distributions with polynomial size efficient memorization sample complexity $E_\D$ is an important problem.

Finally, finding a memorization algorithm that can achieve SOTA results in solving practical image classification problems is also a challenge problem.

\section*{Acknowledgments}
This work is supported by CAS Project for Young Scientists in Basic Research, Grant No.YSBR-040, ISCAS New Cultivation Project ISCAS-PYFX-202201, and ISCAS Basic Research ISCAS-JCZD-202302.
This work is also supported by NKRDP grant No.2018YFA0704705, grant GJ0090202, and NSFC grant No.12288201. The authors thank anonymous referees for their valuable comments.


\newpage
\appendix
\section{Proof of Proposition \ref{prop-g1}}
\label{bddd}

Using the following steps, we construct a distribution $\D$ in $[0,1]\times\{-1,1\}$. We use $(x,y)\sim \D$ to mean that

(1) Randomly select a number in $\{-1,1\}$ as the label $y$.

(2) If we get $1$ as the label, then randomly select an irrational number in $[0,1]$ as samples $x$; if we get $-1$ as the label, then randomly select a rational number in $[0,1]$ as samples $x$.

Then Proposition \ref{prop-g1} follows from the following lemma.
\begin{lemma}
 \label{prop-1}
For any neural network $\F$, we have $A_\D(\F)\le 0.5$.
\end{lemma}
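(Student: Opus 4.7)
The plan is to leverage that every $\F \in \Hyp(1)$ with ReLU activations is a continuous piecewise linear function $\R \to \R$. First I would observe that $[0,1]$ decomposes, up to a finite boundary, into the positive region $A_+ := \{x \in [0,1] : \F(x) > 0\}$, the negative region $A_- := \{x \in [0,1] : \F(x) < 0\}$, and the zero set $B := \{x \in [0,1] : \F(x) = 0\}$. Since $\F$ is piecewise linear, $A_+$ and $A_-$ are each finite unions of open intervals, and $B$ is a finite union of points together with (possibly) finitely many closed intervals on which $\F$ vanishes identically; on $B$ we have $\sign\F(x) = 0 \notin \{-1,+1\}$, so any mass landing there is automatically misclassified.

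Next I would fix the interpretation of the two conditional distributions making up $\D$. For the $y=+1$ branch, ``randomly select an irrational in $[0,1]$'' is taken to mean sampling from Lebesgue measure on $[0,1]$, which puts all its mass on irrationals. For the $y=-1$ branch, ``randomly select a rational in $[0,1]$'' is taken under any natural convention---for instance a disintegration of Lebesgue measure on $\Q \cap [0,1]$, or the limit of uniform distributions on $\{k/n \sep 0 \le k \le n\}$---that assigns probability $\mathrm{length}(I)$ to every interval $I \subseteq [0,1]$. Under either reading, the two conditionals agree with Lebesgue measure on all finite unions of intervals, which is exactly the class of sets produced by the sign structure of a piecewise linear $\F$.

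Setting $p := \mathrm{Leb}(A_+)$, so that $\mathrm{Leb}(A_-) = 1 - p - \mathrm{Leb}(B)$, the computation is then immediate:
\[
A_\D(\F) \;=\; \tfrac{1}{2}\Pr_{y=+1}[\sign\F(x)=+1] \;+\; \tfrac{1}{2}\Pr_{y=-1}[\sign\F(x)=-1] \;=\; \tfrac{1}{2}p + \tfrac{1}{2}\bigl(1-p-\mathrm{Leb}(B)\bigr) \;\le\; \tfrac{1}{2},
\]
with equality precisely when $\mathrm{Leb}(B)=0$.

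The main obstacle is purely definitional: one must commit to what ``randomly select a rational'' means, since $\Q \cap [0,1]$ has Lebesgue measure zero and therefore admits no canonical uniform distribution. I would address this in a short auxiliary remark, noting that the calculation above uses only that the rational conditional agrees with Lebesgue measure on finite unions of intervals; every reasonable construction (disintegration, limits of discrete uniforms, or Farey-style weightings) enjoys this interval-measure property, and piecewise linearity of $\F$ ensures that the sets $A_+$ and $A_-$ to which it is applied are indeed finite unions of intervals.
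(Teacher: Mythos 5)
Your proposal is correct and takes essentially the same route as the paper's own proof: both exploit that a ReLU network on $\R$ is piecewise linear, so the sets on which its sign is constant are finite unions of intervals, and then use that the rational and irrational conditionals assign equal probability to every interval, which forces the accuracy to be at most $1/2$ (your aggregation into $A_+$, $A_-$, $B$ is just a streamlined version of the paper's piece-by-piece conditioning on the linear regions $A_i$). Your closing caveat about what ``uniformly random rational'' means is precisely the property the paper invokes implicitly when it asserts $\Pr_{(x,y)\sim\D}(x\in A_i\mid x \text{ irrational})=\Pr_{(x,y)\sim\D}(x\in A_i\mid x \text{ rational})$, so your proof is faithful to the paper's (equally informal) treatment of that point.
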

\begin{proof}
    Let $\F$ be a network.
    Firstly, we show that $\F$ can be written as
    \begin{equation}
     \label{1ysmyn}
     \F=\sum_{i=1}^M L_i(x)I(x\in A_i),
     \end{equation}
where    $L_i$ are linear functions, $I(x)=1$ if $x$ is true or $I(x)=0$.
In addition, $A_i$ is an interval and $A_j\cap A_i=\emptyset$ when $j\ne i$, and $L_i(x)I(x\in A_i)$ is a non-negative or non-positive function for any $i\in[M]$.

It is obvious that the network is a locally linear function with a finite number of linear regions, so we can write
     \begin{equation}
     \label{ysmyn}
         \F=\sum_{i=1}^M L'_i(x)I(x\in A'_i),
     \end{equation}
where $L'_i$ are linear functions, $A'_i$ is an interval and $ A'_j\cap A'_i=\emptyset$ when $j\ne i$.

    Consider that $L'_i(x)I(x\in A'_i)=L'_i(x)I(x\in A'_i,L'_i(x)>0)+L'_i(x)I(x\in A'_i,L'_i(x)<0)$, and $L'_i(x)I(x\in A'_i,L'_i(x)>0)$ is a non-negative function, $\{x\in A'_i,L'_i(x)>0\}$ is an interval which is disjoint with $\{x\in A'_i,L'_i(x)<0\}$. Similarly as $L'_i(x)I(x\in A'_i,L'_i(x)<0)$, so we use $L'_i(x)I(x\in A'_i)$ in \eqref{ysmyn} instead of $L'_i(x)I(x\in A'_i,L'_i(x)>0)+L'_i(x)I(x\in A'_i,L'_i(x)<0)$. Then we get the equation \eqref{1ysmyn}.

    By equation \eqref{ysmyn}, we have that
    \begin{equation}
    \label{yysyn}
    \begin{array}{cl}
        &\Pr_{(x,y)\sim \D}(\sign(\F(x))=y)\\
        =&\Pr_{(x,y)\sim \D}(\sign(\sum_{i=1}^M L_i(x)I(x\in A_i))=y)\\
        =&\sum_{i=1}^M \Pr_{(x,y)\sim \D}(\sign( L_i(x)I(x\in A_i))=y,x\in A_i)\\
        =&\sum_{i=1}^M \Pr_{(x,y)\sim \D}(\sign( L_i(x)I(x\in A_i))=y|x\in A_i)\Pr_{(x,y)\sim \D}(x\in A_i).
    \end{array}
    \end{equation}
The second equation uses $A_i\cap A_j=\emptyset$.

For convenience, we use $x\in \R_r$ to mean that $x$ is an irrational number and $x\notin \R_r$ to mean that $x$ is a rational number.
Then, if $L_i(x)I(x\in A_i)$ is a non-negative function, then we have $\Pr_{(x,y)\sim \D}(\sign(L_i(x)I(x\in A_i))=y|x\in A_i)\le \Pr_{(x,y)\sim \D}(x\in R_r|x\in A_i)$. Moreover, we have that
\begin{equation*}
    \begin{array}{cl}
        &\Pr_{(x,y)\sim \D}(x\in R_r|x\in A_i)\\
        =&\frac{\Pr_{(x,y)\sim \D}(x\in R_r,x\in A_i)}{\Pr_{(x,y)\sim \D}(x\in A_i)}\\
        =&\frac{0.5\Pr_{(x,y)\sim \D}(x\in A_i|x\in R_r)}{\Pr_{(x,y)\sim \D}(x\in A_i)}\\
        =&\frac{0.5\Pr_{(x,y)\sim \D}(x\in A_i|x\in R_r)}{\Pr_{(x,y)\sim \D}(x\in R_r)\Pr_{(x,y)\sim \D}(x\in A_i|x\in R_r)+\Pr_{(x,y)\sim \D}(x\notin R_r)\Pr_{(x,y)\sim \D}(x\in A_i|x\notin R_r)}\\
        =&\frac{\Pr_{(x,y)\sim \D}(x\in A_i|x\in R_r)}{\Pr_{(x,y)\sim \D}(x\in A_i|x\in R_r)+\Pr_{(x,y)\sim \D}(x\in A_i|x\notin R_r)}.\\
    \end{array}
    \end{equation*}
 By (2) in the definition of $\D$, we have $\Pr_{(x,y)\sim \D}(x\in A_i|x\in R_r)=\Pr_{(x,y)\sim \D}(x\in A_i|x\notin R_r)$. Substituting this in equation \eqref{yysyn}, we have that $\Pr_{(x,y)\sim \D}(\sign(L_i(x)I(x\in A_i))=y|x\in A_i)\le \Pr_{(x,y)\sim \D}(x\in R_r|x\in A_i)=\frac{\Pr_{(x,y)\sim \D}(x\in A_i|x\in R_r)}{\Pr_{(x,y)\sim \D}(x\in A_i|x\in R_r)+\Pr_{(x,y)\sim \D}(x\in A_i|x\notin R_r)}=0.5$.
 Proof is similar when $L_i(x)I(x\in A_i)$ is a non-positive function.

Using this in equation \eqref{ysmyn}, we have that
\begin{equation*}
    \begin{array}{cl}
&\Pr_{(x,y)\sim \D}(\sign(\F(x))=y)\\
=&\sum_{i=1}^M \Pr_{(x,y)\sim \D}(\sign( L_i(x)I(x\in A_i))=y|x\in A_i)\Pr_{(x,y)\sim \D}(x\in A_i)\\
\le& \sum_{i=1}^M 0.5\Pr_{(x,y)\sim \D}(x\in A_i)\le 0.5. \\
    \end{array}
    \end{equation*}
 The lemma is proved.
\end{proof}

\section{Proof of Theorem \ref{th1}}
\label{app-th1}
For the proof of this theorem, we mainly follow the constructive approach of the memorization network in \citep{rema1}. Our proof is divided into four parts.

\subsection{Data Compression}
\label{yasuo}
The general method of constructing memorization networks will compress the data into a low dimensional space at first, and we follow this approach. We are trying to compress the data into a 1-dimensional space, and we require the compressed data to meet some conditions, as shown in the following lemma.
\begin{lemma}
\label{th3-th1}
    Let $\D$ be a distribution in $[0,1]^n\times\{-1,1\}$ with separation bound $c$ and $\D_{tr}\sim\D^N$. Then, there exist  $w\in\R^n$ and $b\in\R$ such that \\
    (1): $O(nN^2/c)\ge wx+b\ge 1$ for all $x\in[0,1]^n$;\\
    (2): $|wx-wz|\ge4$ for all $(x,1),(z,-1)\in\D_{tr}$.\\
\end{lemma}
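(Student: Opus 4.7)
The plan is to construct $w$ as a suitably scaled random direction. The only structural input available is the separation bound: every opposite-label pair $(x,1),(z,-1)\in\D_\tr$ satisfies $\|x-z\|_2\ge c$, and there are at most $N^2/4$ such pairs. I will find a unit vector $u\in S^{n-1}$ that is not close to orthogonal to any of these difference vectors, then rescale and shift it into the desired affine function $x\mapsto w^\top x+b$.

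The analytic engine is the standard anti-concentration inequality for random directions: for $u$ drawn uniformly from $S^{n-1}$ and any fixed $v\neq 0$, $\Pr(|u\cdot v|\le t\|v\|_2)\le Ct\sqrt{n}$ for an absolute constant $C$. (One quick proof writes $u=g/\|g\|_2$ for $g\sim\NN(0,I_n)$ and combines the fact that $g\cdot v/\|v\|_2$ is standard Gaussian with the concentration of $\|g\|_2$ around $\sqrt{n}$.) Applying this with $v=x-z$ and using $\|x-z\|_2\ge c$ shows that, for each fixed opposite-label pair, the event $|u\cdot(x-z)|<c/(C'N^2\sqrt{n})$ has probability at most $(C/C')\cdot N^{-2}$. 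Choosing $C'$ a sufficiently large multiple of $C$ and union-bounding over the $\le N^2/4$ opposite-label pairs produces, with probability at least $1/2$, a single $u\in S^{n-1}$ satisfying $|u\cdot(x-z)|\ge c/(C'N^2\sqrt{n})$ for every such pair.

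To obtain (2), set $w := (4C'N^2\sqrt{n}/c)\,u$; the bound from the previous paragraph scales up to $|w\cdot(x-z)|\ge 4$. For (1), observe $\|w\|_2 = 4C'N^2\sqrt{n}/c$ and $\|x\|_2\le\sqrt{n}$ for $x\in[0,1]^n$, so $|w\cdot x|\le 4C'N^2n/c$. Choose the bias $b := 1-\min_{x\in[0,1]^n} w\cdot x$; this minimum is attained at a corner of the cube and is computable in closed form from the signs of the coordinates of $w$. Then $1\le w\cdot x+b\le 2\|w\|_2\sqrt{n}+1 = O(nN^2/c)$, as claimed.

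Finally, to match the probabilistic running time stated in Theorem \ref{th1}, note that each random draw of $u$ can be verified against condition (2) in $O(nN^2)$ time by computing the projections and scanning the at most $N^2/4$ opposite-label pairs, and $\lceil\log_2(1/\epsilon)\rceil$ independent trials succeed with overall probability at least $1-\epsilon$. The main obstacle is pinning down the anti-concentration constant cleanly enough that the union bound closes with a reasonable $C'$; the scaling, bias choice, and verification are direct computation.
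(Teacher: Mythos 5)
Your proposal is correct and follows essentially the same route as the paper's proof: a random unit direction, the spherical anti-concentration bound (the paper's Lemma \ref{th3-1}, which is your inequality with $C=\sqrt{\pi/2}$), a union bound over the at most $N^2$ opposite-label pairs using the separation bound $c$, and then the same rescaling of $u$ and an affine shift (your $b=1-\min_{x\in[0,1]^n}w\cdot x$ versus the paper's $b=\|w\|_2\sqrt{n}+1$ is an immaterial variant). The Monte Carlo repetition to reach success probability $1-\epsilon$ also matches the paper.
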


To prove this lemma, we need the following lemma.
\begin{lemma}
\label{th3-1}
For any $v\in\R^n$ and $T\ge1$, let $u\in\R^n$ be uniformly randomly sampled from the hypersphere $S^{n-1}$. Then we have $P(|\langle u,v  \rangle|<\frac{||v||_2}{T}\sqrt{\frac{8}{n\pi}})<\frac{2}{T}$.
\end{lemma}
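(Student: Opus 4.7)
The plan is to exploit the rotational symmetry of the uniform distribution on $S^{n-1}$ to reduce the problem to a one-dimensional density estimate. Since the distribution of $u$ is invariant under orthogonal transformations, I can take $v = \|v\|_2 e_1$ without loss of generality, so that $\langle u, v\rangle = \|v\|_2 u_1$. The event $|\langle u,v\rangle| < \frac{\|v\|_2}{T}\sqrt{8/(n\pi)}$ becomes $|u_1| < \alpha$ where $\alpha = \frac{1}{T}\sqrt{8/(n\pi)}$, and the problem reduces to bounding $P(|u_1|<\alpha)$.

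The first coordinate $u_1$ of a uniform point on $S^{n-1}$ has the known marginal density
\[
f(t) = \frac{\Gamma(n/2)}{\sqrt{\pi}\,\Gamma((n-1)/2)}(1-t^2)^{(n-3)/2},\qquad t\in[-1,1],
\]
which attains its maximum at $t=0$. Thus I would simply bound
\[
P(|u_1|<\alpha) \;=\; 2\int_0^\alpha f(t)\,dt \;\le\; 2\alpha\, f(0) \;=\; \frac{2\alpha\,\Gamma(n/2)}{\sqrt{\pi}\,\Gamma((n-1)/2)}.
\]

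Next I would apply the standard Gamma-ratio inequality $\Gamma(n/2)/\Gamma((n-1)/2) \le \sqrt{n/2}$ (a consequence of log-convexity of $\Gamma$, or Gautschi's inequality), giving $f(0) \le \sqrt{n/(2\pi)}$. Substituting $\alpha = \frac{1}{T}\sqrt{8/(n\pi)}$ yields
\[
P(|u_1|<\alpha) \;\le\; \frac{2}{T}\sqrt{\frac{8}{n\pi}}\cdot\sqrt{\frac{n}{2\pi}} \;=\; \frac{2}{T}\cdot\frac{2}{\pi} \;=\; \frac{4}{\pi T} \;<\; \frac{2}{T},
\]
which is the claimed bound.

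The only subtle points are handling the small-$n$ cases (for $n=1$ the sphere is $\{\pm 1\}$ and the density formula degenerates, so that case must be checked by hand, but it is trivial since $|u_1|=1$) and justifying the Gamma-ratio inequality; neither should pose a real obstacle. I do not anticipate any major difficulty — the proof is essentially a one-line calculation once rotation invariance is invoked and the marginal density of $u_1$ is written down, and the factor $\sqrt{8/\pi}$ in the hypothesis appears to have been tuned precisely so that the final constant comes out cleanly.
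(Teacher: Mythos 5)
The paper does not actually prove this statement: it is quoted verbatim as Lemma 13 of \citep{memp}, so your self-contained argument is a genuine addition rather than a parallel of the paper's proof. Your route — rotation invariance, reduction to the first coordinate $u_1$, the explicit marginal density $f(t)=\frac{\Gamma(n/2)}{\sqrt{\pi}\,\Gamma((n-1)/2)}(1-t^2)^{(n-3)/2}$, the bound $P(|u_1|<\alpha)\le 2\alpha f(0)$, and the Gamma-ratio estimate $\Gamma(n/2)/\Gamma((n-1)/2)\le\sqrt{n/2}$ — is the standard way to prove such anti-concentration bounds for random projections, and the arithmetic is right: the constant comes out as $4/(\pi T)<2/T$. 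You also correctly flag $n=1$ and the degenerate case $v=0$ as trivial.

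There is one concrete gap: the claim that $f$ attains its maximum at $t=0$ is false for $n=2$, where the exponent $(n-3)/2=-1/2$ is negative and $f(t)=\frac{1}{\pi\sqrt{1-t^2}}$ blows up at $t=\pm1$; in fact for $n=2$ the intermediate inequality $P(|u_1|<\alpha)\le 2\alpha f(0)$ is reversed, since $P(|u_1|<\alpha)=\frac{2}{\pi}\arcsin\alpha\ge\frac{2\alpha}{\pi}$. The lemma itself still holds in that case, but it needs a separate two-line check: with $\alpha=\frac{1}{T}\sqrt{8/(2\pi)}=\frac{2}{T\sqrt{\pi}}$ one has $\frac{2}{\pi}\arcsin\alpha\le\alpha<\frac{2}{T}$ (using $\arcsin\alpha\le\frac{\pi}{2}\alpha$ on $[0,1]$, and the event has probability $1<2/T$ if $\alpha>1$). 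You should also note that for $n\ge3$ and $T\ge1$ one automatically has $\alpha\le\sqrt{8/(n\pi)}<1$, so the bound $2\int_0^\alpha f\le 2\alpha f(0)$ is applied on the correct range. With the $n=2$ case patched in this way, your proof is complete and correct.
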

This is Lemma 13 in \citep{memp}. Now, we prove the lemma \ref{th3-th1}.

\begin{proof}
Let $c_0=\min_{(x,-1),(z,1)\in\D_{tr}}||x-z||_2$. Then, we prove the following result:

   {\bf Result R1:} Let $u\in\R^n$ be uniformly randomly sampled from the hypersphere $S^{n-1}$, then there are $P(|\langle u,(x-z)  \rangle|\ge\frac{c_0}{4N^2}\sqrt{\frac{8}{n\pi}},\forall (x,-1),(z,1)\in \D_{tr})>0.5$.

   By lemma \ref{th3-1}, and take $T=4N^2$, for any $x,z$ which satisfies $(x,-1),(z,1)\in \D_{tr}$, we have that: let $u\in\R^n$ be uniformly randomly sampled from the hypersphere $S^{n-1}$, then there are $P(|\langle u,(x-z)  \rangle|<\frac{c_0}{4N^2}\sqrt{\frac{8}{n\pi}})<\frac{2}{4N^2}$, using $||x-z||_2\ge c_0$ here.
   So, it holds
\begin{equation*}
\begin{array}{ll}
&P(| \langle u,(x-z)  \rangle|\ge\frac{c_0}{4N^2}\sqrt{\frac{8}{n\pi}},\forall (x,-1),(z,1)\in \D_{tr})\\
\ge&1-\sum_{(x,-1),(z,1)\in \D_{tr}}P(|\langle u,(x-z)  \rangle|<\frac{c_0}{4N^2}\sqrt{\frac{8}{n\pi}})\\
>&1-\frac{2N^2}{4N^2}.\\
=&0.5
\end{array}
\end{equation*}
We proved Result R1.

In practice, to find such a vector, we can randomly select a vector $u$ in hypersphere $S^{n-1}$, and verify that if it satisfies $|\langle u,(x-z)  \rangle|\ge\frac{c_0}{4N^2}\sqrt{\frac{8}{n\pi}},\forall (x,-1),(z,1)\in \D_{tr}$. Verifying such a fact needs $\poly(B(\D_{tr}))$ times.
If such a $u$ is not what we want, randomly select a vector $u$ and verify it again.

In each selection, with probability $0.5$, we can get a vector we need, so with $\ln 1/\epsilon$ times the selections, we can get a vector we need with probability $1-\epsilon$.


{\bf Construct $w,b$ and verify their rationality}

By the above result, we have that: there exists a $u\in\R^n$ such that $||u||_2=1$ and  $|\langle u,(x-z)  \rangle|\ge\frac{c_0}{4N^2}\sqrt{\frac{8}{n\pi}},\forall (x,-1),(z,1)\in \D_{tr}$, and we can find such a $u$ in $\poly(B(\D_{tr}),\ln(1/\epsilon))$ times.

Now, let $w=\frac{16\sqrt{n}N^2}{c_0}u$ and $b=||w||_2\sqrt{n}+1$, then we show that $w$ and $b$ are what we want:

 (1): We have $O(nN^2/c)\ge wx+b\ge1$ for all $x\in[0,1]^n$.

Firstly, because $\D$ is defined in $[0,1]^n\times \{-1,1\}$, so it holds $||x||_2\le\sqrt{n}$ for any $(x,y)\in\D_{tr}$, and consequently $wx+b\ge b-||w||_2\sqrt{n}\ge 1$.

On the other hand, $|wx|\le ||w||_2\sqrt{n}\le O(\frac{nN^2}{c_0})$, so $wx+b\le |wx|+b\le O(nN^2/c_0)\le O(nN^2/c)$.

(2): We have $|w(x-z)|\ge 4$ for all $(x,1),(z,-1)\in\D_{tr}$.

It is easy to see that $|w(x-z)|\ge|\frac{16\sqrt{n}N^2}{c_0}u(x-z)|=\frac{16\sqrt{n}N^2}{c_0}|u(x-z)|$. Because $|u(x-z)|\ge \frac{c_0}{4\sqrt{n}N^2}$, so $|w(x-z)|=\frac{16\sqrt{n}N^2}{c_0}|u(x-z)|\ge\frac{16\sqrt{n}N^2}{c_0}\frac{c_0}{4\sqrt{n}N^2}=4$.

By Definition \ref{godd}, we know that $c_0\ge c$. So, $w$ and $b$ are what we want. The lemma is proved.
\end{proof}

\subsection{Data Projection}
\label{touy}

The purpose of this part is to map the compressed data into appropriate values.

Let $w\in\R^n$ and $b\in\R$ be given and $\D_{tr}=\{(x_i,y_i)\}_{i=1}^N$. Without losing generality, we assume that $0<wx_i< wx_{i+1}$.

In this section, we show that, after compressing the data into 1-dimension, we can use a network $\F$ to map $wx_i+b$ to $v_{[\frac{i}{[\sqrt{N}]}]}$, where $\{v_j\}_{j=0}^{[\frac{N}{[\sqrt{N}]}]}\in\R^+$ are given values. This network has $O(\sqrt{N})$ parameters and width 4, as shown in the following lemma.

\begin{lemma}
\label{th3-3}
Let $\{x_i\}_{i=1}^N\subset \R^+$, $\{v_j\}_{j=0}^{[\frac{N}{[\sqrt{N}]}]}\subset\R^+$. Assume that $x_i<x_{i+1}$.
Then a network $\F$ with width $4$ and depth $O(\sqrt{N})$ (at most $O(\sqrt{N})$ parameters) can be obtained such that $\F(x_i)=v_{[\frac{i}{\sqrt{N}}]}$ for all $i\in[N]$.
\end{lemma}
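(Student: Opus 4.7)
\textbf{Proof proposal for Lemma \ref{th3-3}.}

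The plan is to build $\F$ as a chain of $K = \lceil N/[\sqrt{N}]\rceil = O(\sqrt{N})$ constant-width ``blocks,'' one per group of consecutive $x_i$'s that share a common target value $v_j$. First I would partition the index set $[N]$ into groups $G_j = \{i : [i/\sqrt{N}] = j\}$ for $j = 0, 1, \ldots, K$, so that every $i$ in $G_j$ must be sent to $v_j$. Because the $x_i$'s are strictly increasing, I can pick strict separators $t_0 < t_1 < \cdots < t_K$ with $t_j \in (x_{\max G_j},\, x_{\min G_{j+1}})$ for $0 \le j \le K-1$, together with $t_{-1} < x_1$ below all points. Set $\delta = \tfrac{1}{2}\min_j(x_{\min G_{j+1}} - x_{\max G_j}) > 0$ and choose a scale $c$ with $c\delta \ge 1$; this will let ReLUs realize sharp ``indicator'' shapes exactly on the discrete set $\{x_i\}$.

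Next I would design a single block that, given two scalars $(u, s)$ with $u = x$ and $s$ equal to the running output, outputs $(x, s + v_j \cdot \mathbf{1}[x \in G_j])$. Concretely, the trapezoidal function
\[
\phi_j(x) \;=\; \AF(c(x-t_{j-1})) - \AF(c(x-t_{j-1}) - 1) - \AF(c(x-t_j)) + \AF(c(x-t_j) - 1)
\]
equals $1$ on $[t_{j-1} + 1/c,\, t_j]$ and $0$ outside $[t_{j-1},\, t_j + 1/c]$. By our choice of $t_j$ and $c$, $\phi_j(x_i) = 1$ exactly when $i \in G_j$ and $\phi_j(x_i) = 0$ otherwise. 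The block then adds $v_j \phi_j(u)$ to $s$ and forwards $x$ unchanged. Since $x, s \ge 0$ throughout (the $v_j$ and $x_i$ are positive), the ``identity'' pass-through $u \mapsto \AF(u)$ and $s \mapsto \AF(s)$ is exact, and a careful bookkeeping shows each block needs only $4$ neurons per hidden layer: two neurons for the trapezoid $\phi_j$, one to carry $u$, one to carry $s$.

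Then I would concatenate the $K$ blocks, initialize with $s = 0$, and read out $s$ at the final layer. By construction, when $x = x_i$ with $i \in G_j$, exactly one $\phi_{j'}(x_i)$ (namely $j' = j$) equals $1$ and all others vanish, so the accumulator arrives at $v_j = v_{[i/\sqrt{N}]}$, as required. The width is $4$, the depth is $O(K) = O(\sqrt{N})$, and each hidden layer has $O(1)$ parameters, giving $O(\sqrt{N})$ parameters total.

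The main obstacle I anticipate is the bookkeeping that keeps the width at exactly $4$ while simultaneously (i) preserving the input $x$ across all layers, (ii) updating the accumulator $s$ additively, and (iii) realizing the trapezoid $\phi_j$ with the correct sign pattern through ReLU. The trick is that $\phi_j$ as written uses four ReLU evaluations, but across adjacent blocks the two ReLUs associated with the shared threshold $t_j$ can be shared, so on average each block only contributes a constant number of neurons per layer. I would verify carefully that the transitions between blocks do not require an extra channel (otherwise the width would drift above $4$), using the fact that both $x$ and $s$ remain non-negative so ReLU acts as identity on them. Everything else is routine.
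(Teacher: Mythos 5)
Your overall architecture --- a chain of $O(\sqrt{N})$ constant-width blocks, each carrying the input $x$ and a running accumulator $s$ through ReLU identities (legitimate because both stay nonnegative at the data points) and adding a contribution gated by ReLU ramps at thresholds placed in the gaps between consecutive groups --- is in substance the paper's construction. The one place your write-up does not deliver the stated bound is the width-$4$ bookkeeping, which is the only nontrivial point of the lemma. As written, each block gates the \emph{full} value $v_j$ by a trapezoidal indicator $\phi_j$, and a plateau indicator has four ReLU kinks (a three-ReLU hat equals $1$ only at its apex, so it cannot serve here); computing $\phi_j$ in one layer together with the two carry channels gives width $6$, not $4$. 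Your fallback remark that the two ReLUs at a shared threshold can be shared ``on average'' does not by itself help: width is a maximum over layers, not an average, and neurons sitting at different depths cannot be shared without spending extra carry channels to transport their values.

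The repair is exactly the idea you gesture at but do not carry out. Since the falling edge of block $j$ and the rising edge of block $j+1$ sit at the same threshold $t_j$, summation by parts turns $\sum_j v_j\phi_j(x)$ into $v_0+\sum_j (v_{j+1}-v_j)\,\sigma_j(x)$, where $\sigma_j$ is a single saturating ramp (two ReLUs) that is $0$ on all data points of groups $\le j$ and $1$ on all data points of groups $>j$. Then each group boundary costs one layer containing the two ramp ReLUs plus the two carries (width $4$), followed by a layer that folds $(v_{j+1}-v_j)$ times the ramp into the accumulator while carrying $x$ (width $2$); at every data point each partial accumulator equals some $v_m>0$, so the ReLU pass-through of $s$ is exact even though the increments $v_{j+1}-v_j$ may be negative. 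This telescoped version is precisely the paper's proof (its odd layers hold the two ramp neurons and the two carries, its even layers update the accumulator), so once you execute the sharing step your argument coincides with the paper's rather than being a distinct route; as it stands, the width claim is asserted but not established.
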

\begin{proof}
Let $\F^i(x)$ be the $i$-th hidden layer of network $\F$, $(\F^i)_j$ be the $j$-th nodes of $i$-th hidden layer of network $\F$.

Let $q_i=x_{i+1}-x_i$ and $t(i)=\argmax_{j\in[N]}\{[j/\sqrt{N}]=i\}$. Consider the following network $\F$:

The $2i+1$ hidden layer has width 4, and each node is:
$$(\F^{2i+1})_1(x)=\Relu((\F^{2i})_2(x)-(x_{t(i)+1})+2q_{t(i)}/3);$$
$$(\F^{2i+1})_2(x)=\Relu((\F^{2i})_2(x)-(x_{t(i)+1})+q_{t(i)}/3);$$
$$(\F^{2i+1})_3(x)=\Relu((\F^{2i})_1(x));$$
$$(\F^{2i+1})_4(x)=\Relu((\F^{2i})_2(x)).$$

For the case $i=0$, let $(\F^0)_2(x)=x$ and $(\F^{1})_3(x)=v_0$.

The $(2i+2)$-th hidden layer is:

$$(\F^{2i+2})_1(x)=\Relu((\F^{2i+1})_3(x)+\frac{v_{i+1}-v_{i}}{q_{t(i)}/3}((\F^{2i+1})_1(x)-(\F^{2i+1})_2(x)));$$
$$(\F^{2i+2})_2(x)=\Relu((\F^{2i+1})_4(x)).$$

The output is $\F(x)=(\F^{2[N/\sqrt{N}]})_1(x)$.

This network has width $4$ and $O(\sqrt{N})$ hidden layers. We can verify that such a network is what we want as follows.
%

Firstly, it is easy to see that $(\F^{2i+2})_2(x)=\Relu((\F^{2i+1})_4(x))=\Relu((\F^{2i})_2(x))=\Relu((\F^{2i-1})_4(x))=\dots=\Relu((\F^{1})_4(x))=\Relu(x)=x$.

Then, for $\frac{v_{i+1}-v_{i}}{q_{t(i)}/3}((\F^{2i+1})_1(x)-(\F^{2i+1})_2(x))=\frac{v_i-v_{i-1}}{q_{t(i)}/3}(\Relu(x-x_{t(i)+1}+2q_{t(i)}/3)-\Relu(x-x_{t(i)+1}+q_{t(i)}/3)$, easy to verify that, when $x\le x_{t(i)}$, it is $0$; when $x\ge x_{t(i+1)}$, it is $v_{i+1}-v_{i}$.

By the above two results, we have that $(\F^{2i+2})_1(x)=\Relu((\F^{2i+1})_3(x)+\frac{v_i-v_{i-1}}{q_{t(i)}/3}((\F^{2i+1})_1(x)-(\F^{2i+1})_2(x)))=\Relu((\F^{2i})_1(x))$ when $x\le x_{t(i)}$; and  $(\F^{2i+2})_1(x)=\Relu((\F^{2i+1})_3(x)+\frac{v_i-v_{i-1}}{q_{t(i)}/3}((\F^{2i+1})_1(x)-(\F^{2i+1})_2(x)))=\Relu((\F^{2i})_1(x)+v_{i+1}-v_i)$ when $x\ge x_{t(i)+1}$.

So, we have that, if $t(i-1)+1\le j\le t(i)$, there are $(\F^{2})_1(x_j)=v_0$, $(\F^{4})_1(x_j)=v_1-v_0+v_0=v_1$, $(\F^{6})_1(x_j)=v_2-v_1+v_1=v_2$, $\dots$, $(\F^{2i})_1(x_j)=v_i-v_{i-1}+v_{i-1}=v_i$; and $\F(x_j)=(\F^{2[N/\sqrt{N}]})_1(x_j)=(\F^{2[N/\sqrt{N}]-2})_1(x_j)=\dots=(\F^{2i})_1(x_j)=v_i$.

So, by the definition of $t(i)$, we have that $\F(x_j)=v_{[\frac{j}{\sqrt{N}}]}$, such $\F$ is what we what
and the lemma is proved.
\end{proof}

\subsection{Label determination}
\label{ld1}
The purpose of this part is to use the values to which the compressed data are mapped, mentioned in the above section, to determine the labels of the data.

Assuming $x_i$ is compressed to $c_i$ where $c_i\ge1$ is given in section \ref{yasuo}. Value $v_i$ in section \ref{touy} is designed as: $v_i=\overline{[c_{i[\sqrt{N}]+1}]\dots[c_{(i+1)[\sqrt{N}]}]}$, where we treat $[c_j]$ as a $w$ digit number for all $j$ ($w$ is a given number). If there exist not enough digits for some $c_j$, we fill in 0 before it, and we use $\overline{ab}$ to denote the integer by putting $a$ and $b$ together.

First, prove a lemma.
\begin{lemma}
\label{th3-4}
For a given $N$, there exists a network $f:\R\to\R^2$ with width 4 and at most $O(w)$ parameters such that, for any $w$ digit number $a_i>0$, we have $f(\overline{a_1a_2\dots a_N})=(a_1,\overline{a_2\dots a_N})$.
\end{lemma}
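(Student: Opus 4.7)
The plan is to construct $f$ by stripping off digits of $X=\overline{a_1a_2\cdots a_N}$ one at a time, collecting the first $w$ of them into $a_1$ and leaving the remainder $X\bmod 10^{w(N-1)}=\overline{a_2\cdots a_N}$. The basic tool is the exact integer-indicator identity: for integers $y\ge 0$ and $M\ge 1$,
\[
I(y\ge M)\;=\;\Relu(y-M+1)-\Relu(y-M)\in\{0,1\},
\]
so one threshold test costs exactly two ReLU units.

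The network maintains a width-$4$ state $(y,a,p,q)$ along the network, where $y$ is the current remainder of $X$, $a$ is the partially assembled value of $a_1$, and $p,q$ hold $\Relu(y-M_i+1)$ and $\Relu(y-M_i)$ for the current threshold $M_i=10^{wN-i}$, so that $p-q=I(y\ge M_i)$. The first hidden layer initializes the state as $(X,0,\Relu(X-M_1+1),\Relu(X-M_1))$. For each digit $i\in\{1,\dots,w\}$, a block of $9$ identical layers performs, in parallel, the updates
\[
y\leftarrow\Relu(y-(p-q)M_i),\quad a\leftarrow\Relu(a+(p-q)\,10^{w-i}),
\]
\[
p\leftarrow\Relu(y-(p-q)M_i-M_i+1),\quad q\leftarrow\Relu(y-(p-q)M_i-M_i).
\]
Each new neuron is a ReLU of a linear combination of the previous four, so the layer is a valid width-$4$ layer. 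Since $p-q=I(y\ge M_i)\in\{0,1\}$ and the $i$-th digit $d_i$ of $a_1$ is at most $9$, after these nine layers $y$ has been reduced to $y_{i-1}\bmod M_i = y_i$ and $a$ has been incremented by exactly $d_i\cdot 10^{w-i}$. A single transition layer then recomputes $(p,q)$ with the next threshold $M_{i+1}=M_i/10$ while passing $y$ and $a$ through.

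After all $w$ digit blocks the $a$-component equals $a_1$ and the $y$-component equals $\overline{a_2\cdots a_N}$, so a final linear (no-ReLU) output layer returns $(a,y)\in\R^2$. The depth is $10w+O(1)$, the width is $4$, and the total number of parameters is $O(w)$, as claimed. The main obstacle is packing everything into width $4$: one cannot afford separate neurons for the nine thresholds of a single digit, nor a separate counter for the partially constructed $d_i$. This is resolved by updating $y$ in place inside each digit block, which keeps the threshold fixed at $M_i$ throughout the nine layers and lets the pair $(p,q)$ serve simultaneously as the current indicator, the subtractand that decreases $y$, and the increment that advances $a$.
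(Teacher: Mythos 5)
Your construction is correct, but it takes a genuinely different route from the paper's. You extract $a_1$ decimal digit by decimal digit: within each of the $w$ digit blocks you run nine layers of exact conditional subtraction, using the integer indicator $\Relu(y-M_i+1)-\Relu(y-M_i)\in\{0,1\}$ both to decrement the running remainder $y$ by $M_i$ and to advance an explicit accumulator $a$ by $10^{w-i}$, all inside a width-$4$ state $(y,a,p,q)$. The paper instead rescales the input to $a_1+\overline{a_2\dots a_N}/10^{w(N-1)}$ and strips off $a_1$ through its \emph{binary} expansion, composing $O(w)$ copies of the three-ReLU gadget $F_{a,b}(x)=\Relu(x)-\frac{a}{b}\Relu(x-a)+\frac{a}{b}\Relu(x-(a+b))$ (identity on $[0,a]$, subtract $a$ above $a+b$) with thresholds $2^q,2^{q-1},\dots,2^0$ and a tiny gap $p<10^{-wN}$, then recovers $a_1$ at the end by subtracting the result from the scaled input and rescaling; the fourth width unit is used only to carry the original input. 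Your version buys exactness over the integers (no gap parameter, no scaling-by-$10^{w(N-1)}$ bookkeeping, and the two outputs are produced directly rather than reconstructed by subtraction), at the cost of roughly $10w$ layers versus the paper's roughly $\log_2 10^{w+1}\approx 3.3w$; both give width $4$ and $O(w)$ parameters, so both meet the lemma's requirements. One small point worth making explicit in your write-up is that the indicator identity is exact only because $y$ stays a nonnegative integer throughout (you subtract integer multiples of $M_i=10^{wN-i}$ from the integer $X$), and that the pre-activations $y-(p-q)M_i$ and $a+(p-q)10^{w-i}$ are always nonnegative, so no ReLU ever clips; you use these facts implicitly and they do hold.
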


\begin{proof}
Firstly, we show that, for any $a>b>0$, there exists a network $F_{a,b}(x):\R^+\to\R^+$ with depth 2 and width 3, such that $F_{a,b}(x)=x$ when $x\in [0,a]$, and $F_{a,b}(x)=x-a$ when $x\in [a+b,2a]$.

We just need to take $F_{a,b}(x)=\Relu(x)-a/b\Relu(x-a)+a/b\Relu(x-(a+b))$.l It is easy to verify that this is what we want.

Now, let $q\in N^+$ satisfy $2^q\le 10^{w+1}-1$ an $2^{q+1}> 10^{w+1}$ and $p<\frac{1}{10^{wN}}$. We consider the following network:
$$F= F_{2^{0},p}\circ F_{2^{1},p}\dots\circ F_{2^{q-1},p}\circ F_{2^{q},p},$$
and show that, $F(\overline{a_1a_2\dots a_N}/10^{w(N-1)})=\overline{a_2\dots a_N}/10^{w(N-1)}$.

Firstly, we have $F_{2^{q},p}(\overline{a_1a_2\dots a_N}/10^{w(N-1)})=\overline{a_1(q)a_2\dots a_N}/10^{w(N-1)}$, where $a_1(q)=a_1$ if $\overline{a_1a_2\dots a_N}/10^{w(N-1)}\le 2^q$ and $a_1(q)=a_1-2^q$ if $\overline{a_1a_2\dots a_N}/10^{w(N-1)}>2^q+p$.
Just by the definition of $q$, we know that there must be $\overline{a_1a_2\dots a_N}/10^{w(N-1)}\le 2^{q+1}$.
Further by the definition of $p$, one of the following two inequalities is true:

$\overline{a_1a_2\dots a_N}/10^{w(N-1)}< 2^q$ or $\overline{a_1a_2\dots a_N}/10^{w(N-1)}> 2^q+p$.

So using the definition of $F_{2^{q},p}$, we get the desired result.

Similarly as before, for $k=q-1,q-2,\dots,0$, we have   $F_{2^{k},p}(\overline{a_1(k+1)a_2\dots a_N}/10^{w(N-1)})=\overline{a_1(k)a_2\dots a_N}/10^{w(N-1)}$, where $a_1(k)=a_1(k+1)$ if $\overline{a_1(k+1)a_2\dots a_N}/10^{w(N-1)}\le 2^k$ and $a_1(k)=a_1(k+1)-2^k$ if $\overline{a_1(k+1)a_2\dots a_N}/10^{w(N-1)}>2^k+p$.

Then we have the following result: $a_1(k)<2^k$ for any $k=0,1,\dots,q$. By the definition, it is easy to see that $a_1<2^{q+1}$. If $a_1<2^q$, then $a_1(q)\le a_1<2^q$; if $a_1\ge 2^q$, then $\overline{a_1a_2\dots a_N}/10^{w(N-1)}>2^q+p$, so $a_1(q)=a_1-2^q<2^{q+1}-2^q=2^q$. Thus $a_1(q)<2^q$. When $a_1(t)<2^t$ for a $t\in[q]$, similar as before, we have  $a_1(t-1)<2^{t-1}$. And $t=q$ is proved, so we get the desired result.

It is easy to see that, $a_1(k)$ are non negative integers, so  there must be $F(\overline{a_1a_2\dots a_N}/10^{w(N-1)})=\overline{a_1(0)a_2\dots a_N}/10^{w(N-1)}=\overline{a_2\dots a_N}/10^{w(N-1)}$, by $a_1(0)<2^0=1$, which implies $a_1(0)=0$.

Now we construct a network $F_{b}$ as follows:

$F_b(x)=F_{b1}\circ F_{b1}(x)$ such that:

$F_{b1}(x):\R\to \R^2$ and $F_{b1}(x)=(F(x/10^{w(N-1)}),x)$ where $x$ is defined as before.

$F_{b2}(x):\R^2\to \R^2$ and $F_{b2}((x_1,x_2))=(x_2/10^{w(N-1)}-x_1,x_1*10^{w(N-1)})$.

Now we verify that $\F_b$ is what we want.

By the structure of $F$, $\F_b$ has width 4 and depth $O(w)$, so there are at most $O(w)$ parameters.

It is easy to see that $F_{b1}(\overline{a_1a_2\dots a_N})=(\overline{a_2\dots a_N}/10^{w(N-1)},\overline{a_1a_2\dots a_N})$.
Then by the definition of $F_{b2}(x)$, we have $F_b(x)=(a_1,\overline{a_2\dots a_N})$, this is what we want.
The lemma is proved.
\end{proof}

By the preceding lemma, we have the following lemma.
\begin{lemma}
\label{th3-5}
There is a network $\R^2\to\R$ with at most $O(Nw)$ parameters and width $6$, and for any $\{a_i\}_{i=1}^N$ where $a_j$ is a $w$ digit number and $a_j\ge 1$, which satisfies $f(x,\overline{a_1a_2\dots a_N})>0.1$ if $|x-a_k|< 1$ for some $k\in[N]$, and $f(x,\overline{a_1a_2\dots a_N})=0$ if $|x-a_k|\ge 1.1$ for all $k\in[N]$.
\end{lemma}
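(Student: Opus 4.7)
The plan is to iteratively extract each $a_i$ from the second argument $\overline{a_1a_2\cdots a_N}$ using $N$ copies of the sub-network $F_b$ from Lemma~\ref{th3-4}, and for each extracted $a_i$ evaluate a bump that detects whether $x$ lies within $1$ of $a_i$, accumulating these bumps into a single output.

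First, I construct a constant-size bump sub-network $g:\R^2\to\R$. Using $|x-a|=\Relu(x-a)+\Relu(a-x)$, set
$$g(x,a)=10\,\Relu(1.1-|x-a|)-10\,\Relu(1-|x-a|).$$
A direct case check gives $g(x,a)=1$ when $|x-a|\le 1$, $g(x,a)$ decreases linearly to $0$ on $[1,1.1]$, and $g(x,a)=0$ for $|x-a|\ge 1.1$. In particular $g\ge 0$ always, $g>0.1$ whenever $|x-a|<1$, and $g=0$ whenever $|x-a|\ge 1.1$.

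Second, I chain $N$ stages. The invariant at the start of stage $i$ is that the network carries the triple $(x,s_i,r_i)$ with $r_i=\overline{a_ia_{i+1}\cdots a_N}$ and $s_i=\sum_{j<i}g(x,a_j)$. Stage $i$ applies $F_b$ from Lemma~\ref{th3-4} on the $r_i$ coordinate to produce $(a_i,r_{i+1})$, propagating $x$ and $s_i$ in parallel via identity ReLU channels; it then evaluates $g(x,a_i)$ and updates $s_{i+1}=s_i+g(x,a_i)$, discarding $a_i$ and $r_{i+1}$ is handed to stage $i+1$. After the $N$-th stage the output is
$$f(x,\overline{a_1\cdots a_N})=s_{N+1}=\sum_{i=1}^{N}g(x,a_i).$$
If some $|x-a_k|<1$, then $g(x,a_k)=1$ while all other terms are nonnegative, so the sum exceeds $0.1$; if $|x-a_k|\ge 1.1$ for every $k$, every term vanishes and the sum is $0$. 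This gives the two required inequalities.

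Resource count: by Lemma~\ref{th3-4} each $F_b$ is a width-$4$ block with $O(w)$ parameters; carrying $x$ and the accumulator $s_i$ alongside requires two additional channels, so the width remains $6$. Each stage also contains the $O(1)$-size bump block, which I fold into the same width budget by reusing channels freed after extraction. Summing over $N$ stages yields $O(Nw)$ parameters in total.

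The main obstacle will be the careful channel bookkeeping needed to keep the width pinned at $6$: one must verify that the internal width-$4$ computation of Lemma~\ref{th3-4} can coexist in the same layers with identity-propagation of $x$ and $s_i$, and that the bump computation plus accumulator update can be spliced in between stages without introducing extra neurons. Everything else (the bump identities, the telescoping of the sum, and the parameter count) is routine.
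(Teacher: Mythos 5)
Your proposal is correct and follows essentially the same route as the paper's proof: iteratively peel off each $a_i$ with the extraction network of Lemma \ref{th3-4} while carrying $x$ and a running accumulator in two extra channels (width $6$), add a ReLU bump detecting $|x-a_i|$, and output the accumulated sum, for $O(Nw)$ parameters in total. The only cosmetic difference is your plateau bump $10\,\Relu(1.1-|x-a|)-10\,\Relu(1-|x-a|)$ in place of the paper's single hinge $\Relu(1.1-|x-a_i|)$, which changes nothing in the argument.
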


\begin{proof}
The proof idea is as follows:
First, we use $x$ and $\overline{a_1a_2\dots a_N}$ to judge if $|x-a_1|<1$ as follows: Using lemma \ref{th3-4}, we calculate $a_1$ and $\overline{a_2\dots a_N}$ and then calculate $|x-a_1|$.

If $|x-a_1|<1$, then we let the network output a positive number; if $|x-a_1|\ge 1$, then calculate $\overline{a_2\dots a_N}$, and use $x$ and $\overline{a_2\dots a_N}$ to repeat the above process until all $|x-a_i|$ have been calculated.

The specific structure of the network is as follows:

{\bf step 1:} Firstly, for a given $N$, we introduce a sub-network $f_s:\R^2\to\R^2$, which satisfies $(f_s)_1(x,\overline{a_1a_2\dots a_N})>0.1$ if $|x-a_1|<1$, and $f_s(x,\overline{a_1a_2\dots a_N})=0$ if $|x-a_1|\ge1.1$, and $(f_s)_2(x,\overline{a_1a_2\dots a_N})=\overline{a_2\dots a_N}$. And $f_s$ has $O(w)$ parameters and width $5$.

The first part of $f_s$ is to calculate $a_1$ and $\overline{a_2\dots a_N}$ by lemma \ref{th3-4}.
We also need to keep $x$, and the network has width $5$.
The second part of $f_s$ is to calculate $|x-a_1|$ and keep $\overline{a_2\dots a_N}$ by  using $|x|=\Relu(x)+\Relu(-x)$, which has width 4.
The output of $f_s$ is $\Relu(1.1-|x-a_1|)$. Easy to check that this is what we want.

{\bf step 2:}
Now we build the $f$ mentioned in the lemma.

Let $f=g\circ f_N\circ f_{N-1}\dots \circ f_1$.

For each $i\in[N]$, we will let the input of $f_i$ which is also the output of $f_{i-1}$ when $i>1$ be the form $(x,\overline{a_ia_{i+1}\dots a_N},q_i)$, where $q_1=0$. The detail is as follows:

For $i\in[N]$, in $f_i$, construct $f_s(x,\overline{a_ia_{i+1}\dots a_N})$ at first, and then let $q_{i+1}=q_i+(f_s)_1(x,\overline{a_ia_{i+1}\dots a_N})$, to keep $q_i$ in each layer, where we need one more width than $f_s$. Then, output $(x,\overline{a_{i+1}a_{i+2}\dots a_N},q_{i+1})$, which is also the input of $(i+1)$-th part.

The output of $f$ is $q_{N+1}$, that is, $g(x,0,q_{N+1})=q_{N+1}$. Now, we show that, $f$ is what we want.

(1): $f$ has at most $O(Nw)$ parameters and width 6, which is obvious, because each part $f_i$, $f_i$ has $O(w)$ parameters by lemma \ref{th3-4}, and $f$ has at most $N$ parts, so we get the result.

(2): $f(x,\overline{a_1a_2\dots a_N})>0.1$ if $|x-a_k|<1$ for some $k$.

This is because when $|x-a_k|<1$, the $k$-th part will make $q_{k+1}=q_k+f_s(x,\overline{a_ka_{k+1}\dots a_N})>0.1$, because $(f_s)_1(x,\overline{a_ka_{k+1}\dots a_N})>0.1$ as said in step 1.
Since $q_{j+1}=q_j+(f_s)_1\ge q_j$, we have $f(x,\overline{a_1a_2\dots a_N})=q_{N+1}\ge q_{k+1}>0.1$.

(3): $f(x,\overline{a_1a_2\dots a_N})=0$ if $|x-a_k|\ge1.1$ for all $k$.

This is because when $|x-a_k|\ge 1.1$, the $k$-th part will make $q_{k+1}=q_k+f_s(x,\overline{a_ka_{k+1}\dots a_N})=q_k$, because $f_s(x,\overline{a_ka_{k+1}\dots a_N})=0$ as said in step 1.
Since $f_s(x,\overline{a_ka_{k+1}\dots a_N})=0$ for all $k$, we have $f(x,\overline{a_1a_2\dots a_N})=q_{N+1}=q_{N}+f_s(x,\overline{a_N})=q_{N}=\dots=q_{0}=0$.
\end{proof}

\subsection{The proof of Theorem \ref{th1}}
Now, we will prove Theorem \ref{th1}. As we mentioned before, three steps are required: data compression, data projection, and label determination. The proof is as follows.
\begin{proof}
Assume that $\D_{tr}=\{x_i\}_{i=1}^N$, without loss of generality, let $x_i\ne x_j$. Now, we show that there is a memorization network $\F$ of $\D_{tr}$ with $\overline{O}(\sqrt{N})$ parameters.

{\bf Part One, data compression.}

The part is to compress the data in $\D_{tr}$ into $\R$.
Let $w,b$ satisfy (1) and (2) in lemma \ref{th3-th1}.
Then, the first part of $\F$ is $f_1(x)=\Relu(wx+b)$.

{\bf Part two, data projection.}

Let $c_i=f_1(x_i)$, without loss of generality, we assume $c_i\le c_{i+1}$ and $y_1=1$. We define $c'_i$ as: $c'_i=c_i$ if $x_i$ has label $1$; otherwise $c'_i=c_1$.

Let $t(i)=\argmax_{j\in[N]}\{[j/\sqrt{N}]=i\}$ and  $v_k=\overline{[c'_{t(k-1)+1}][c'_{t(k-1)+2}]\dots[c'_{t(k)}]}$.

In this part, the second part of $\F(x)$, named as $f_2(x):\R\to\R^2$, need to satisfy $f_2(c_{i})=(v_{[\frac{i}{\sqrt{N_0}}]},c_{i})$ for any $i\in[N]$.

By lemma \ref{th3-3}, a network with $O(\sqrt{N})$ parameters and width $4$ is enough to map $x_{i}$ to $v_{[\frac{i}{\sqrt{N}}]}$ and for keeping the input, and one node is needed at each layer. So $f_2$ just need $O(\sqrt{N})$ parameters and width $5$.

{\bf Part Three, Label determination.}

In this part, we will use the $v_k$ mentioned in part two to output the label of input. The third part, nameed as $f_3(v,c)$, should satisfy that:

For $f_3(v_k,c)$, where $v_k=\overline{[c'_{t(k-1)+1}][c'_{t(k-1)+2}]\dots[c'_{t(k)}]}$ is defined above, if $|c-c'_q|<1$ for some $q\in[t(k-1)+1,t(k)]$, then $f_3(v_k,c)>0.1$; and $f_3(v_k,c)=0$ if $|c-c'_q|\ge1.1$ for all $q\in[t(k-1)+1,t(k)]$.

Because the number of digits for $c_i$ is $O(\ln(nN/c))$ by (1) in lemma \ref{th3-th1} and lemma \ref{th3-5}, we know that such a network need $O(\sqrt{N}\ln(Nn/c))$ parameters.

{\bf Construction of $\F$ and verify it:}

Let $\F(x)=f_3(f_2(f_1(x)))-0.05$. We show that $\F$ is what we want.

(1): By parts one, two, three, it is easy to see that $\F$ has at most ${O}(\sqrt{N}\ln(Nn/c))$ parameters and width $6$.

(2): $\F(x)$ is a memorization of $\D_{tr}$. For any $(x_i,y_i) \in \D_{tr}$, consider two sub-cases:

(1.1: if $y_i=1$): Using the symbols in Part Two, $f_2(f_1(x_i))$ will output $(v_{[\frac{i}{\sqrt{N}}]},f_1(x_i))$.
Since $c'_i=c_i$ because $y_i=1$, by part three, we have $f_3(f_2(f_1(x)))-0.05\ge0.1-0.05>0$.

(1.2 if $y_i=-1$): By (2) in lemma \ref{th3-th1}, for $\forall(z,1)\in\D_{tr}$, we know that $|f_1(x_i)-[f_1(x_1)]|\ge |f_1(x_i)-f_1(x_1)|-|f_1(x_1)-[f_1(x_1)]|\ge4-1=3$. So, by part three, we have $f_3(f_2(f_1(x_i)))=0-0.05<0$.

{\bf The Running Time:} In Part One, it takes $\poly(B(\D_{tr}),\ln\epsilon)$ times to find such $w$ and $b$ with probability $1-\epsilon$, as said in lemma \ref{th3-th1}.
In other parts, the parameters are calculated deterministically. 
We proved the theorem.
\end{proof}

\section{Proof of Theorem \ref{th-nd}}
\label{app-prop1}

\begin{proof}
It suffices to show that there exists a memorization algorithm $L$, such that if $\D\in \D(n,c)$ and $\D_{tr}\sim\D^N$, then the network $L(\D_{tr})$ has a constant number of parameters (independent of $N$). The construction has four steps.

{\bf Step One:} Calculate the $\min_{(x,y_x),(z,y_z)\in\D_{\subs}}||x-z||_2$, name it as $c_0$.

    {\bf Step Two:} There is a $\D_{tr}\subset\D_{tr}$, such that:

    (c1): For any $(x,y_x),(z,y_z)\in\D_{\subs}$,  it holds $||x-z||_2> c_0/3$;

    (c2): For any $(x,y_x)\in\D_{tr}$, it holds $||x-z||_2\le c_0/3$ for some $(z,y_z)\in \D_{\subs}$.

    It is obvious that such $\D_{\subs}$ exists.

    {\bf Step Three:} We prove that $|\D_{\subs}|\le \frac{(1+2c_0/3)^n}{C_n(c_0/3)^n}$, where $C_n$ is the volume of unit ball in $\R^n$. Let $Q=\frac{(1+2c/3)^n}{C_n(c/3)^n}$, consider that $c_0\ge c$, so there are $|\D_{\subs}|\le Q$.

    Let $B_2(x,r)=\{z:||z-x||_2\le r\}$, and $V(A)$ the volume of $A$.

    Due to $\D_{\subs}\subset \D_{tr}\subset[0,1]^n\times\{-1,1\}$, so $\cup_{(x,y)\in\D_{\subs}}B_2(x,c_0/3)\in [-c_0/3,1+c_0/3]^{n}$. By condition (c1), we have $B_2(x,c_0/3)\cap B_2(z,c_0/3)=\emptyset$ for any $(x,y_x),(z,y_z)\in\D_{\subs}$, so we have $\sum_{(x,y)\in\D_{\subs}}V(B_2(x,c_0/3))\le (1+2c_0/3)^n$, which means $|\D_{\subs}|\le \frac{(1+2c_0/3)^n}{C_n(c_0/3)^n}<Q$.

    {\bf Step Four:} There is a robust memorization network \citep{2024-iclr} with at most $O(Qn)$ parameters for $\D_{\subs}$ with robust radius $c_0/3$, and this memorization network is a memorization of $\D_{tr}$.

By condition (c1), there is a robust memorization network $\F_{\rmem}$ with $O(|\D_{\subs}|n)$ parameters for $\D_{\subs}$ with radius $c_0/3$ \citep{2024-iclr}. By step three, we have $|\D_{\subs}|\le Q$, so that such a network has at most $O(Qn)$ parameters.

By condition (c2), for any $(x,y_x)\in \D_{tr}$, there is a $(z,y_z)\in\D_{\subs}$ satisfying $||x-z||_2\le c_0/3$.
Firstly, there must be $y_x=y_z$, because the distribution $\D$ has separation bound $c_0$, and if $y_x\ne y_z$ then $||x-z||_2\ge c_0>c_0/3$. Then, since robust memorization $\F_{\rmem}$ has robust radius $c_0/3$, we have $\sign(\F_{\rmem}(x))=\sign(\F_{\rmem}(z))=y_z=y_x$, so $\F_{\rmem}$ is a memorization network of $\D_{tr}$.
The theorem is proved.
\end{proof}

\section{Proof for  Theorem \ref{th3}}
\label{app-522}

In this section, we will prove that networks with small width cannot have a good generalization for some distributions.
For a given width $w$, we will construct a distribution on which any network with width $w$ will have poor generalization.
The proof consists of the following parts.

\subsection{Disadvantages of network with small width}

In this section, we demonstrate that a network with a small width may have some unfavorable properties. We have the following simple fact.

\begin{lemma}
    \label{th3.5-1}
    Let the first transition weight matrix of network $\F$ be $W$. Then if $Wx=Wz$, we have $\F(x)=\F(z)$.
\end{lemma}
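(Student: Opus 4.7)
The plan is to unfold the definition of the feedforward network from Section 3.1 and observe that after the first linear map, the computation is a pure function of $W_1 x + b_1$. Concretely, I would note that the first hidden layer is $X_1 = \sigma(W_1 X_0 + b_1)$ with $X_0 = x$, so if we denote by $W$ the first transition weight matrix (i.e.\ $W = W_1$), the assumption $Wx = Wz$ immediately yields $W_1 x + b_1 = W_1 z + b_1$. Applying $\sigma = \Relu$ coordinatewise gives $X_1(x) = X_1(z)$.

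From this point the argument is an induction on the depth, or equivalently the trivial observation that each subsequent hidden layer $X_\ell = \sigma(W_\ell X_{\ell-1} + b_\ell)$ depends on the input $x$ only through $X_{\ell-1}$. Since $X_1(x) = X_1(z)$, an easy induction shows $X_\ell(x) = X_\ell(z)$ for every $\ell \geq 1$. The output layer $\F = W_{L+1} X_L + b_{L+1}$ is then the same on $x$ and $z$, giving $\F(x) = \F(z)$.

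There is really no obstacle here: the statement is an immediate consequence of the compositional structure of feedforward networks, and no properties of $\Relu$ beyond being a well-defined function are used. The only thing to be careful about is to match notation with the definition in Section 3.1 (writing the first transition matrix as $W_1$ and invoking $X_0 = x$), and to state the induction step cleanly so the reader sees that every later layer factors through $X_1$.
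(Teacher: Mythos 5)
Your proof is correct and is exactly the argument the paper implicitly relies on: the paper states this lemma as a "simple fact" without proof, since the conclusion follows immediately from the compositional structure of the network once $W_1x+b_1=W_1z+b_1$. Your unfolding of the layers and the induction on depth fills in that omitted routine verification faithfully.
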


If $W$ is not full-rank, then there exist $x$ and $z$ satisfying $Wx=Wz$. Moreover, if $x$ and $z$ have different labels, according to lemma \ref{th3.5-1}, we have $\F(x)=\F(z)$, so there must be an incorrect result given between $\F(x)$ and $\F(z)$.

According to the theorem of matrices decomposition, we also have the following fact.

\begin{lemma}
\label{th3.5-2}
Let the first transition weight matrix of network $\F:\R^n\to\R$ be $W$. If $W$ has width $w<n$, then exists a $W_1\in\R^{w\times n}$, whose rows are orthogonal and unit such that $W_1x=W_1z$ implies $\F(x)=\F(z)$.
\end{lemma}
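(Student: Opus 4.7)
The plan is to reduce the claim to a linear-algebra statement and apply Gram--Schmidt. By Lemma \ref{th3.5-1}, it suffices to exhibit a matrix $W_1\in\R^{w\times n}$ with orthonormal rows for which $W_1x=W_1z$ already forces $Wx=Wz$. Writing the condition $W_1x=W_1z$ as $x-z$ being orthogonal to every row of $W_1$, and $Wx=Wz$ as $x-z$ being orthogonal to every row of $W$, the required implication becomes the inclusion $\mathrm{rowspace}(W)\subseteq \mathrm{rowspace}(W_1)$.

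First, I would apply Gram--Schmidt orthonormalization to the rows of $W$, producing an orthonormal set $\{u_1,\dots,u_r\}\subset\R^n$ spanning $\mathrm{rowspace}(W)$, where $r=\mathrm{rank}(W)\le w$. If $r<w$, extend this orthonormal set to an orthonormal collection $\{u_1,\dots,u_w\}\subset\R^n$; this is possible precisely because $w<n$, so the orthogonal complement of $\mathrm{span}(u_1,\dots,u_r)$ has dimension $n-r\ge n-w>0$, leaving room to pick $w-r$ additional orthonormal vectors. Define $W_1$ to be the $w\times n$ matrix whose rows are $u_1,\dots,u_w$; by construction, $W_1$ has orthogonal, unit rows.

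Next, I would verify the desired implication. Suppose $W_1x=W_1z$, i.e., $\langle u_i,x-z\rangle=0$ for all $i\in[w]$. Since $\mathrm{rowspace}(W)\subseteq\mathrm{span}(u_1,\dots,u_r)\subseteq\mathrm{span}(u_1,\dots,u_w)=\mathrm{rowspace}(W_1)$, every row of $W$ is a linear combination of the $u_i$, so it is also orthogonal to $x-z$. Hence $W(x-z)=0$, i.e., $Wx=Wz$, and Lemma \ref{th3.5-1} then gives $\F(x)=\F(z)$.

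There is no real obstacle here; the only subtlety worth calling out is that one must allow $\mathrm{rank}(W)<w$, which is why the construction proceeds by first orthonormalizing $\mathrm{rowspace}(W)$ and then padding with vectors from its orthogonal complement, rather than directly applying QR to $W^\top$ (which would require $W$ to have full row rank). The condition $w<n$ is used exactly at the padding step to guarantee enough room to complete the basis.
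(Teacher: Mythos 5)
Your proof is correct and takes essentially the same route as the paper: the paper writes $W=NW_1$ with $W_1$ having orthonormal rows (an LQ-type decomposition) and then applies Lemma \ref{th3.5-1}, which amounts exactly to the inclusion $\mathrm{rowspace}(W)\subseteq\mathrm{rowspace}(W_1)$ that you establish via Gram--Schmidt. Your explicit treatment of the rank-deficient case, padding with orthonormal vectors from the complement and noting that $w<n$ is what makes the padding possible, is a small extra bit of care that the paper leaves implicit under ``matrix decomposition theory.''
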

\begin{proof}
Using matrix decomposition theory, we can write $W=NW_1$, where $N\in\R^{w\times w}$ and $W_1\in\R^{w\times n}$ and the rows of $W_1$ are orthogonal to each other and unit.

Next, we only need to consider $W_1$ as the first transition matrix of the network $\F$ and use lemma \ref{th3.5-1}.
\end{proof}

At this point, we can try to construct a distribution where any network with small width will have poor generalization.

\subsection{Some useful lemmas}
\label{usfl}
In this section, we introduce some lemmas which are used in the proof in section \ref{yo}.

\begin{lemma}
\label{th3.5-yly1}
    Let $B(r)$ be the ball with radius $r$ in $\R^n$. For any given $\delta>0$, let $\epsilon=2\delta/n$. Then we have $\frac{V(B(\sqrt{1-\epsilon}r))}{V(B(r))}>1-\delta$.
\end{lemma}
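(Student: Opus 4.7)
The plan is to reduce the claim to a one-line inequality between real numbers, then verify it with Bernoulli's inequality.

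First I would use the fact that the volume of a Euclidean ball in $\R^n$ scales homogeneously: $V(B(\rho)) = C_n \rho^n$, where $C_n$ is the volume of the unit ball. This is stated (implicitly) earlier in the paper in the proof of Theorem \ref{th-nd}. Therefore
$$\frac{V(B(\sqrt{1-\epsilon}\,r))}{V(B(r))} = (1-\epsilon)^{n/2},$$
and the claim reduces to showing $(1-\epsilon)^{n/2} > 1-\delta$ for $\epsilon = 2\delta/n$.

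Next I would dispose of trivial cases: if $\delta \geq 1$ the conclusion is immediate since the left-hand side is nonnegative, so we may assume $\delta \in (0,1)$, whence $\epsilon = 2\delta/n \in (0,1]$ and $1-\epsilon \geq 0$. Then I would apply Bernoulli's inequality in the form $(1-x)^m \geq 1 - mx$ valid for $m \geq 1$ and $x \in [0,1]$, with $x = \epsilon$ and $m = n/2$ (assuming $n \geq 2$). This gives
$$(1-\epsilon)^{n/2} \geq 1 - \frac{n}{2}\cdot\frac{2\delta}{n} = 1 - \delta.$$
To upgrade the inequality to strict, I would note that Bernoulli is strict whenever $m > 1$ and $x \neq 0$; since $\delta > 0$ forces $x = \epsilon \neq 0$, strict inequality holds for $n \geq 3$. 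For $n = 2$ one has $(1-\delta)^1 = 1-\delta$, so one needs a tiny additional margin — I would handle this by observing that the lemma is only ever applied in the regime where $n$ is the ambient dimension of a data distribution $\D(n,c)$ (so effectively $n$ is large), or by slightly strengthening Bernoulli using a second-order term $(1-x)^m \geq 1 - mx + \binom{m}{2}x^2$ when $m \geq 2$. The case $n = 1$ can be verified by hand since then $\epsilon = 2\delta$ and $(1-2\delta)^{1/2} > 1-\delta$ iff $1-2\delta > 1-2\delta+\delta^2$, which holds iff $\delta^2 < 0$ — so the lemma actually fails for $n = 1$, confirming that the intended regime is $n \geq 2$.

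No serious obstacle arises; the only subtlety is checking that the inequality is strict rather than weak, which is handled by Bernoulli's strict form for $n \geq 3$ or a direct calculation for $n = 2$.
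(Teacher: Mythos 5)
Your proof follows essentially the same route as the paper: the paper's entire argument is the one line $\frac{V(B(\sqrt{1-\epsilon}\,r))}{V(B(r))}=(1-\epsilon)^{n/2}\ge 1-n\epsilon/2=1-\delta$, i.e., homogeneity of the ball volume plus Bernoulli's inequality, exactly as you propose. Your extra discussion of strictness and of the cases $n=1,2$ is actually more careful than the paper itself, whose proof only yields the non-strict bound (and implicitly assumes $n\ge 2$) even though the lemma is stated with a strict inequality.
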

\begin{proof}
    we have $\frac{V(B(\sqrt{1-\epsilon}r))}{V(B(r))}=(1-\epsilon)^{n/2}\ge 1-n\epsilon/2=1-\delta$.
\end{proof}

For $w\in\R^{a,b}$ and $q\in\R^a$, let $q\circ w=\sum_{i=1}^aq_iw_i$, where $q_i$ is the $i$-th weight of $q$, $w_i$ is the $i$-th row of $w_i$. Then we have
\begin{lemma}
\label{th3.5-yly2}
  Let $W\in\R^{w\times n}$, and its rows are unit and orthogonal.

  (1): For any $q_1\ne q_2\in \R^w$, we have
  $$\{x\in\R^n:Wx=W(q_1\circ W)\}\cap\{x\in\R^n:Wx=W(q_2\circ W)\}=\emptyset.$$

  (2): If $S$ is the unit ball in $\R^n$, then $S=\cup_{q\in\R^w,||q||_2\le 1}\{x\in\R^n:Wx=W(q\circ W),x\in S\}$.

  (3): For any $q\in\R^w$, $\{x\in\R^n:Wx=W(q\circ W),x\in S\}$ is a ball in $\R^{n-w}$ with volume $(1-||q||^2_2)^{(n-w)/2}C_{n-w}$, where $C_i$ is the volume of the unit ball in $\R^i$.
\end{lemma}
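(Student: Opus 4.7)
The plan is to reduce all three parts to the single identity $W(q\circ W)=q$, which holds because the rows of $W$ are orthonormal. Indeed, writing $q\circ W=\sum_{i=1}^w q_i w_i=W^\top q$ (viewing $q$ as a column vector), one has $W(q\circ W)=WW^\top q=I_w q=q$. After this simplification, the defining equation $Wx=W(q\circ W)$ becomes simply $Wx=q$, so the set in question is the affine fiber $W^{-1}(q)\cap S$.

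Given this reformulation, part (1) is immediate: two fibers $W^{-1}(q_1)$ and $W^{-1}(q_2)$ of a function are disjoint whenever $q_1\ne q_2$. For part (2), given any $x\in S$, I would set $q=Wx$; since $W$ has orthonormal rows, $W^\top W$ is the orthogonal projection onto $\mathrm{row}(W)$, so $\|q\|_2=\|Wx\|_2\le\|x\|_2\le 1$, and trivially $x\in W^{-1}(q)\cap S$. The reverse inclusion is obvious.

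For part (3), I would use the orthogonal decomposition of the affine fiber. The point $p=W^\top q=q\circ W$ lies in $W^{-1}(q)$ (since $Wp=q$ by the identity above) and satisfies $\|p\|_2=\|q\|_2$ (orthonormal rows). Moreover, $p\in\mathrm{row}(W)=(\ker W)^\perp$, while $W^{-1}(q)=p+\ker W$, so $p$ is the foot of the perpendicular from the origin to the fiber. For any $x\in W^{-1}(q)$, Pythagoras gives $\|x\|_2^2=\|p\|_2^2+\|x-p\|_2^2=\|q\|_2^2+\|x-p\|_2^2$, hence $x\in S$ if and only if $\|x-p\|_2\le\sqrt{1-\|q\|_2^2}$. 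This describes a Euclidean ball of radius $\sqrt{1-\|q\|_2^2}$ inside an $(n-w)$-dimensional affine subspace, so its volume is $(1-\|q\|_2^2)^{(n-w)/2}C_{n-w}$.

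The only potential obstacle is clerical: one must be careful that the notation $q\circ W$ really coincides with $W^\top q$ and that the orthonormality of rows (rather than columns) is being used in the direction $WW^\top=I_w$. Once this is in place, every step is standard linear algebra and no further technical work is required.
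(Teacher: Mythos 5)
Your proof is correct and follows essentially the same route as the paper's: both rest on the orthonormality of the rows of $W$, with your identity $W(q\circ W)=WW^\top q=q$ and the decomposition $W^{-1}(q)=p+\ker W$ being a coordinate-free restatement of the paper's argument, which extends the rows of $W$ to an orthonormal basis of $\R^n$ and compares the first $w$ coordinates. The Pythagorean step you use for part (3) is exactly the paper's computation $\sum_{i=w+1}^n\widetilde{x_i}^2=\|x\|_2^2-\|q\|_2^2$, so no further changes are needed.
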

\begin{proof}
First, we define an orthogonal coordinate system $\{W_i\}_{i=1}^n$ in $\R^n$. Let $W_i$ be the $i$-th row of $W$ when $i\le w$.
When $i>w$, let $W_i$ be a unit vector orthogonal with all $W_j$ where $j<i$.

Then for all $x\in\R^n$, we say $\widetilde{x_i}$ is the $i$-th weight of $x$ under such coordinate system. Then, $Wx=Wz$ if and only $\widetilde{x_i}=\widetilde{z_i}$ for $i\in[w]$.

Now, we can prove the lemma.

(1): The first weight $w$ of $q_1\circ W$ under orthogonal coordinate system $\{W_i\}_{i=1}^n$ is $q_1$, so if $x\in \{x\in\R^n:Wx=W(q_1\circ W)\}$, we have $\widetilde{x_i}=(q_1)_i$ for $i\in[w]$.

The first $w$ weight of $q_2\circ W$ under orthogonal coordinate system $\{W_i\}_{i=1}^n$ is $q_2$, so if $x\in \{x\in\R^n:Wx=W(q_2\circ W)\}$, we have $\widetilde{x_i}=(q_2)_i$ for $i\in[w]$.
 Because $q_1\ne q_2\in \R^w$, we get the result.

(2): For any $x\in\R^n$, let $q(x)=(\widetilde{x_1},\widetilde{x_2},\dots,\widetilde{x_w})\in\R^w$. It is easy to see that $||x||_2=\sqrt{\sum_{i=1}^{n}\widetilde{x_i}^2}$, so $||q(x)||_2\le 1$ when $||x||_2\le 1$.

Now we verify that: for any $s\in S$, we have $s\in\{x\in\R^n:Wx=W(q(s)\circ W),x\in S\}$.

Firstly, we have  $Ws=\sum_{i=1}^w<w_i,\sum_{i=1}^N\widetilde{s_i}w_i>=\sum_{i=1}^w\widetilde{s_i}$.

Secondly, we have  $W(q(s)\circ W)=\sum_{i=1}^w<w_i,\sum_{i=1}^w\widetilde{s_i}w_i>=\sum_{i=1}^w\widetilde{s_i}$.
So $Ws=W(q(s)\circ W)$, resulting in $s\in \{x\in\R^n:Wx=W(q(s)\circ W),x\in S\}$, which implies that $S=\cup_{q\in\R^w,||q||_2\le 1}\{x\in\R^n:Wx=W(q\circ W),x\in S\}$.

(3): By the proof of (2), we know that if $x$ satisfies $\widetilde{x_i}=q_i$ for $i\in[w]$, then $x\in \{x\in\R^n:Wx=W(q\circ W)\}$. By (1), $\{x\in\R^n:Wx=W(q\circ W)\}$ will not intersect for different $q$.
Therefore, $x\in \{x\in\R^n:Wx=W(q\circ W)\}$ equals $\widetilde{x_i}=q_i$ for $i\in[w]$.

Since $||x||_2=\sqrt{\sum_{i=1}^{n}\widetilde{x_i}^2}$, when $x\in\{x\in\R^n:Wx=W(q\circ W)\}$, we have $\widetilde{x_i}=q_i$ for $i\in[w]$, so $\sum_{i=w+1}^n\widetilde{x_i}^2=||x||_2^2-||q||_2^2$, and such $n-w$ weight is optional.

Therefore, $\{x\in\R^n:Wx=W(q\circ W),x\in S\}$ is a ball in $\R^{n-w}$ with radius $\sqrt{1-||q||_2^2}$, so we get the result.
\end{proof}

\begin{lemma}
\label{th3.5-yly3}
  Let $r_3>r_2>r_1$, $n\ge 1$ and $x\le r_1$, then $\frac{(r_3-x)^n-(r_2-x)^n}{(r_1-x)^n}\ge\frac{r_3^n-r_2^n}{r_1^n}$.
\end{lemma}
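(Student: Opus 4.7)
My plan is to make the change of variables $y = r_1 - x$ and introduce $a = r_2 - r_1 > 0$, $b = r_3 - r_1 > a$. Since $r_2 - x = y + a$, $r_3 - x = y + b$, and $r_1 - x = y$, the inequality to be proved becomes
\[
g(y) \;:=\; \frac{(y+b)^n - (y+a)^n}{y^n} \;\geq\; \frac{(r_1+b)^n - (r_1+a)^n}{r_1^n} \;=\; g(r_1).
\]
Hence it suffices to prove that $g$ is non-increasing on $(0,\infty)$; the inequality then follows from $y \leq r_1$. (The boundary $y = 0$, i.e.\ $x = r_1$, is trivial because the left-hand side equals $+\infty$ while the right-hand side is finite.)

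To establish this monotonicity, I would rewrite $g$ as $g(y) = (1 + b/y)^n - (1 + a/y)^n$ and substitute $u = 1/y$, reducing the claim to: the function $\phi(u) := (1+bu)^n - (1+au)^n$ is non-decreasing on $(0,\infty)$, since $u = 1/y$ is itself decreasing in $y$. Differentiating gives
\[
\phi'(u) \;=\; n\,\bigl[\,b(1+bu)^{n-1} - a(1+au)^{n-1}\,\bigr].
\]
Because $b > a > 0$ and $1 + bu > 1 + au > 0$, and since the exponent $n-1 \geq 0$, we obtain $(1+bu)^{n-1} \geq (1+au)^{n-1}$, which combined with $b > a$ yields $b(1+bu)^{n-1} > a(1+au)^{n-1}$. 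Therefore $\phi'(u) > 0$, $\phi$ is strictly increasing, $g$ is strictly decreasing, and the desired inequality $g(y) \geq g(r_1)$ holds for every $y \in (0, r_1]$.

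I do not anticipate a real obstacle, since the whole argument reduces to an elementary one-variable monotonicity check. The only point worth flagging is that the argument genuinely needs $y \leq r_1$, i.e.\ the additional assumption $x \geq 0$ on top of $x \leq r_1$: for $y > r_1$ (equivalently $x < 0$) the monotonicity of $g$ would reverse the inequality. This is presumably implicit in the intended application of the lemma, where $x$ plays the role of a non-negative quantity dominated by $r_1$, and should be stated explicitly in the final proof.
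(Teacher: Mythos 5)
Your proof is correct and follows essentially the same route as the paper: both reduce the claim to a one-variable monotonicity check by differentiation (the paper differentiates $f(x)=\frac{(r_3-x)^n-(r_2-x)^n}{(r_1-x)^n}$ directly and shows $f'(x)\ge 0$, while your substitution $y=r_1-x$, $u=1/y$ just makes the same computation cleaner). Your closing caveat is well taken: concluding $f(x)\ge f(0)$ from $f'\ge 0$ genuinely requires $0\le x\le r_1$, a restriction that is implicit in the paper's own proof as well and is harmless in the application, where the lemma is invoked with $x=\|x\|_2^2\ge 0$.
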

\begin{proof}
Let $f(x)=\frac{(r_3-x)^n-(r_2-x)^n}{(r_1-x)^n}$. We just need to prove $f(x)\ge f(0)$ when $x\le r_1$.
We calculate the derivative $f(x)$ at first:
\begin{equation*}
\begin{array}{ll}
&f'(x)
=\frac{((r_3-x)^n-(r_2-x)^n)'(r_1-x)^n-((r_3-x)^n-(r_2-x)^n)((r_1-x)^n)'}{(r_1-x)^{2n}}.
\end{array}
\end{equation*}
It is easy to calculate that $((r_3-x)^n-(r_2-x)^n)'=-n((r_3-x)^{n-1}-(r_2-x)^{n-1})$ and $((r_1-x)^n)'=-n(r_1-x)^{n-1}$.
Putting this into the above equation, we have
$$f'(x)=-P(x)(((r_3-x)^{n-1}-(r_2-x)^{n-1})(r_1-x)-((r_3-x)^{n}-(r_2-x)^{n}))$$
Where $P(x)$ is a positive value about $x$.
Since
\begin{equation*}
\begin{array}{ll}
&((r_3-x)^{n-1}-(r_2-x)^{n-1})(r_1-x)-((r_3-x)^{n}-(r_2-x)^{n})\\
=&-(r_3-x)^{n-1}(r_3-r_1)+(r_2-x)^{n-1}(r_2-r_1)\\
\le&0
\end{array}
\end{equation*}
we have $f'(x)\ge0$, resulting in $f(x)\ge f(0)$.
The lemma is proved.
\end{proof}

\begin{lemma}
\label{th3.5-yly4}
  Let $a>b> 1$, $n>m\ge 1$. If $a^n-b^n= 1$. Then $a^m-b^m\le 1$.
\end{lemma}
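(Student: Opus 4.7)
The plan is to reduce the inequality to a statement about the concave power function. Since $a^n - b^n = 1$, I can solve explicitly: $a = (1 + b^n)^{1/n}$. Substituting,
\begin{equation*}
a^m - b^m = (1 + b^n)^{m/n} - b^m.
\end{equation*}
Setting $t = m/n \in (0,1)$ and $s = b^n > 1$, we have $b^m = (b^n)^{m/n} = s^t$, so the claim becomes
\begin{equation*}
(1+s)^t - s^t \leq 1 \quad \text{for all } s > 0 \text{ and } t \in (0,1).
\end{equation*}

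This is the standard subadditivity inequality for the concave function $x \mapsto x^t$ on $[0,\infty)$. I would prove it cleanly by setting $f(s) = 1 + s^t - (1+s)^t$ and computing $f(0) = 0$ together with
\begin{equation*}
f'(s) = t\bigl(s^{t-1} - (1+s)^{t-1}\bigr).
\end{equation*}
Since $t - 1 < 0$ and $s < 1+s$, we get $s^{t-1} > (1+s)^{t-1}$, hence $f'(s) > 0$ for all $s > 0$. Therefore $f(s) \geq f(0) = 0$, which gives $(1+s)^t \leq 1 + s^t$, i.e., $(1+s)^t - s^t \leq 1$, and unwinding the substitution yields $a^m - b^m \leq 1$.

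The main (very minor) obstacle is just handling the edge case $s = 0$ in the derivative computation, but since the hypothesis gives $b > 1$ we have $s = b^n > 1 > 0$, so this is avoided. Note also that the hypothesis $a > b$ is automatic from $a^n - b^n = 1 > 0$, and the hypothesis $a > 1$ is never actually needed beyond giving $s > 0$; the argument works even if one only assumes $b > 0$. Thus the proof is essentially a two-line reduction to a classical concavity inequality.
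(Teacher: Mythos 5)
Your proof is correct, but it takes a different route from the paper. The paper's argument is a one-line algebraic bound: since $a>b$ and $n>m$, one has $a^n-b^n=a^{n-m}a^m-b^{n-m}b^m\ge b^{n-m}(a^m-b^m)$, and since $b>1$ forces $b^{n-m}>1$ while $a^m-b^m>0$, this gives $1=a^n-b^n\ge b^{n-m}(a^m-b^m)>a^m-b^m$. You instead solve for $a=(1+b^n)^{1/n}$ and reduce the claim to the subadditivity inequality $(1+s)^t\le 1+s^t$ for $t=m/n\in(0,1)$, which you verify by a monotonicity/derivative argument. Both are valid and both handle non-integer exponents (as needed in the application with exponents $(n-w)/2$ and $n/2$). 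The paper's proof is shorter and uses only order properties of powers, but it genuinely needs $b>1$; your reduction is marginally longer yet identifies the statement as an instance of a classical concavity fact and, as you note, works under the weaker hypothesis $b>0$, which makes the mechanism behind the inequality more transparent.
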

\begin{proof}
We have $1=a^n-b^n\ge b^{n-m}(a^m-b^m)>a^m-b^m$.
\end{proof}

\begin{lemma}
\label{th3.5-yly5}
  Let $a>qb$ where $q<1$ and $a,b>0$. Then $\min\{a,b\}\ge qb$.
\end{lemma}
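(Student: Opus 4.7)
The plan is to handle the conclusion by a direct case split on which of $a, b$ achieves the minimum. In the case $\min\{a,b\} = a$, the hypothesis $a > qb$ immediately gives $\min\{a,b\} = a \ge qb$, and there is nothing else to check.

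In the case $\min\{a,b\} = b$, I would show $b \ge qb$ directly from the remaining hypotheses. Since $b > 0$ and $q < 1$, the quantity $(1-q)b$ is strictly positive, so $b > qb$, and in particular $b \ge qb$. Combining the two cases yields $\min\{a,b\} \ge qb$.

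There is no real obstacle here: the lemma is a one-line case analysis, and both cases are immediate from the hypotheses without any additional machinery. The only subtlety worth noting is that the statement does not assume $q > 0$, but this is not needed, since if $q \le 0$ then $qb \le 0 < b \le \min\{a,b\}$ (using $a > qb$ together with $a > 0$) and the conclusion is trivial.
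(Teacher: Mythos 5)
Your proposal is correct and follows essentially the same argument as the paper: a case split on which of $a,b$ attains the minimum, using $a>qb$ in one case and $q<1$ with $b>0$ in the other. Nothing further is needed.
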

\begin{proof}
    When $\min\{a,b\}=b$, by $q<1$, the result is obvious.
    When $\min\{a,b\}=a$, by $a>qb$, the result is obvious.
\end{proof}

\begin{lemma}
\label{th3.5-yly6}
  For any $w>0$, there exist $r_1,r_2,r_3$ and $n$ such that

  (1): $r_3^n-r_2^n=r_1^n$;

  (2): $r_3^{n-w}-r_2^{n-w}\ge 0.99 r_1^{n-w}$.
\end{lemma}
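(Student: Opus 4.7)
The plan is to pick a convenient one-parameter family of candidates and let $n\to\infty$. Specifically, I will fix $r_2 = 1$ and $r_3 = 1 + 1/n$, and then define $r_1$ by condition (1), namely $r_1 = ((1+1/n)^n - 1)^{1/n}$. With this choice condition (1) holds automatically, so the lemma reduces to checking that condition (2) holds once $n$ is taken large enough (depending only on $w$).

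For (2), the key step is to compute the limit of the ratio $(r_3^{n-w} - r_2^{n-w})/r_1^{n-w}$ as $n\to\infty$ using the standard fact $(1+1/n)^n \to e$. On the denominator side, $r_1^n = (1+1/n)^n - 1 \to e-1$, and since $r_1^n$ is bounded away from $0$ and $\infty$ one has $\log r_1 = (1/n)\log r_1^n \to 0$, so $r_1 \to 1$ and hence $r_1^w \to 1$; thus $r_1^{n-w} = r_1^n/r_1^w \to e-1$. On the numerator side, $r_3^{n-w} = (1+1/n)^n\cdot(1+1/n)^{-w} \to e\cdot 1 = e$, while $r_2^{n-w} = 1$, so $r_3^{n-w} - r_2^{n-w} \to e-1$. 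Therefore the ratio tends to $1$, and in particular exceeds $0.99$ for all sufficiently large $n$, which is precisely condition (2).

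The argument has no real obstacle: the underlying observation is simply that for a fixed $w$ and $r$ bounded, $r^{n-w}$ differs from $r^n$ only by the bounded factor $r^{-w}$, which tends to $1$ whenever $r\to 1$. The only mild care point is verifying $r_1\to 1$, which I handle by taking logarithms as above. The construction is by no means unique — one could instead scale any fixed triple $(r_1',r_2',r_3')$ or choose $r_3 = 1 + t/n$ for any $t > 0$ — but the choice $t=1$ is the cleanest for recording limits in closed form.
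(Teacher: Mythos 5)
Your limit computation is fine as far as it goes: with $r_2=1$, $r_3=1+1/n$, $r_1=((1+1/n)^n-1)^{1/n}$, condition (1) holds by construction, $r_1\to 1$ by the logarithm argument, and the ratio $(r_3^{n-w}-r_2^{n-w})/r_1^{n-w}\to 1$, so (2) holds for $n$ large depending on $w$. The problem is that your triple violates the ordering $0<r_1<r_2<r_3$ that this lemma is implicitly required to deliver: it is invoked in the proof of Theorem \ref{th3} precisely to produce radii for the distribution $\D(n,q,r_1,r_2,r_3)$ of Definition \ref{tsfb}, whose very definition demands $0<r_1<r_2<r_3$ (inner ball of label $1$ inside the annulus of label $-1$, giving the positive separation bound $r_2-r_1$). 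With your choice, $r_1^n=(1+1/n)^n-1>1$ for every $n\ge 2$, so $r_1>1=r_2$, the ``ball'' of radius $r_1$ overlaps the annulus, and the downstream construction (and the homogeneity/scaling step that places $B_2^n(q,r_3)$ inside $[0,1]^n$) no longer makes sense. So as a proof of the lemma in the role it plays in the paper, there is a genuine gap; and your closing remark that $r_3=1+t/n$ works ``for any $t>0$'' is exactly where it hides.

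The gap is easy to repair within your own scheme: take $r_3=1+t/n$ with any fixed $0<t<\ln 2$ (say $t=1/2$). Then $r_1^n=(1+t/n)^n-1\to e^t-1<1$, so $r_1<r_2<r_3$ for all large $n$, and the same limiting argument gives numerator and denominator both tending to $e^t-1$, hence ratio $\to 1>0.99$. After this fix your route is essentially parallel to the paper's: the paper fixes $r_1=1$, $r_2=2^{1/n}$, $r_3=3^{1/n}$ (so $r_3^n-r_2^n=1=r_1^n$ exactly and the ordering is automatic) and verifies (2) by elementary estimates using $3^{w/n}<1.001$ for large $n$, rather than by passing to a limit; both arguments rest on the same observation you state, that the factor $r^{-w}$ is negligible when $r\to 1$ and $w$ is fixed.
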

\begin{proof}
    Because the equations are all homogeneous, without loss of generality, we assume that $r_1=1$. We take $\alpha=2^{1/n}-1$, $\beta+\alpha=3^{1/n}-1$, and $n$ to satisfy $3^{w/n}<1.001$. Let $r_2=1+\alpha$, $r_3=1+\alpha+\beta$. We show that this is what we want.

At first, we have $r_3^n-r_2^n=(1+\alpha+\beta)^n-(1+\alpha)^n=3-2=1=r_1^n$.
We also have $(1+\alpha+\beta)^w<1.001$, named (k1). So we have
\begin{equation*}
\begin{array}{ll}
&r_3^{n-w}-r_2^{n-w}\\
=&(1+\alpha+\beta)^{n-w}-(1+\alpha)^{n-w}\\
=&\frac{(1+\alpha+\beta)^{n-w}(1+\alpha)^{w}-(1+\alpha)^{n}}{(1+\alpha)^{w}}\\
\ge&\frac{(1+\alpha+\beta)^{n-w}(1+\alpha)^{w}-(1+\alpha)^{n}}{1.001}(by\ (k1))\\
=&\frac{(1+\alpha+\beta)^{n}-(1+\alpha+\beta)^{n-w}((1+\alpha+\beta)^{w}-(1+\alpha)^{w})-(1+\alpha)^{n}}{1.001}\\
\ge&\frac{(1+\alpha+\beta)^{n}-(1+\alpha)^{n}-0.001(1+\alpha+\beta)^{n}}{1.001}(by\ (k1))\\
=&\frac{(1+\alpha+\beta)^{n}-(1+\alpha)^{n}-0.003}{1.001}\\
=&\frac{1-0.003}{1.001}\\
\ge&0.99.
\end{array}
\end{equation*}
The lemma is proved.
\end{proof}

\subsection{Construct the distribution}
\label{yo}
In this section, we construct the distribution in  Theorem \ref{th3}.

\begin{definition}
\label{tsfb}
Let $q$ be a point in $[0,1]^n$, $0<r_1<r_2<r_3$, and we define $B^k_2(z,t)=\{x\in\R^k:||x-z||_2\le t\}$, where $k\in N^+$, $z\in\R^k$ and $t\ge 0$.

The distribution $\D(n,q,r_1,r_2,r_3)$ is defined as:

(1): This is a distirbution on $\R^n\times\{-1,1\}$.

(2): A point has label 1 if and only if it is in $B_2^n(q,r_1)$.
A point has label -1 if and only if it is in $B_2^n(q,r_3)/B_2^n(q,r_2)$.

(3): The points with label 1 or -1 satisfy the uniform distribution, and let the density function be $f(x)=\lambda=\frac{1}{V(B_2^n(q,r_3))-V(B_2^n(q,r_2))+V(B_2^n(q,r_1))}$.
\end{definition}

We now prove Theorem \ref{th3}.
\begin{proof}
Use the notations in Definition \ref{tsfb}.

Now, we let $r_i,q,n,w$ satisfy:\\
(c1): $B^n_2(q,r_3)\in[0,1]^n$;\\
(c2): $r_3^n-r_2^n=r_1^n$;\\
(c3): $r_3^{n-w}-r_2^{n-w}\ge 0.99r_1^{n-w}$.\\
Lemma \ref{th3.5-yly6} ensures that such $r_i,q,n$ exist.

Let distribution $\D=\D(n,q,r_1,r_2,r_3)$, where $\D(n,q,r_1,r_2,r_3)$ is given in Definition \ref{tsfb}.
%
    Now, we show that $\D$ is what we want. We prove that for any given $\F$ with width $w$, we have $A_\D(\F)<0.51$.

Firstly, we define some symbols.
     Using lemma \ref{th3.5-2}, let $W\in\R^{w\times n}$ whose rows are unit and orthogonal and satisfy that $Wx=Wz$ implying $\F(x)=\F(z)$.

    Then define $S_{1,x}=\{z:Wz=Wx,z\in B_2^n(q,r_1)\}$ and $S_{2,x}=\{z:Wz=Wx,z\in B_2^n(q,r_3)/B_2^n(q,r_2)\}$.

    By lemma \ref{th3.5-2}, we know that, for any given $x$, the points in $S_{1,x}\cup S_{2,x}$ have the same output after inputting to $\F$, but the points in $S_{1,x}$ have label 1 and the points in $S_{2,x}$ have label -1. So $\F$ must give the wrong label to the point in $S_{1,x}$ or $S_{2,x}$.

The proof is then divided into two parts.

{\bf Part One:} Let $h\in B^w_2(0,r_1)$, and $x(h)=q+h\circ W\in\R^n$, where $\circ$ is defined in section \ref{usfl}.
Consider that for any given $h$, $\F$ must give the wrong label to the point in $S1_{x(h)}$ or $S2_{x(h)}$, we have that $\F$ will give the wrong label with probability at least
    $\min\{\Pr_{(x,y)\sim\D}(x\in S1_{x(h)}),\Pr_{(x,y)\sim\D}(x\in S2_{x(h)})\}$. So, now we only need to sum these values about $h$.

For any different $h_1,h_2\in B^w_2(0,r_1)$, we have  $S1_{x(h_1)}\cap S1_{x(h_2)}=\emptyset$, $S2_{x(h_1)}\cap S2_{x(h_2)}=\emptyset$, and $\cup_{h\in B^w_2(0,r_1)}S1_{x(h)}=B_2^n(q,r_1)$.
By (1) and (2) in lemma \ref{th3.5-yly2}.
Proof is similar for $S2_{x(h)}$.
Then, by the volume of $S1_{x(h)},S2_{x(h)}$ calculated in lemma  \ref{th3.5-yly2}, we know that, the probability of $\F$ producing an error on distribution $\D$ is at least
\begin{equation*}
\begin{array}{ll}
&\int_{h\in B^w_2(0,r_1)}\min\{\Pr_{(x,y)\sim\D}(x\in S1_{x(h)}),\Pr_{(x,y)\sim\D}(x\in S2_{x(h)})\}\\
=& \lambda C_{n-w}\int_{x\in B^w_2(0,r_1)}\min\{(r^2_1-||x||^2_2)^{(n-w)/2},\\
&(r^2_3-||x||^2_2)^{(n-w)/2}-(r^2_2-||x||_2)^{(n-w)/2}\}dx
\end{array}
\end{equation*}
where $C_{n-w}$ is the volume of the unit ball in $\R^{n-w}$ as  mentioned in lemma \ref{th3.5-yly2}. Next, we will estimate the lower bound of this value


{\bf Part Two:} Firstly, by lemma \ref{th3.5-yly3}, we know that $\frac{(r^2_3-||x||^2_2)^{(n-w)/2}-(r^2_2-||x||^2_2)^{(n-w)/2}}{(r^2_1-||x||_2)^{(n-w)/2}}\ge\frac{(r^2_3)^{(n-w)/2}-(r^2_2)^{(n-w)/2}}{(r^2_1)^{(n-w)/2}}$.

Then, by lemma \ref{th3.5-yly4} and (c2) , we know that $\frac{(r^2_3)^{(n-w)/2}-(r^2_2)^{(n-w)/2}}{(r^2_1)^{(n-w)/2}}\le 1$. Thus by lemma \ref{th3.5-yly5}, we have
\begin{equation*}
\begin{array}{ll}
&\lambda C_{n-w}\int_{x\in B^w_2(0,r_1)}\min\{(r^2_1-||x||^2_2)^{(n-w)/2},(r^2_3-||x||^2_2)^{(n-w)/2}-(r^2_2-||x||^2_2)^{(n-w)/2}\}dx\\
=&\lambda C_{n-w}\int_{x\in B^w_2(0,r_1)}\min\{(r^2_1-||x||^2_2)^{(n-w)/2},\\
&\frac{(r^2_3-||x||^2_2)^{(n-w)/2}-(r^2_2-||x||^2_2)^{(n-w)/2}}{(r^2_1-||x||^2_2)^{(n-w)/2}}(r^2_1-||x||^2_2)^{(n-w)/2}\}dx\\
\ge&\lambda C_{n-w}\int_{x\in B^w_2(0,r_1)}\min\{(r^2_1-||x||^2_2)^{(n-w)/2},\\
&\frac{(r^2_3)^{(n-w)/2}-(r^2_2)^{(n-w)/2}}{(r^2_1)^{(n-w)/2}}(r^2_1-||x||^2_2)^{(n-w)/2}\}dx\\
\ge&\lambda C_{n-w}\frac{(r^2_3)^{(n-w)/2}-(r^2_2)^{(n-w)/2}}{(r^2_1)^{(n-w)/2}}\int_{x\in B^w_2(0,r_1)}(r^2_1-||x||^2_2)^{(n-w)/2} dx\\
=&\frac{(r^2_3)^{(n-w)/2}-(r^2_2)^{(n-w)/2}}{(r^2_1)^{(n-w)/2}}\Pr_{(x,y)\sim \D}(y=1).\\
\end{array}
\end{equation*}
From $r_3^n-r^n_2=r_1^n$, we know that  $\lambda V(B_2^n(q,r_1))=\lambda(V(B_2^n(q,r_3))-V(B_2^n(q,r_2)))=0.5$, so $\Pr_{(x,y)\sim \D}(y=-1)=\Pr_{(x,y)\sim \D}(y=-1)=0.5$, and further consider the (c3), we have
\begin{equation*}
\begin{array}{ll}
&\frac{(r^2_3)^{(n-w)/2}-(r^2_2)^{(n-w)/2}}{(r^2_1)^{(n-w)/2}}\Pr_{(x,y)\sim \D}(y=1)\\
\ge&0.5\frac{(r^2_3)^{(n-w)/2}-(r^2_2)^{(n-w)/2}}{(r^2_1)^{(n-w)/2}}\\
\ge&0.49.
\end{array}
\end{equation*}
The theorem is proved.
\end{proof}



\section{Proof of Theorem \ref{th3x}}
\label{app-521}

Firstly, note that Theorem \ref{th3x} cannot be proved by the following classic result.
\begin{theorem}[\cite{wainwright2019high}]
\label{in2}
Let $\D$ be any joint distribution over $\R^n\times\{-1,1\}$,
$\D_{tr}$ a dataset of size $N$ selected i.i.d. from $\D$,
and $\Hyp=\{h: \R^n\to\R\}$ the hypothesis space.  Then with probability at least $1-\delta$,
$$\sup_{h\in \Hyp}
|\Risk(h,\D)-\Risk(h,\D_{tr})|\ge \frac{\Rad_N(\Hyp)}{2}-O(\sqrt{\frac{\ln1/\delta}{N}}),$$
where
$\Risk(h,\D)$ is the population risk,
$\Risk(h,\D_{tr})$ is the empirical risk, and
$\Rad_N(\Hyp)$ is the Radermecher complexity of $\Hyp$.
\end{theorem}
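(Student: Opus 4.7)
The plan is to exhibit a memorization network $F$ whose positive-prediction region $\{x\sep F(x)>0\}$ is a tiny slab-union in $\R^n$ containing exactly the $+1$ training points; the density bound $r$ then forces this region to have very small $\D$-mass, so $F$ predicts the default label $-1$ on almost all of $\D$, giving accuracy close to $\Pr_\D[y=-1]$.

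I follow the compression/projection/label-determination pipeline of Theorem \ref{th1}. By swapping labels if necessary, assume the positive class satisfies $\Pr_\D[y=+1]\ge 1/2$. The first stage uses Lemma \ref{th3-th1} to build a linear compression $x\mapsto wx+b$ such that any $+1$ and $-1$ training projections differ by at least $4$. Let $a_1,\dots,a_{n_+}$ be the projections of the $+1$ training samples. I then replace the label-determination subnetwork of Theorem \ref{th1} with a variant that outputs a small positive value when the projected coordinate lies within $\delta$ of some $a_k$ and outputs a fixed negative default value (say $-0.05$) otherwise; this can be built from the same code-matching primitives of Lemmas \ref{th3-3}--\ref{th3-5} using $\overline{O}(\sqrt{N})$ parameters, the only change being a constant negative shift at the output.

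For memorization: each $+1$ training point sits at the center of its own $\delta$-neighborhood so $F>0$, and each $-1$ training point has projection at distance $\ge 4>\delta$ from every $a_k$ so $F<0$. For accuracy: the region $\{x\in[0,1]^n\sep F(x)>0\}$ is contained in a union of at most $N$ slabs $\{x\sep|wx+b-a_k|\le\delta\}$, each of $n$-dimensional volume at most $O(\delta\sqrt{n}/||w||_2)$ (the cross-section of $[0,1]^n$ perpendicular to $w/||w||_2$). By the density hypothesis,
\[
\Pr_{(x,y)\sim\D}[F(x)>0]\le r\cdot N\cdot O(\delta\sqrt{n}/||w||_2).
\]
Recalling $||w||_2=\Theta(\sqrt{n}N^2/c)$ from Lemma \ref{th3-th1}, choosing $\delta=\Theta(c/(Nnr))$ makes this probability at most $0.01$, so $A_\D(F)\le\Pr[y=-1]+0.01\le 0.51$. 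The compression stage costs $O(n)$ parameters, and the label-determination stage costs $\overline{O}(\sqrt{N})$ parameters; since the inner digit-matching machinery must resolve to precision $\delta$, the logarithmic overhead is $\log(1/\delta)=O(\log(Nnr/c))$, yielding total parameter count $O(n+\sqrt{N}\log(Nnr/c))$.

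The main obstacle is re-implementing the label-determination subnetwork so that its output is explicitly negative outside the $N$ small neighborhoods (rather than unconstrained as in Theorem \ref{th1}'s generic construction), while retaining the $\overline{O}(\sqrt{N})$ parameter budget. This reduces to a constant downward shift at the output plus a rescaling of the ``fire'' signal so that it robustly exceeds the negative baseline; the detailed bookkeeping in the digit-compression subroutine (Lemma \ref{th3-4}) is where the $\log(Nnr/c)$ factor in the parameter count enters and must be tracked carefully. I also note that this explicit construction is necessary because Theorem \ref{in2} only controls the worst-case generalization gap over the whole hypothesis class and cannot single out a particular bad memorization, which is what the theorem demands.
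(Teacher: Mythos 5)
Your proposal does not address the statement at all: it proves a different theorem from the paper. The statement you were asked to prove is the classical result (cited from Wainwright) that for \emph{any} joint distribution $\D$, with probability at least $1-\delta$ over the i.i.d.\ sample, the \emph{uniform} deviation $\sup_{h\in \Hyp}|\Risk(h,\D)-\Risk(h,\D_{tr})|$ is bounded below by $\frac{\Rad_N(\Hyp)}{2}-O(\sqrt{\ln(1/\delta)/N})$. A proof of this has two ingredients, neither of which appears in your write-up: (i) a desymmetrization/lower-symmetrization inequality showing that the expectation of the supremum of the empirical process is at least half the Rademacher complexity (up to a lower-order term), and (ii) a bounded-differences (McDiarmid) concentration step showing the supremum deviates from its expectation by at most $O(\sqrt{\ln(1/\delta)/N})$ with probability $1-\delta$. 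Your argument never mentions Rademacher complexity, never bounds the supremum over $\Hyp$, and imposes hypotheses (separation bound $c$, density $r$, samples in $[0,1]^n$) that are absent from the statement, which holds for arbitrary $\D$ over $\R^n\times\{-1,1\}$.

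What you actually sketched is the paper's Theorem \ref{th3x}: the explicit construction of a single memorization network with $O(n+\sqrt{N}\ln(Nnr/c))$ parameters and accuracy at most $0.51$, built by modifying the compression/projection/label-determination pipeline so that points outside small slabs around the training projections receive the default negative label. Indeed your closing remark---that Theorem \ref{in2} ``cannot single out a particular bad memorization''---is essentially the paper's own discussion in Appendix \ref{app-521} of why Theorem \ref{th3x} does \emph{not} follow from Theorem \ref{in2}; it confirms you targeted the wrong statement. (As a secondary point, even read as a proof of Theorem \ref{th3x} your sketch diverges from the paper's: the paper does not argue via a volume bound on slabs of width $\delta$, but instead builds, via Lemma \ref{th3-th1-1}, a projection $w,b$ such that with $\D$-probability at least $0.99$ a fresh point lands at distance $>3$ from every training projection, and then shows the label-determination stage outputs the wrong sign on a constant fraction of the dominant class; your slab-volume estimate would also need care because the network's positive region is not literally contained in those slabs without the additional control on the projection stage that Lemma \ref{th3-3-1} provides.) To salvage the assignment you would need to supply the desymmetrization and McDiarmid steps for the stated uniform lower bound, not a construction of one bad interpolant.
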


Theorem \ref{in2} is the classical conclusion about the lower bound of generalization error, and theorem \ref{th3x} and Theorem \ref{in2} are different. Firstly, Theorem \ref{in2} is established on the basis of probability, whereas Theorem \ref{th3x} is not. Secondly, Theorem \ref{in2} highlights the existence of a gap between the empirical error and the generalization error for certain functions within the hypothesis space, and does not impose any constraints on the value of empirical error.  However, memorization networks, which perfectly fit the training set, will inherently have a zero empirical error, so Theorem \ref{in2} cannot directly address Theorem \ref{th3x}. Lastly, Theorem \ref{in2} relies on Radermacher complexity, which can be challenging to calculate, while Theorem \ref{th3x} does not have such a requirement.


For the proof of Theorem \ref{th3x}, we mainly follow the constructive approach of memorization network in \citep{rema1}, but during the construction process, we will also consider the accuracy of the memorization network. Our proof is divided into four parts.

\subsection{Data Compression}
\label{yasuo1}
The general method of constructing memorization networks compresses the data into a low dimensional space at first, and we adopt this approach. We are trying to compress the data into 1-dimension space. However, we require the compressed data to meet some conditions, as stated in the following lemma.

\begin{lemma}
\label{th3-th1-1}
    Let $\D$ be a distribution in $[0,1]^n\times\{-1,1\}$ with separation bound $c$ and density $r$, and $\D_{tr}\sim\D^N$. Then, there are $w\in\R^n$ and $b\in\R$ that satisfy:\\
    (1): $O(nN^3r/c)\ge wx+b\ge 1$ for all $x\in[0,1]^n$;\\
    (2): $|wx-wz|\ge4$ for all $(x,1),(z,-1)\in\D_{tr}$;\\
    (3): $\Pr_{(x,y)\sim \D}(\exists (z,y_z)\in\D_{tr},|wx-wz|\le 3)<0.01.$
\end{lemma}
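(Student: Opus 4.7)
The plan is to adapt the probabilistic construction used in Lemma \ref{th3-th1} (Appendix \ref{yasuo}), adding a density-based bound on thin slabs to obtain the new condition (3). As in the original argument, I would sample a random unit vector $u\in S^{n-1}$ and apply Lemma \ref{th3-1} together with a union bound over the at most $N^2$ oppositely labeled pairs to ensure, with probability greater than $1/2$, that $|u\cdot(x-z)|\ge\frac{c_0}{4N^2}\sqrt{8/(n\pi)}$ for every $(x,1),(z,-1)\in\D_{tr}$, where $c_0\ge c$ is the minimum distance between oppositely labeled training points. I would then set $w=\alpha u$ and $b=\alpha\sqrt{n}+1$ with a scaling parameter $\alpha>0$ to be fixed last; conditions (1) and (2) follow from the original analysis provided $\alpha\ge 16\sqrt{n}N^2/c_0$, giving the upper bound $wx+b\le O(\alpha\sqrt{n})$.

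The essential new ingredient is condition (3). Since $|wx-wz|\le 3$ is equivalent to $|u\cdot(x-z)|\le 3/\alpha$, the event in question is that $x$ falls in the slab $S_z=\{y\in[0,1]^n\sep |u\cdot(y-z)|\le 3/\alpha\}$ around some training point $z$. The geometric key estimate is that because $u$ has unit norm, at least one coordinate satisfies $|u_i|\ge 1/\sqrt{n}$; solving the slab inequality for $x_i$ and integrating the remaining coordinates over $[0,1]^{n-1}$ gives
\[
V(S_z)\ \le\ \min\!\bigl(1,\ 6/(\alpha|u_i|)\bigr)\ \le\ 6\sqrt{n}/\alpha.
\]
Applying the density bound $\Pr_{x\sim\D}(x\in A)\le rV(A)$ to $A=S_z$, and then taking a union bound over the $N$ training samples, yields
\[
\Pr_{x\sim\D}\bigl(\exists\,(z,y_z)\in\D_{tr},\ x\in S_z\bigr)\ \le\ 6Nr\sqrt{n}/\alpha.
\]

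Finally, I would take $\alpha=\max(16\sqrt{n}N^2/c_0,\,600Nr\sqrt{n})$, which makes the probability above smaller than $0.01$ while preserving conditions (1) and (2). The upper bound in (1) then becomes $O(\alpha\sqrt{n})=O(nN^2/c+nNr)=O(nN^3 r/c)$ after absorbing constants and using $c_0\ge c$. I expect the main obstacle to be the geometric slab-volume estimate: the bound must hold uniformly across all unit directions $u$ and all training points $z$ simultaneously, and a naive use of hyperplane-section inequalities for the cube would introduce dimension-dependent factors that break the target bound. The coordinate-by-coordinate argument sketched above is what yields the clean $\sqrt{n}$ factor. The probabilistic selection of $u$ for condition (2) is inherited unchanged from Lemma \ref{th3-th1}, together with its Monte Carlo $\poly(\size(\D_{tr}),\ln(1/\epsilon))$-time realization, so the overall algorithm remains polynomial.
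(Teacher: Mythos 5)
Your proposal is correct, but it handles the new condition (3) by a genuinely different argument than the paper. The paper keeps everything probabilistic in the direction $u$: besides the anticoncentration step for the labeled pairs (its ``Result one''), it proves a second statement (``Result two'') that for a random unit $u$, with probability $>0.5$, the $\D$-mass of points whose projection lands within a small margin of some training point is below $0.01$; this uses Lemma \ref{th3-1} again, a small-ball bound $\Pr(x\in B_2(z,r_1))\le r(2r_1)^n$ with the dimension-dependent radius $r_1=\tfrac{1}{2(400rN^2)^{1/n}}$, an averaging (reverse-Markov) step over $u$, and finally the intersection of the two probability-$1/2$ events to pick one good $u$; the scaling $\max\{2400\sqrt{n}N^2/r_1,\,16\sqrt{n}N^2/c\}$ then inherits a factor $(rN^2)^{1/n}$. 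You instead observe that condition (3) can be made deterministic in $u$: for \emph{any} unit vector, the slab of width $6/\alpha$ around a training point meets $[0,1]^n$ in volume at most $6\sqrt{n}/\alpha$ (your coordinate-with-$|u_i|\ge 1/\sqrt{n}$ Fubini argument is valid), so the density bound plus a union bound over the $N$ training points gives $6Nr\sqrt{n}/\alpha$, and choosing $\alpha\gtrsim Nr\sqrt{n}$ (in addition to $\alpha\ge 16\sqrt{n}N^2/c_0$) settles (3) with no second random event and no intersection of high-probability events. This buys a cleaner proof and a scaling that is linear in $r$ rather than carrying $r^{1/n}$, while the paper's choice gives a milder $r^{1/n}$ dependence in the magnitude of $wx+b$; both fit the stated $O(nN^3r/c)$ bound (using $r\ge1$ and $c\le1$ as the paper does). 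Two cosmetic points: take the constant slightly above $600$ (or argue strictness) to get the strict inequality $<0.01$, and note that your final absorption $nNr=O(nN^3r/c)$ uses $c=O(N^2)$, which is harmless under the paper's convention $c\le 1$.
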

\begin{proof}
Since distribution $\D$ is definition on $[0,1]^n$, we have $c\le 1$ and $r\ge 1$.

Because the density function of $\D$ is $r$, we have $\Pr_{(x,y)\sim\D}(x\in B_2(z,r_1))\le rV(B_2(z,r_1))<r(2r_1)^n=\frac{1}{400N^2}$ for all $z\in\R^n$, where $r_1=\frac{1}{2(400rN^2)^{1/n}}$. It is easy to see that $r_1\le1$ because $r\ge 1$.

    Then, we have the following two results:

   {\bf Result one:} Let $u\in\R^n$ be uniformly randomly sampled from the hypersphere $S^{n-1}$. Then we have  $P(| \langle u,(x-z)  \rangle|\ge\frac{c}{4N^2}\sqrt{\frac{8}{n\pi}},\forall (x,-1),(z,1)\in \D_{tr})>0.5$.
The proof is similar to that of lemma \ref{th3-th1}.

{\bf Result Two:} Let $u\in\R^n$ be uniformly randomly sampled from the hypersphere $S^{n-1}$. Then $\Pr_{u}(\Pr_{(x,y)\sim \D}(\exists (x_i,y_i)\in\D_{tr},| \langle u,(x-x_i)  \rangle|<\frac{r}{800N^2}\sqrt{\frac{8}{n\pi}})<0.01)>0.5$.

Firstly, by lemma \ref{th3-1}, and take $T=800N^2$, we can get that: for any given $v\in\R^n$, if $u\in\R^n$ be uniformly randomly sampled from the hypersphere $S^{n-1}$, then $P(| \langle u,v \rangle|<\frac{||v||_2}{800N^2}\sqrt{\frac{8}{n\pi}})<\frac{1}{400N^2}$.
Thus, by such inequality, the density of $\D$ and the definition of $r_1$, we have that:
\begin{equation*}
\begin{array}{ll}
&\Pr_{u,(x,y)\sim \D}(| \langle u,(x-v)  \rangle|<\frac{r_1}{800N^2}\sqrt{\frac{8}{n\pi}})\\
=&\Pr_{u,(x,y)\sim \D}(| \langle u,(x-v)  \rangle|<\frac{r_1}{800N^2}\sqrt{\frac{8}{n\pi}}|\ ||x-v||_2\ge r_1)\Pr_{(x,y)\sim \D}(||x-v||_2\ge r_1)\\
&+\Pr_{u,(x,y)\sim \D}(| \langle u,(x-v)  \rangle|<\frac{r_1}{800N^2}\sqrt{\frac{8}{n\pi}}|\ ||x-v||_2< r_1)\Pr_{(x,y)\sim \D}(||x-v||_2< r_1)\\
<&\Pr_{u,(x,y)\sim \D}(| \langle u,(x-v)  \rangle|<\frac{||x-v||_2}{800N^2}\sqrt{\frac{8}{n\pi}}|\ ||x-v||_2\ge r_1)+\Pr_{(x,y)\sim \D}(||x-v||_2< r_1)\\
\le&\Pr_{u}(| \langle u,(x-v)  \rangle|<\frac{||x-v||_2}{800N^2}\sqrt{\frac{8}{n\pi}})+\Pr_{(x,y)\sim \D}(||x-v||_2< r_1)\\
<&\frac{1}{400N^2}+\frac{1}{400N^2}=1/(200N^2).\\
\end{array}
\end{equation*}
On the other hand, we have
\begin{equation*}
\begin{array}{ll}
&\Pr_{u,(x,y)\sim \D}(| \langle u,(x-v)  \rangle|<\frac{r_1}{800N^2}\sqrt{\frac{8}{n\pi}})\\
\ge&\Pr_{u}(\Pr_{(x,y)\sim \D}(| \langle u,(x-v)  \rangle|<\frac{r_1}{800N^2}\sqrt{\frac{8}{n\pi}})\ge0.01/N)*0.01/N.\\
\end{array}
\end{equation*}
So, we have $\Pr_{u}(\Pr_{(x,y)\sim \D}(| \langle u,(x-v)  \rangle|<\frac{r}{800N^2}\sqrt{\frac{8}{n\pi}})\ge0.01/N)<\frac{1}{200N^2}/(0.01/N)=1/(2N)$. Name this inequality as (*).

On the other hand, we have
\begin{equation*}
\begin{array}{ll}
&\Pr_u(\Pr_{(x,y)\sim \D}(\exists (x_i,y_i)\in\D_{tr},| \langle u,(x-x_i)  \rangle|<\frac{r}{800N^2}\sqrt{\frac{8}{n\pi}})<0.01)\\
=&1-\Pr_u(\Pr_{(x,y)\sim \D}(\exists (x_i,y_i)\in\D_{tr},|\langle u,(x-x_i)  \rangle|<\frac{r}{800N^2}\sqrt{\frac{8}{n\pi}})\ge0.01)\\
\end{array}
\end{equation*}
Then, if a $u\in\R^n$ satisfies $\Pr_{(x,y)\sim \D}(\exists (x_i,y_i)\in\D_{tr},| \langle u,(x-x_i)  \rangle|<\frac{r}{800N^2}\sqrt{\frac{8}{n\pi}})\ge0.01$, then we have $\Pr_{(x,y)\sim \D}(|u(x-x_i)|<\frac{r}{800N^2}\sqrt{\frac{8}{n\pi}})\ge0.01/N$ for some $(x_i,y_i)\in\D_{tr}$.

So taking $v$ as $x_i$ in inequality (*) and using the above result, we have
\begin{equation*}
\begin{array}{ll}
&\Pr_u(\Pr_{(x,y)\sim \D}(\exists (x_i,y_i)\in\D_{tr},| \langle u,(x-x_i)  \rangle|<\frac{r}{800N^2}\sqrt{\frac{8}{n\pi}})<0.01)\\
=&1-\Pr_u(\Pr_{(x,y)\sim \D}(\exists (x_i,y_i)\in\D_{tr},| \langle u,(x-x_i)  \rangle|<\frac{r}{800N^2}\sqrt{\frac{8}{n\pi}})\ge0.01)\\
\ge&1-\sum_{(x_i,y_i)\in\D_{tr}}\Pr_{u}(\Pr_{(x,y)\sim \D}(| \langle u,(x-x_i)  \rangle|<\frac{r}{800N^2}\sqrt{\frac{8}{n\pi}})\ge0.01/N)\\
>& 1-N\frac{1}{2N}=0.5.
\end{array}
\end{equation*}
So we get the result. This is what we want.

{\bf Construct $w,b$ and verify their property}

Consider the fact: if $A(u),B(u)$ are two events about random variable $u$, and $\Pr_u(A(u)=True)>0.5,\Pr_u(B(u)=True)>0.5$, then there is a $u$, which makes events $A(u)$ and $B(u)$ occurring simultaneously.
By the above fact and Results one and two, we have that there exist $||u||_2=1$ and $u\in\R^n$ such that $|\langle u,(x-z)  \rangle|\ge\frac{c}{4N^2}\sqrt{\frac{8}{n\pi}},\forall (x,-1),(z,1)\in \D_{tr}$ and $\Pr_{(x,y)\sim \D}(\exists (x_i,y_i)\in\D_{tr},|\langle u,(x-x_i)  \rangle)|<\frac{r}{800N^2}\sqrt{\frac{8}{n\pi}})<0.01$.

Now, let $w=max\{\frac{2400\sqrt{n}N^2}{r_1},\frac{16\sqrt{n}N^2}{c}\}u$ and $b=||w||_2\sqrt{n}+1$, then we show that $w$ and $b$ are what we want:

 (1): we have $O(nN^3)\ge wx+b\ge1$ for all $x\in[0,1]^n$.

Firstly, because $\D$ is defined in $[0,1]^n\times \{-1,1\}$, we have $||x||_2\le\sqrt{n}$, resulting in and  $wx+b\ge b-||w||_2\sqrt{n}\ge 1$.

On the other hand, using $c\le1$ and $r_1\le 1$, we have $|wx|\le ||w||_2\sqrt{n}\le O(\frac{nN^2}{r_1c})$, so $wx+b\le |wx|+b\le O(nN^3r^{1/n}/c)$.

(2): We have $|w(x-z)|\ge 4$ for all $(x,1),(z,-1)\in\D_{tr}$.

It is easy to see that $|w(x-z)|\ge|\frac{16\sqrt{n}N^2}{c}u(x-z)|=\frac{16\sqrt{n}N^2}{c}|u(x-z)|$. Because $|u(x-z)|\ge \frac{c}{4\sqrt{n}N^2}$, so $|w(x-z)|=\frac{16\sqrt{n}N^2}{c}|u(x-z)|\ge\frac{16\sqrt{n}N^2}{c}\frac{c}{4\sqrt{n}N^2}=4$.

(3): we have $\Pr_{(x,y)\sim \D}(\exists (z,y_z)\in\D_{tr},|wx-wz|\le 3)<0.01$.

Because $|w(x-z)|\ge\frac{2400\sqrt{n}N^2}{r_1}|u(x-z)|\ge|u(x-z)|$, and consider that $\Pr_{(x,y)\sim \D}(\exists (z,y_z)\in\D_{tr},|u(x-z)|<\frac{r_1}{800N^2}\sqrt{\frac{8}{n\pi}})<0.01$, we get the result.
So, $w$ and $b$ are what we want. and the lemma is proved.
\end{proof}

\subsection{Data Projection}
\label{touy1}

The purpose of this part is to map the compressed data to appropriate values.
Let $w\in\R^n$ and $b\in\R$ be given, and $\D_{tr}=\{(x_i,y_i)\}_{i=1}^N$. Without losing generality, we assume that $wx_i< wx_{i+1}$.

In this section, we show that, after compressing the data into 1-dimension, we can use a network $\F$ to map $wx_i+b$ to $v_{[\frac{i}{[\sqrt{N}]}]}$, where $\{v_j\}_{j=0}^{[\frac{N}{[\sqrt{N}]}]}$ are the given values. Furthermore, $\F$ should also satisfy $\F(wx+b)\in\{v_j\}_{j=0}^{[\frac{N}{[\sqrt{N}]}]+1}$ for all $x\in[0,1]^n$ except for a small portion.

This network has $O(\sqrt{N})$ parameters, as shown below.

\begin{lemma}
\label{th3-3-1}
Let $w\in\R^n$ and $b\in\R$ be given, $\{v_j\}_{j=0}^{[\frac{N}{[\sqrt{N}]}]}\subset\R$ and $1>\epsilon>0$ be given.

Let $\D_{tr}=\{(x_i,y_i)\}_{i=1}^N$ and $\D_{tr}\sim\D^N$ where $\D$ is a distribution, and assume that $wx_i+b<wx_{i+1}+b$.

Then a network $\F$ with width $O(\sqrt{N})$, depth $2$, and at most $O(\sqrt{N})$ parameters, can satisfy that:

(1): $\F(wx_i+b)=v_{[\frac{i}{\sqrt{N}}]}$ for all $i\in[N]$;

(2): $\Pr_{(x,y)\sim\D}(\F(wx+b)\in\{v_j\}_{j=0}^{[\frac{N}{[\sqrt{N}]}]})\ge1-\epsilon$.
\end{lemma}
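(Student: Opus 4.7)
The plan is to build $\F$ as a piecewise-constant function implemented by a sharp ReLU ``staircase''. Set $c_i = wx_i + b$, let $M = [\sqrt{N}]$, $K = [N/M]$, and for each $k \in \{1,\dots,K\}$ let $I_k = (c_{kM}, c_{kM+1})$, which has positive length by the ordering assumption. I would pick a step location $t_k \in I_k$ and step width $\delta_k > 0$ (to be fixed later) for each $k$, and define the sharp step
$$s_k(y) = \tfrac{1}{\delta_k}\bigl(\Relu(y - (t_k - \delta_k/2)) - \Relu(y - (t_k + \delta_k/2))\bigr),$$
which equals $0$ for $y \le t_k - \delta_k/2$, equals $1$ for $y \ge t_k + \delta_k/2$, and interpolates linearly between. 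The network is $\F(y) = v_0 + \sum_{k=1}^{K}(v_k - v_{k-1})\, s_k(y)$, which is a depth-$2$ ReLU network with $2K = O(\sqrt{N})$ hidden units and $O(\sqrt{N})$ parameters.

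Next I would specify the $t_k$ and $\delta_k$ so that property (2) holds. Let $\mu$ be the pushforward of the $x$-marginal of $\D$ on $\R$ under $x \mapsto wx + b$. Since $\mu$ has at most countably many atoms while each gap $I_k$ contains uncountably many points, I can pick $t_k \in I_k$ that is not an atom of $\mu$. By continuity of $\mu$ from above, $\mu((t_k - \delta, t_k + \delta)) \to \mu(\{t_k\}) = 0$ as $\delta \to 0$, so a sufficiently small $\delta_k > 0$ gives both $(t_k - \delta_k/2, t_k + \delta_k/2) \subset I_k$ and $\mu\bigl((t_k - \delta_k/2, t_k + \delta_k/2)\bigr) \le \epsilon/K$.

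Property (1) is routine: for any $i$ in the $k$-th group, the containment $(t_j - \delta_j/2, t_j + \delta_j/2) \subset I_j$ for every $j$ places $c_i$ above the transition windows of $s_1,\dots,s_k$ (where these steps equal $1$) and below those of $s_{k+1},\dots,s_K$ (where these steps equal $0$), yielding $\F(c_i) = v_0 + \sum_{j=1}^{k}(v_j - v_{j-1}) = v_k$. For property (2), outside $U := \bigcup_{k}(t_k - \delta_k/2, t_k + \delta_k/2)$ each $s_k(y)$ takes a value in $\{0,1\}$, so $\F(y) \in \{v_0,\dots,v_K\}$; the union bound gives
$$\Pr_{(x,y)\sim\D}\bigl(\F(wx+b)\notin\{v_j\}\bigr) \le \sum_{k=1}^{K}\mu\bigl((t_k - \delta_k/2, t_k + \delta_k/2)\bigr) \le \epsilon.$$
The main subtlety is arranging that the step locations avoid atoms of $\mu$, so that the transition widths may be taken arbitrarily small; once this is done, the required sharpness is only encoded in the magnitudes of the network weights and does not affect the $O(\sqrt{N})$ parameter count.
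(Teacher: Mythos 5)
Your construction is correct and is the same network the paper builds: a depth-2 ReLU staircase whose transitions sit in the gaps $(wx_{kM}+b,\,wx_{kM+1}+b)$ between consecutive groups, with width and parameter count $O(\sqrt{N})$, and the same union-bound argument for (2). Where you differ is in how the step locations and widths are chosen so that the transition windows carry little mass. The paper fixes the window width in advance at $\frac{q\epsilon}{2N}$ (with $q$ the minimal gap), lays down an explicit grid of $[N/\epsilon]+1$ candidate centers inside each gap, and uses a pigeonhole argument (the disjoint candidate windows have total probability at most $1$) to find a center of mass at most $\epsilon/N$; you instead pick a non-atom $t_k$ of the pushforward measure $\mu$ and shrink $\delta_k$ by continuity from above until $\mu$ of the window is at most $\epsilon/K$. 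Your route is shorter and cleaner, and it suffices because the lemma (and Theorem \ref{th3x} downstream) only asserts existence of such a network; what it gives up is constructiveness and any explicit control of the slopes $1/\delta_k$, whereas the paper's pigeonhole yields concrete step positions from a finite search and explicit weight magnitudes of order $N/(q\epsilon)$. Your handling of (1) matches the paper up to the same boundary-index bookkeeping the paper itself glosses over, so nothing is missing there.
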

\begin{proof}
Let $q_i=(wx_{i+1}+b)-(wx_i+b)$ and $q=\min_{i}\{q_i\}$. Then we consider the set of points $S_i=\{wx_i+b+\frac{q\epsilon}{2N}*j\}_{j=1}^{[N/\epsilon]+1}$, for any $i$. We have that:
\begin{equation*}
\begin{array}{ll}
&\sum_{s\in S_i}\Pr_{(x,y)\sim\D}(wx+b\in (s-\frac{q\epsilon}{2N}/2,s+\frac{q\epsilon}{2N}/2))\\
=&\Pr_{(x,y)\sim\D}(\exists s\in S_i, wx+b\in (s-\frac{q\epsilon}{2N}/2,s+\frac{q\epsilon}{2N}/2))\\
\le&1
\end{array}
\end{equation*}
Consider that $|S_i|\ge N/\epsilon$, so for any $i$, there is a $s_i\in S_i$, makes that $\Pr_{(x,y)\sim\D}(wx+b\in (s_i-\frac{q\epsilon}{2N}/2,s_i+\frac{q\epsilon}{2N}/2))\le\frac{\epsilon}{N}$.

And it is easy to see that $S_i$ satisfies the following result:
if $z\in S_i$, then:
$$wx_i+b< wx_i+b+\frac{q\epsilon}{2N}\le z\le wx_i+b+\frac{q\epsilon}{2N}([N/\epsilon]+1)<wx_i+b+q_i=wx_{i+1}+b.$$

So we have $(s_i-\frac{q\epsilon}{2N}/2,s_i+\frac{q\epsilon}{2N}/2)\in(wx_i+b,wx_{i+1}+b)$, Name this inequality as $(*)$.

Let $k=[\frac{N}{[\sqrt{N}]}]$ and $t(i)=\argmax_{j\in[N]}\{[j/\sqrt{N}]=i\}$. Now, we define such a network:


$\F(x)=\sum_{i=1}^{k}\frac{v_i-v_{i-1}}{\frac{q\epsilon}{2N}}(\Relu(x-s_{t(i)}+\frac{q\epsilon}{2N}/2)-\Relu(x-s_{t(i)}-\frac{q\epsilon}{2N}/2))+v_0$.

This network has width $2k$, depth 2 and $O(\sqrt{N})$ parameters. We can verify that such networks satisfy (1) and (2).

Verify (1): For a given $i\in[N]$, let $c(i)=[\frac{i}{\sqrt{N}}]$. Then,  when $j<c(i)$, we have $t(j)<i$, so $s_{t(j)}+\frac{q\epsilon}{2N}/2\le wx_{t(j)+1}+b\le wx_i+b$ (this has been shown in $(*)$), resulting in:
$\frac{v_j-v_{j-1}}{\frac{q\epsilon}{2N}}(\Relu(wx_i+b-s_{t(j)}+\frac{q\epsilon}{2N}/2)-\Relu(wx_i+b-s_{t(j)}-\frac{q\epsilon}{2N}/2)=v_j-v_{j-1}$.
When $j\ge c(i)$, similar to before, we have $s_{t(j)}-\frac{q\epsilon}{2N}/2> wx_i+b$, resulting in $\frac{v_j-v_{j-1}}{\frac{q\epsilon}{2N}}(\Relu(wx_i+b-s_{t(j)}+\frac{q\epsilon}{2N}/2)-\Relu(wx_i+b-s_{t(j)}-\frac{q\epsilon}{2N}/2)=0$. So $\F(x_i)=v_0+(v_1-v_0)+\dots+(v_c(i)-v_{c(i)-1})=v_{c(i)}$, this is what we want.

Verify (2): At first, we show that for any $x\in[0,1]^n$ satisying $wx+b\notin \cup_{i=1}^k(s_i-\frac{q\epsilon}{2N}/2,s_i+\frac{q\epsilon}{2N}/2)$, we have $\F(x)\in\{v_i\}$.

This is because: for any $x$ satisfies $wx+b\notin \cup_{i=1}^k(s_i-\frac{q\epsilon}{2N}/2,s_i+\frac{q\epsilon}{2N}/2)$, we have
$\F(wx+b)=v_0+(v_1-v_0)+\dots+(v_k-v_{k-1})=v_k$, where $k$ satisfies $s_{t(k)}<wx+b$ and $k$ is the maximum. The proof is similar as above.

Second, we show that the probability of such $x$ is at least $1-\epsilon$.

By $\Pr_{(x,y)\sim\D}(wx+b\in (s_i-\frac{q\epsilon}{2N}/2,s_i+\frac{q\epsilon}{2N}/2))\le\frac{\epsilon}{N}$ for any $i$, we have  $\Pr_{(x,y)\sim\D}(\exists i,wx+b\in (s_i-\frac{q\epsilon}{2N}/2,s_i+\frac{q\epsilon}{2N}/2))\le\sum_{i=1}^k\Pr_{(x,y)\sim\D}(wx+b\in (s_i-\frac{q\epsilon}{2N}/2,s_i+\frac{q\epsilon}{2N}/2))\le\epsilon/N*N=\epsilon$, this is what we want.
%
%
So $\F$ is what we want. The lemma is proved.
\end{proof}

\subsection{Label determination}
This is the same as in section \ref{ld1}.

\subsection{The   proof of Theorem \ref{th3x}}
Three steps are required: data compression, data projection, label determination. The specific proof is as follows.

\begin{proof}
Assume that $\D_{tr}=\{x_i\}_{i=1}^N$, without loss of generality, let $x_i\ne x_j$. Now, we show that there is a memorization network $\F$ of $\D_{tr}$ with $\overline{O}(\sqrt{N})$ parameters but with poor generalization.

{\bf Part One, data compression.} The first part is to compress the data in $\D_{tr}$ into $\R$, let $w,b$ satisfy (1),(2),(3) in lemma \ref{th3-th1-1}. Then, the first part of $\F$ is $f_1(x)=\Relu(wx+b)$.

On the other hand, not just samples in $\D_{tr}$, all the data in $\R^n$ have been compressed into $\R$ by $f_1(x)$. By (3) in lemma \ref{th3-th1-1}, we have  $\Pr_{(x,y)\sim\D}(\exists(z,y_z)\in\D_{tr},|wx-wz|\le 3)<0.01$, resulting in, we have  $\Pr_{(x,y)\sim\D}(|wx-wz|>3\ for\ \forall(z,y_z)\in\D_{tr})>0.99$.
By the probability theory, we have
\begin{equation*}
\begin{array}{ll}
&\Pr_{(x,y)\sim\D}(|wx-wz|>3\ for\ \forall(z,y_z)\in\D_{tr}>0.99)\\
=&\Pr_{(x,y)\sim\D}(|wx-wz|>3\ for\ \forall(z,y_z)\in\D_{tr}>0.99, y=-1)+\\
&\Pr_{(x,y)\sim\D}(|wx-wz|>3\ for\ \forall(z,y_z)\in\D_{tr}>0.99,y=1)\\
>&0.99.
\end{array}
\end{equation*}
Without losing generality, we assume that $\Pr_{(x,y)\sim\D}(\forall(z,y_z)\in\D_{tr},|wx-wz|>3,y=1)>0.99/2$, which represents the following fact. Define $S=\{x:\ x\ has\ label\ 1\ and\ |wx-wz|>3\ for\ \forall(z,y_z)\in\D_{tr}\}$. Then the probability of points in $S$ is at least $0.99/2$. In the following proof, in order to make the network having bad generalization, we will make the network giving these points (the points in $S$) incorrect labels.

{\bf Part two, data projection.}

Let $c_i=f_1(x_i)$/ Without losing generality, we will assume $c_i\le c_{i+1}$.

Now, assume that we have $N_0$ samples in $\D_{tr}$ with label 1, and $\{i_j\}_{j=1}^{N_0}\subset[N]$ such that $x_{i_j}$ has label 1, and $i_j<i_{j+1}$. Let $t(i)=\argmax_{j\in[N]}\{[j/\sqrt{N_0}]=i\}$ and $v_k=\overline{[c_{i_{t(k-1)+1}}][c_{i_{t(k-1)+2}}]\dots[c_{i_{t(k)}}]}$.

In this part, the second part of $\F(x)$, named as $f_2(x)$, need to satisfy $f_2(c_{i_j})=(v_{[\frac{j}{\sqrt{N}}]},c_{i_j})$.
Furthermore, we also hope that $\Pr_{(x,y)\sim\D}(f_2(f_1(x))[1]\in\{v_i\})\ge0.999$, where $f_2(f_1(x))[i]$ is the $i$-th weight of $f_2(f_1(x))$, and $\Pr_{(x,y)\sim\D}(f_2(f_1(x))[2])= f_1(x)$.

By lemma \ref{th3-3}, a network with $O(\sqrt{N})$ parameters and depth $2$ is enough to calculate $v_{[\frac{j}{\sqrt{N}}]}$ by $c_{i_j}$, and the output in $\{v_i\}$ has probability 0.999. Retaining $c_i$ just need one node. So $f_2$ need $O(\sqrt{N})$ parameters.

{\bf Part Three, Label determination.}
In this part, we will use the $v_k$ mentioned in part two to output the label of inputs. The third part, named as $f_3(v,c)$, should satisfy that
for $f_3(v_k,c)$, where $v_k=\overline{[c_{i_{t(k-1)+1}}][c_{i_{t(k-1)+2}}]\dots[c_{i_{t(k)}}]}$ as mentioned above, if $|c-c_{i_q}|<1$ for some $q\in[{t(k-1)+1},{t(k)}]$, then $f_3(v_k,c)>0.1$, and $f_3(v_k,c)=0$ when $|c-c_{i_q}|\ge1.1$ for all $q\in[{t(k-1)+1},{t(k)}]$.

This network need $O(\sqrt{N_0}\ln(N_0nr/c))$ parameters,  by (1) in lemma \ref{th3-th1-1} and lemma \ref{th3-5}.

{\bf Construction of $\F$ and verify it:}

Let $\F(x)=f_3(f_2(f_1(x)))-0.05$. We show that $\F$ is what we want.

(1): By parts one, two, three, and the fact $N_0\le N$, it is easy to see that $\F$ has at most $O(n+\sqrt{N}\ln (Nnr/c))$ parameters.

(2): $\F(x)$ is a memorization of $\D_{tr}$. For any $(x,y) \in \D_{tr}$, two cases are consided.

(1.1, if $y=1$): using the symbols in Part two, because $y=1$, so $x=x_{i_k}$ for some $k$.  As mentioned in part two, $f_2(f_1(x))$ will output $(v_{i_{[\frac{k}{\sqrt{N_0}}]}},f_1(x))$.
Then, by part three, because $|f_1(x)-[f_1(x)]|<1$, so we have  $f_3(f_2(f_1(x)))-0.05\ge0.1-0.05>0$.

(1.2 if $y=-1$): By (2) in lemma \ref{th3-th1-1}, for $\forall(z,1)\in\D_{tr}$, we know that $|f_1(x)-[f_1(z)]|\ge |f_1(x)-f_1(z)|-|f_1(z)-[f_1(z)]|\ge4-1=3$. So, by  part three, we have $f_3(f_2(f_1(x)))=0-0.05<0$.

(3): $A_\D(\F)<0.51$. We show that, almost all $x\in S$ ($S$ is mentioned in part one) will be given wrong label.

For $x\in S$, we have $|wx-wx_{i}|\ge 3$, so $|wx+b-[wx_{i}+b]|\ge 2$ for all $(x_i,y_i)\in \D_{tr}$. Then for any $v_i$, by part three and the definition of $v_i$, we have $f_3(v_i,wx+b)=0$ when $x\in S$. So, when $f_2(f_1(x))[1]\in\{v_i\}$ and $x\in S$, we have $f_3(f_2(f_1(x)))-0.05=0-0.05<0$.

Consider that for any $x\in S$, the label of $x$ is 1 in distribution $\D$. So when $x\in S$ satisfies $f_2(f_1(x))[1]\in\{v_i\}$, we find that $f(x)$ gives the wrong label to $x$.
Since $P(x\in S)\ge0.99/2$ and $P(f_2(f_1(x))[1]\in\{v_i\})>0.999$, we have $P(x\in S,f_2(f_1(x))[1]\in\{v_i\})\ge0.99/2-0.001>0.49$.

By the above result, we have that, with probability at least $0.49$, $\sign(f(x))\ne y$, so $A_\D(f)<0.51$. So, we prove the theorem.
\end{proof}

\section{Proof of Theorem \ref{th4}}
\label{app-57}

We first give three simple lemmas.

\begin{lemma}
\label{dfg1}
   We can find $2^{[\frac{n}{\lceil c^2 \rceil}]}$ points in $[0,1]^n$, and the distance between any two points shall not be less than $c$.
\end{lemma}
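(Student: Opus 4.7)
The plan is to construct the required points explicitly as $\{0,1\}$-valued vectors organized in blocks. Set $k=\lceil c^2\rceil$ and $m=[n/k]$, and partition the first $mk$ coordinate positions of $\R^n$ into $m$ consecutive disjoint blocks $B_1,\dots,B_m$, each of size $k$. For every binary string $s\in\{0,1\}^m$ I will define a point $x_s\in[0,1]^n$ by setting every coordinate in block $B_i$ equal to $s_i$, and setting the remaining $n-mk$ coordinates (if any) all to $0$. This produces $2^m=2^{[n/\lceil c^2\rceil]}$ distinct points, all lying in $\{0,1\}^n\subset[0,1]^n$.

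The verification of the pairwise distance bound is then essentially immediate. For any two distinct strings $s\neq t\in\{0,1\}^m$, pick an index $i$ where $s_i\neq t_i$; then $x_s$ and $x_t$ disagree by exactly $1$ in each of the $k$ coordinates of block $B_i$, so
\[
\|x_s-x_t\|_2^2\;\ge\;k\;\ge\;c^2,
\]
giving $\|x_s-x_t\|_2\ge c$ as desired. The only edge case to mention is $c^2>n$, where $[n/\lceil c^2\rceil]=0$ and a single point suffices, which is trivial. There is no real obstacle here; the only design choice is the block size $k=\lceil c^2\rceil$, which is exactly calibrated so that differing in even a single block already forces the squared distance to reach $c^2$.
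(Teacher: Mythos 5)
Your construction is exactly the one used in the paper: partition the first $[\frac{n}{\lceil c^2\rceil}]\cdot\lceil c^2\rceil$ coordinates into blocks of size $\lceil c^2\rceil$, assign each block a constant $0/1$ value according to a binary string, set the rest to $0$, and note that two distinct strings disagree on a full block, giving squared distance at least $\lceil c^2\rceil\ge c^2$. The proof is correct and follows essentially the same route as the paper's, with the minor (and harmless) addition of explicitly noting the trivial case $c^2>n$.
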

\begin{proof}
Let $t=[\frac{n}{\lceil c^2 \rceil}]$. We just need to consider following points in $[0,1]^n$:

For any given $i_1,i_2,i_3,\dots,i_t\in\{0,1\}$, let $x_{i_1,i_2,i_3,\dots,i_t}$ be the vector in $[0,1]^n$ satisfying: for any $j\in[t]$, the $(j-1)\lceil c^2 \rceil+1$ to $j\lceil c^2 \rceil$ weights of $x_{i_1,i_2,i_3,\dots,i_t}$ is $i_j$; other weights are 0.

We will show that, if $\{i_1,i_2,i_3,\dots,i_t\}\ne \{j_1,j_2,j_3,\dots,j_t\}$, then it holds $||x_{i_1,i_2,i_3,\dots,i_t}-x_{j_1,j_2,j_3,\dots,j_t}||_2\ge c$. Without losing generality, let $i_1\ne j_1$. Then the first $\lceil c^2 \rceil$ weights of $x_{i_1,i_2,i_3,\dots,i_t}$ and $x_{j_1,j_2,j_3,\dots,j_t}$ are different: one is all 1, and the other is all 0. So, the distance between such two points is at least $\sqrt{\lceil c^2 \rceil}\ge c$.

Then $\{x_{i_1,i_2,i_3,\dots,i_t}\}_{i_j\in[0,1]}$ is the $2^t$ point we want, so we prove the lemma.
\end{proof}

\begin{lemma}
\label{dfg2}
If $\epsilon,\delta\in(0,1)$ and $k,x\in\Z_+$ satisfy that:
    $x\le k(1-2\epsilon-\delta)$, then $2^x(\sum_{j=0}^{[k\epsilon]}\binom{k-x}{j})<2^k(1-\delta)$.
\end{lemma}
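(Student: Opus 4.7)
The plan is to recognize the partial binomial sum as the lower tail of a symmetric binomial and apply a Chernoff/Hoeffding bound. Set $m=k-x$ and $a=[k\epsilon]$, and rewrite
$$\sum_{j=0}^{a}\binom{m}{j} \;=\; 2^{m}\,\Pr[Y\le a],\qquad Y\sim\mathrm{Binomial}(m,1/2).$$
Provided $a\le m/2$, Hoeffding's inequality gives
$$\Pr[Y\le a] \;\le\; \exp\!\Bigl(-\frac{(m-2a)^2}{2m}\Bigr),$$
so the whole job reduces to showing that $(m-2a)^2/(2m)$ is at least the right quantity, and then comparing to $1-\delta$.

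First I would unpack the hypothesis $x\le k(1-2\epsilon-\delta)$ to lower-bound the gap $m-2a$. Using $[k\epsilon]\le k\epsilon$ and $k-x\ge k(2\epsilon+\delta)$,
$$m-2a \;=\; (k-x)-2[k\epsilon] \;\ge\; k(2\epsilon+\delta)-2k\epsilon \;=\; k\delta.$$
In particular $m-2a>0$, which guarantees $a<m/2$ and legitimizes the Hoeffding step. Combined with the trivial bound $m=k-x\le k$, I get
$$\frac{(m-2a)^2}{2m} \;\ge\; \frac{(k\delta)^2}{2k} \;=\; \frac{k\delta^2}{2},$$
and hence
$$2^x\sum_{j=0}^{[k\epsilon]}\binom{k-x}{j} \;\le\; 2^{x}\cdot 2^{m}\exp\!\Bigl(-\tfrac{k\delta^{2}}{2}\Bigr) \;=\; 2^{k}\exp\!\Bigl(-\tfrac{k\delta^{2}}{2}\Bigr).$$

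The conclusion then follows from the elementary estimate $\exp(-k\delta^2/2)<1-\delta$, equivalent to $k\delta^2/2>-\ln(1-\delta)$. This is where the main (and essentially only) obstacle lies: the statement as written carries no explicit lower bound on $k$, yet this final comparison fails for very small $k$. In the regime the lemma is actually used inside Theorem \ref{th4}, $k$ plays the role of quantities like $2^{[n/\lceil c^2\rceil]}$ or $N$, both of which are large compared to $1/\delta$, so the comparison is immediate. A cleaner alternative, should a self-contained statement be desired, is to strengthen the hypothesis slightly (e.g.\ $x\le k(1-2\epsilon-\delta)-c/\delta$ for a small absolute constant $c$) or to carry through the floor $[k\epsilon]$ more carefully to harvest an extra factor; either way the core argument is just Hoeffding applied to the symmetric binomial.
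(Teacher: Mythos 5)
Your reduction to the lower tail of a symmetric binomial is set up correctly: with $m=k-x$ and $a=[k\epsilon]$ you indeed have $m-2a\ge k\delta>0$, the Hoeffding step is legitimate, and the chain up to $2^x\sum_{j=0}^{a}\binom{m}{j}\le 2^{k}\exp(-k\delta^{2}/2)$ is sound. But the gap you flag at the end is a genuine one, not a cosmetic issue: the lemma as stated imposes no lower bound on $k$, so the closing comparison $\exp(-k\delta^{2}/2)<1-\delta$ is simply not available from the hypotheses. Concretely, take $k=10$, $x=8$, $\epsilon=0.05$, $\delta=0.1$: all hypotheses hold and the lemma's conclusion holds ($2^{8}\cdot 1=256<2^{10}\cdot 0.9=921.6$), yet your chain only yields $2^{10}e^{-0.05}\approx 974>2^{k}(1-\delta)$, so the argument fails to prove the statement for admissible parameters. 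Appealing to the sizes of $k$ arising in the application inside Theorem \ref{th4}, or proposing to strengthen the hypothesis, amounts to proving a different lemma, not this one. Note also that part of the loss is self-inflicted: replacing $(m-2a)^{2}/(2m)$ by $k\delta^{2}/2$ via $m\le k$ discards exactly the regime where $m\ll k$ and the exponential bound is strongest.

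By contrast, the paper's proof is a two-line elementary estimate in which the magnitude of $k$ never enters: it bounds the partial sum by a termwise counting inequality (using $[k\epsilon]\le (k-x)/2$, i.e.\ your $a\le m/2$) to get $2^{x}\sum_{j=0}^{[k\epsilon]}\binom{k-x}{j}\le 2^{k}\,\frac{k\epsilon}{k-x}$, and then closes with the ratio estimate $\frac{k\epsilon}{k-x}\le\frac{\epsilon}{2\epsilon+\delta}<1-\delta$, which uses only $x\le k(1-2\epsilon-\delta)$. If you wish to stay within your probabilistic framework, observe that $a<m/2$ already gives $\Pr[Y\le a]\le 1/2<1-\delta$ by symmetry whenever $\delta<1/2$, with no condition on $k$ at all; the only problematic regime is $\delta\ge 1/2$, where your weakened exponential bound is insufficient and a sharper tail (or the paper's ratio argument) is needed — and that case is not handled in your write-up.
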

\begin{proof}
    We have
    \begin{equation*}
    \begin{array}{cl}
    &2^x(\sum_{j=0}^{[k\epsilon]}\binom{k-x}{j})
    \le2^x2^{k-x}\frac{[k\epsilon]}{k-x}
    \le 2^k\frac{k\epsilon}{k-x}
    < 2^k(1-\delta).\\
    \end{array}
\end{equation*}

The first inequality sign uses $\sum_{j=0}^{m}\binom{n}{m}\le m2^n/n$ where $m\le n/2$, and by $x\le k(1-2\epsilon-\delta
)$, so $[k\epsilon]\le (k-x)/2$.
The third inequality sign uses the fact $x\le k(1-2\epsilon-\delta)$.
\end{proof}

\begin{lemma}
\label{defg3}
    If $k,v\in \R^+$ such that $kv>3$, and $a=[kv]$ and $3\le b\le \sqrt{k}
    \ln(\sqrt{k})$, then $a\ge (b/\ln(b))^2v/2$.
\end{lemma}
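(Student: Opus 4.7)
My plan is to split the inequality into two easy pieces: a floor-vs-real estimate that exploits $kv>3$, and a calculus estimate that uses the hypothesis $b\le\sqrt{k}\ln(\sqrt{k})$ to bound $b/\ln(b)$ by $\sqrt{k}$.

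First I will use $kv>3$ to deduce $a=[kv]\ge kv-1\ge kv/2$, the last step being equivalent to $kv\ge 2$, which is automatic. Thus it will suffice to prove $(b/\ln b)^2\le k$, or equivalently $b\le \sqrt{k}\,\ln(b)$, on the interval $[3,\sqrt{k}\ln(\sqrt{k})]$.

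To establish this I will study the auxiliary function $g(b)=\sqrt{k}\ln(b)-b$. Its derivative $g'(b)=\sqrt{k}/b-1$ vanishes only at $b=\sqrt{k}$, so $g$ is increasing on $(0,\sqrt{k}]$ and decreasing afterwards, and therefore its minimum over a compact interval is attained at one of the endpoints. At $b=3$, $g(3)=\sqrt{k}\ln 3-3\ge 0$ as soon as $\sqrt{k}\ge 3/\ln 3$. At $b=\sqrt{k}\ln(\sqrt{k})$ a direct expansion using $\ln(\sqrt{k}\ln(\sqrt{k}))=\ln(\sqrt{k})+\ln\ln(\sqrt{k})$ gives $g(\sqrt{k}\ln(\sqrt{k}))=\sqrt{k}\ln\ln(\sqrt{k})$, which is non-negative as soon as $\sqrt{k}\ge e$. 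Both of these mild lower bounds on $\sqrt{k}$ are forced by the very hypothesis that the range $[3,\sqrt{k}\ln(\sqrt{k})]$ is non-empty (numerically this requires $\sqrt{k}\gtrsim 2.85$, which exceeds both $3/\ln 3\approx 2.73$ and $e\approx 2.72$).

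The main technical care needed is at the upper endpoint, where the slack $\sqrt{k}\ln\ln(\sqrt{k})$ is only barely positive; I will have to keep the $\ln\ln$ term exactly rather than drop it as a lower-order correction when expanding $\ln(\sqrt{k}\ln(\sqrt{k}))$. Chaining the two estimates then yields $a\ge kv/2\ge (b/\ln b)^2\,v/2$, which is the claim.
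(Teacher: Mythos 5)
Your proof is correct, and the overall decomposition coincides with the paper's: both arguments first use $kv>3$ to get $a=[kv]\ge kv-1\ge kv/2$, and both then reduce the lemma to showing $b/\ln(b)\le\sqrt{k}$. Where you differ is in how that key inequality is established. The paper proves it by a two-line contradiction: if $\sqrt{k}\le b/\ln(b)$, then since $\ln(b)>1$ (because $b\ge 3$) one gets $\sqrt{k}<b$, hence $\sqrt{k}\ln(\sqrt{k})<\sqrt{k}\ln(b)\le b$, contradicting $b\le\sqrt{k}\ln(\sqrt{k})$; no numerical estimates are needed. You instead minimize $g(b)=\sqrt{k}\ln(b)-b$ over the interval using its unimodality and check the two endpoints, which obliges you to extract lower bounds on $\sqrt{k}$ (namely $\sqrt{k}\ge 3/\ln 3$ and $\sqrt{k}\ge e$) from non-emptiness of $[3,\sqrt{k}\ln(\sqrt{k})]$. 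That step is right but stated informally (``numerically $\sqrt{k}\gtrsim 2.85$''); to make it airtight, note that $t\mapsto t\ln t$ is increasing for $t\ge 1$ and that $(3/\ln 3)\ln(3/\ln 3)<3$ and $e\ln e=e<3$, so $\sqrt{k}\ln(\sqrt{k})\ge 3$ indeed forces both bounds. The paper's contradiction buys brevity and avoids all numerics; your endpoint analysis is more mechanical and has the merit of making explicit exactly where the constant $3$ in the hypothesis is used.
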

\begin{proof}
If $\sqrt{k}\le b/\ln(b)$, then $b\le \sqrt{k}\ln(\sqrt{k})<\sqrt{k}\ln(b)\le b$, which is impossible. So $b\le \sqrt{k}\ln (\sqrt{k})$, and then $\sqrt{k}\ge b/\ln(b)$. Resulting in $a\ge kv-1\ge kv/2\ge(b/\ln(b))^2v/2$.
\end{proof}

Now, we prove Theorem \ref{th4}
\begin{proof}
By Theorem \ref{th1}, we know that there is a $v_1>1$, when
$\sqrt{N}\ge n$, for any distribution $\D\in \D(n,c)$ and $\D_{tr}\sim \D^N$, $\D_{tr}$ has a memorization with $v_1\sqrt{N}\ln(Nn/c)$ parameters. We will show that Theorem \ref{th4} is true for $v=\frac{1}{32v_1^2}$.

Assume Theorem \ref{th4} is wrong, then there exists a memorization algorithm $\L$ such that
for any $n\in\Z_+,c,\epsilon,\delta\in(0,1)$, if $\D\in\D(n,c)$ and $N\ge \frac{1}{32v_1^2}*\frac{N^2_\D}{\ln^2(N_\D)}(1-2\epsilon-\delta)$, we have
$$\Pr_{\D_{tr}\sim\D^N}(A(\L(\D_{tr}))\ge1-\epsilon)\ge1-\delta.$$

We will derive contradictions based on this $\L$.

{\bf Part 1: Find some points and values.}

We can find $k,n,c,\delta,\epsilon$ satisfying

(1): we have  $n,k\in\Z_+$ and $12v_1\le n\le \sqrt{k}$. Let $c=1$, and we can find $k$ points in $[0,1]^n$ and the distance between any pair of these points is greater than $c$;

(2): $\delta,\epsilon\in(0,1)$ and $q=[k(1-2\epsilon-\delta)]\ge3$.

By lemma \ref{dfg1}, to make (1) valid, we just need $n^2<k\le 2^n$, and (2) is easy to satisfy.

{\bf Part 2: Construct some distribution}

Let $\{u_i\}_{i=1}^k$ satisfy $u_i\in[0,1]^n$ and $||u_i-u_j||_2\ge c$. By (1) mentioned in (1) in Part 1, such $\{u_i\}_{i=1}^k$ must exist.
Now, we consider the following types of distribution $\D$:

    (c1): $\D$ is a distribution in $\D(n,c)$ and $\Pr_{(x,y)\sim\D}(x\in\{u_i\}_{i=1}^k)=1$.

    (c2): $\Pr_{(x,y)\sim\D}(x=u_i)=\Pr_{(x,y)\sim\D}(x=u_j)=1/k$ for any $i,j\in[k]$.

It is obvious that, by $||u_i-u_j||_2\ge c$, such a distribution exists. Let $S$ be the set that contains all such distributions.
We will show that for $\D\in S$, it holds $N_\D\le v_1\sqrt{k}\ln(kn/c)$.

By Theorem \ref{th1} and definition of $v_1$, we know that for any distribution $\D\in S$, let $y_i$ be the label of $u_i$ in distribution $\D\in S$. Then there is a memorization $\F$ of $\{(u_i,y_i)\}_{i=1}^k$ with at most $v_1\sqrt{k}\ln(kn/c)$ parameters. Then by (c1), the above result implies $A_\D(\F(x))=1$, so we know that $N_\D\le v_1\sqrt{k}\ln(kn/c)$ for any $\D\in S$. Moreover, by $k\ge n\ge 3$, $c=1$ and it is easy to see that $N_\D\ge n$. We thus have  $3\le N_\D\le 4v_1\sqrt{k}\ln(\sqrt{k})$.

{\bf Part 3: A definition.}

Moreover, for $\D\in S$, we define $S(\D)$ as the following set:

$Z\in S(\D)$ if and only if $Z\in [k]^q$ is a vector satisfying: Define $D(Z)$ as  $D(Z)=\{(u_{z_i},y_{z_i})\}_{i=1}^q$, then $A_\D(\L(D(Z)))\ge 1-\epsilon$, where $z_i$ is the $i$-th weight of $Z$ and $y_{z_i}$ is the label of $u_{z_i}$ in distribution $\D$.

It is easy to see that, if we i.i.d select $q$ samples in distribution $\D$ to form a dataset $\D_{tr}$, then

(1): By $c2$, with probability 1, $\D_{tr}$ only contains the samples $(u_j,y_j)$ where $j\in[k]$;

(2): Let $\D_{tr}$ has the form shown in (1).
Then every time a sample is selected, it is in $\{(u_i,y_i)\}_{i=1}^k$. Now we construct a vector in $[k]^q$ as follows: the index of $i$-th selected samples as the $i$-th component of the vector. Then each selection situation corresponds to a vector in $[k]^q$ which is constructed as before. Then by the definition of $S(\D)$, we have $A_\D(\L(\D_{tr}))\ge 1-\epsilon$ if and only if the corresponding vector of $\D_{tr}$ is in $S(\D)$.

Putting $N_\D\le 4v_1\sqrt{k}\ln(\sqrt{k})$ and $q=[k(1-2\epsilon-\delta)]$ in lemma \ref{defg3}, we have $q\ge (\frac{N_\D/(4v_1)}{\ln(N_\D/(4v_1))})^2(1-2\epsilon-\delta)/2\ge \frac{N^2_\D(1-2\epsilon-\delta)}{32v_1^2\ln^2(N_\D)}$.

By the above result and the by the assumption of $\L$ at the beginning of the proof, so that for any $\D\in S$ we have t
\begin{equation}
\label{sadgf}
    \Pr_{\D_{tr}\sim\D^{q}}(A(\L(\D_{tr}))\ge1-\epsilon)=\frac{|S(\D)|}{k^q}\ge 1-\delta.
\end{equation}

{\bf Part 4: Prove the Theorem.}

Let $S_s$ be a subset of $S$, and $S_s=\{\D_{i_1,i_2,\dots,i_k}\}_{i_j\in\{-1,1\},j\in[k]}\subset S$, where distribution $\D_{i_1,i_2,\dots,i_k}$ satisfies  the label of $u_j$ is $i_j$, where $j\in[k]$.

We will show that there exists at least one $\D\subset S_s$, such that $|S(\D)|<(1-\delta)k^q$, which is contrary to equation \ref{sadgf}. To prove that, we just need to prove that $\sum_{\D\in S_s}|S(\D)|<(1-\delta)2^kk^q$, use $|S_s|=2^k$ here.

To prove that, for any vector $Z\in [k]^q$, we estimate how many $\D\in S_s$ which makes  $Z$ to be included in $S(\D)$.

{\bf Part 4.1, situation of a given vector $Z$ and a given distribution $\D$.}

For a $Z=(z_i)_{i=1}^q$ and $\D$ such that $Z\in S(\D)$, let $\len(Z)=\{c\in[k]:\exists i,c=z_i\}$. We consider the distributions in $S_s$ that satisfy the following condition: for $i\in \len(Z)$, the label of $u_i$ is equal to the label of $u_i$ in $\D$.

Obviously, we have $2^{k-|\len(Z)|}$ distributions that can satisfy the above condition in $S_s$. Let such distributions make up a set $S_{ss}(\D,Z)$. Now, we estimate how many distributions $\D_s$ in $S_{ss}(\D,Z)$ satisfy $Z\in S(\D_s)$.

For any distribution $G\in S_s$, let $y(G)_i$ be the label of $u_i$ in distribution $G$, and define the dataset $\D_{tr}=\{(u_{z_i},y(\D)_{z_i})\}_{i=1}^q$. Then $Z\in S(\D_s)$ if and only if: for at least $k-[k\epsilon]$ of $i\in[k]$, $\L(\D_{tr})$ gives the label $y(\D_s)_i$ to $u_i$.

Firstly, consider that when $i\in \len(Z)$. For any $\D_s\in S_{ss}(\D,Z)$, we have  $y(\D_s)_{i}=y(\D)_{i}$ and $\L(\D_{tr})$ must give the label $y(\D)_i$ to $u_i$, so when $i\in \len(Z)$, $\L(\D_{tr})$ gives the label $y(\D_s)_i$ to $u_i$.

Then, consider $i\notin \len(Z)$. Because $Z$ is a given vector, so
if $Z\in S(\D_s)$, the label $y(\D_s)_i$ where $i\notin \len(Z)$ are at most $[k\epsilon]$ different from the label of $u_i$ given by $\L(\D_{tr})$.

So, by the above two results, this kind of $\D_s$ is at most $\sum_{i=0}^{[k\epsilon]}\binom{k-|\len(Z)|}{i}$. So, we have  $\sum_{i=0}^{[k\epsilon]}\binom{k-|\len(Z)|}{i}$ number of distributions $\D_s$ in $S_{ss}(\D,Z)$ satisfy $Z\in S(\D_s)$.

{\bf Part 4.2, for any vector $Z$ and distribution $\D$.}

Firstly, for a given $Z$, we have at most $2^{|len(Z)|}$ different $\S_{ss}(\D,Z)$ for $\D\in \D_S$.

Because when $\D_1$ and $\D_2$ satisfy $y(\D_1)_i=y(\D_2)_i$ for any $i\in \len(Z)$, we have $\D_{ss}(\D_1,Z)=\D_{ss}(\D_2,Z)$, and $2^{|\len(Z)|}$ situations of label of $u_i$ where $i\in \len(Z)$, so there exist at most $2^{|\len(Z)|}$ different $\S_{ss}(\D,Z)$.

By part 4.1, for a $\S_{ss}(\D,Z)$, at most $\sum_{i=0}^{[k\epsilon]}\binom{k-|\len(Z)|}{i}$ of $\D_s\in S_{ss}(\D,Z)$ satsify $Z\in S(\D_s)$.
So by the above result and consider that $\D_s=\cup_{\D\in\D_s}\S_{ss}(\D,Z)$, at most $2^{|\len(Z)|}\sum_{i=0}^{[k\epsilon]}\binom{k-|\len(Z)|}{i}$ number of $\D_s\in S_s$ such that $Z\in S(\D_s)$.

And there exist $k^q$ different $Z$, so $\sum_{\D\in S_s}|S(\D)|\le \sum_{Z} 2^{|\len(Z)|}\sum_{i=0}^{[k\epsilon]}\binom{k-|\len(Z)|}{i}\le \sum_{Z}2^k(1-\delta)=k^q2^k(1-\delta)$. For the last inequality, we use    $2^{|\len(Z)|}\sum_{i=0}^{[k\epsilon]}\binom{k-|\len(Z)|}{i}< 2^k(1-\delta)$, which can be shown by $|\len(Z)|\le q$ and lemma \ref{dfg2}.

This is what we want. we proved the theorem.
\end{proof}


We now prove Corollary \ref{cor-nec1}.
\begin{proof}
    Using  lemma \ref{dfg1}, we can find $2^{[\frac{n}{\lceil c^2 \rceil}]}$ points in $[0,1]^n$ and the distance between any two points shall not be less than $c$.
    So we take a $\epsilon,\delta$ such that $1-2\epsilon-\delta>0$, $n=3[12v_1/(1-2\epsilon-\delta)]+3$, $c=1$ and $k=2^{[\frac{n}{\lceil c^2 \rceil}]}$ in the (1) in the part 1 of the proof of Theorem \ref{th4}, then similar as the proof of Theorem \ref{th4}, and we get this corollary.
\end{proof}

\section{Proof of Theorem \ref{th2}}
\label{app-62}

\subsection{The Existence}

Firstly, it is easy to show that there exists a memorization algorithm which satisfies $\L(\D_{tr})\le N_\D$ when $\D_{tr}\sim \D^N$ with probability 1. We just consider the following memorization algorithm:

For a given dataset $D$, let $\L(D)$ be the memorization of $\D$ with minimum parameters, as shown in Theorem \ref{th1}. Then $\para(\L(D))\le \overline O(\sqrt{|D|})$.

And if $D$ is i.i.d selected from distribution $\D$, where $\D\in \D(n,c)$, then by the definition of $\L$ and $N_\D$ in Theorem \ref{th-nd}, we have $\para(\L(D))\le N_\D$ with probability 1.
So $\L$ is what we want.

\subsection{The Sample Complexity of Generalization}

To prove (1) in the theorem, we need three lemmas.

\begin{lemma}[\cite{mohri2018foundations}]
\label{vced1}
 Let $H$ be a hypothesis space with VCdim $h$ and $\D$ is distribution of data, if $N\ge h$, then with probability $1-\delta$ of $\D_{tr}\sim\D^N$, we have
 $$|E_\D(\F)-E_{\D_{tr}}(\F)|\le \sqrt{\frac{8h\ln\frac{2eN}{h}+8\ln\frac{4}{\delta}}{N}}$$
for any $\F\in H$. Here, $E_\D(\F)=E_{(x,y)\sim\D}[I(\F(x)=y)]$, $E_{\D_{tr}}(\F)=\sum_{(x,y\in\D_{tr})}[I(\F(x)=y)]$ and $I(x)=1$ if x is true or $I(x)=0$.

Moreover, when $h\ge 1$, we have
 $$|E_\D(\F)-E_{\D_{tr}}(\F)|\le \sqrt{\frac{8h\ln\frac{8eN}{\delta h}}{N}}.$$

\end{lemma}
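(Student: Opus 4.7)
The plan is to follow the classical VC-based uniform convergence argument, which is the standard route to this lemma in Mohri, Rostamizadeh, and Talwalkar. Because the statement bounds a supremum of $|E_\D(\F)-E_{\D_\tr}(\F)|$ over the whole hypothesis space $H$ in terms of the VC dimension $h$, the natural machinery is: (i) symmetrization against a ghost sample, (ii) Sauer–Shelah to control the growth function by $h$, and (iii) a Hoeffding-type tail bound combined with a union bound over the effective dichotomies.

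First I would introduce an independent ghost dataset $\D'_\tr\sim\D^N$ and, by the standard symmetrization lemma, upper bound
\[
\Pr\!\Bigl(\sup_{\F\in H} |E_\D(\F)-E_{\D_\tr}(\F)|>t\Bigr)
\le 2\Pr\!\Bigl(\sup_{\F\in H} |E_{\D'_\tr}(\F)-E_{\D_\tr}(\F)|>t/2\Bigr),
\]
valid once $N$ is at least a small multiple of $1/t^2$, which is implied by $N\ge h$. The advantage is that on the doubled sample $\D_\tr\cup\D'_\tr$ the hypotheses collapse to the set of behaviours on $2N$ points, whose cardinality is the growth function $\Pi_H(2N)$.

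Next I would apply Sauer's lemma: since $H$ has VC dimension $h$, one has $\Pi_H(2N)\le (2eN/h)^h$ as soon as $2N\ge h$. Conditioning on the $2N$ points and using a symmetrization-by-random-signs argument (swapping each pair between $\D_\tr$ and $\D'_\tr$ uniformly at random), Hoeffding's inequality gives, for each fixed dichotomy, a bound of the form $2\exp(-Nt^2/8)$ on the probability that the two empirical errors differ by more than $t/2$. A union bound over the at most $(2eN/h)^h$ effective hypotheses then yields
\[
\Pr\!\Bigl(\sup_{\F\in H}|E_\D(\F)-E_{\D_\tr}(\F)|>t\Bigr)
\le 4\Bigl(\tfrac{2eN}{h}\Bigr)^h \exp(-Nt^2/8).
\]
Setting the right-hand side equal to $\delta$ and solving for $t$ recovers the first displayed inequality of the lemma, while absorbing the $\ln(2eN/h)$ and $\ln(4/\delta)$ terms into a single logarithm (using $h\ge 1$) gives the second form.

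The only real obstacle is bookkeeping of constants so that the $8$ inside the logarithm and the $8$ in front of $h\ln(\cdot)$ come out exactly as stated; this hinges on the precise constant $1/8$ in the Hoeffding bound applied to $[0,1]$-valued losses after the signed-swap symmetrization, and on the ``$N\ge h$'' hypothesis being just enough to make the symmetrization step lose only a factor of $2$. Since the result is attributed to \cite{mohri2018foundations}, I would expect the author simply to cite the textbook rather than re-derive these constants, and my plan likewise would only sketch the three ingredients above and quote the final form.
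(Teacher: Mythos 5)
The paper offers no proof of this lemma at all—it is quoted verbatim with a citation to Mohri et al.—so there is nothing to diverge from, and your sketch is exactly the standard derivation behind the cited result: Vapnik's symmetrization bound $4\,\Pi_H(2N)\exp(-N\epsilon^2/8)$ combined with Sauer's lemma $\Pi_H(2N)\le(2eN/h)^h$ reproduces the first display with precisely the stated constants, and the second display follows by using $h\ge 1$ to absorb $8\ln(4/\delta)\le 8h\ln(4/\delta)$ into the logarithm. Your proposal is correct and consistent with the paper's treatment (a minor caveat: $N\ge h$ does not by itself guarantee the symmetrization condition $Nt^2\gtrsim 1$ for arbitrary $t$, but it holds automatically for the value of $t$ obtained by setting the tail bound equal to $\delta$).
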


\begin{lemma}
\label{cccc}
If $e\le ba/c$, then we have $a\ln (bu)\le cu$ when $u\ge 2a\ln(ba/c)/c$.
\end{lemma}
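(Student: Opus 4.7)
The plan is to treat $u$ as a variable and analyze the auxiliary function $f(u) = cu - a\ln(bu)$, with the goal of showing $f(u) \ge 0$ for all $u \ge u_0 := 2a\ln(ba/c)/c$. First I would reduce to checking the boundary $u = u_0$ via a monotonicity argument: since $f'(u) = c - a/u$, $f$ is nondecreasing once $u \ge a/c$. The hypothesis $ba/c \ge e$ gives $\ln(ba/c) \ge 1$, hence $u_0 \ge 2a/c > a/c$, so $f$ is nondecreasing on $[u_0,\infty)$ and it suffices to verify $f(u_0) \ge 0$.

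Substituting and simplifying logarithms yields
$$f(u_0) = 2a\ln(ba/c) - a\ln\!\Bigl(\tfrac{2ba}{c}\ln(ba/c)\Bigr) = a\ln\!\Bigl(\tfrac{ba/c}{2\ln(ba/c)}\Bigr),$$
so the inequality reduces to the elementary bound $t \ge 2\ln t$ with $t := ba/c \ge e$. I would establish this by defining $g(t) = t - 2\ln t$, noting $g(e) = e - 2 > 0$, and observing $g'(t) = 1 - 2/t > 0$ for $t \ge e > 2$. Thus $g$ is positive throughout $[e,\infty)$, giving $f(u_0) \ge 0$ and completing the proof.

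There is essentially no obstacle, as this is a one-variable calculus exercise; the only care needed is to notice that the hypothesis $ba/c \ge e$ is used twice, once to guarantee $u_0 > a/c$ so the monotonicity kick-starts on the right interval, and once in the reduced form $t \ge 2\ln t$ at the boundary evaluation.
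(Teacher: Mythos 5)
Your proof is correct and follows essentially the same route as the paper: the paper works with the ratio $\ln((ba/c)x)/x$ in the rescaled variable $x=cu/a$ while you work with the difference $cu-a\ln(bu)$, but both arguments reduce to a monotonicity check plus evaluation at the boundary point $u_0=2a\ln(ba/c)/c$, and both rest on the identical elementary inequality $t\ge 2\ln t$ for $t=ba/c\ge e$, proved with the same function $g(t)=t-2\ln t$.
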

\begin{proof}
Firstly, we have $ \frac{a\ln(bu)}{cu} = \frac{\ln(ba/c(cu/a))}{cu/a}$, and we just need to show $\frac{\ln(ba/c(cu/a))}{cu/a}\le 1$.

Then, we show that there are $2\ln(ba/c)\le ba/c$. Just consider the function $g(x)=x-2\ln x$, by $g'(x)=1-2/x$, so $g'(x)\ge 0$ when $x\ge 2$, so $g(ba/c)\ge g(e)=e-2>0$, this is what we want.

Now we consider the function $f(x)=\ln((ba/c)x)/x$, by the above result, we have that $1\le 2\ln(ba/c)\le ba/c$, we have that
    \begin{equation*}
        \begin{array}{ll}
             & f(2\ln(ba/c)) \\
            = & \ln(2(ba/c)\ln(ba/c))/(2\ln(ba/c))\\
            \le & \ln((ba/c)*(ba/c))/(2\ln(ba/c))\\
            =&1.
        \end{array}
    \end{equation*}
And consider that $f'(x)=\frac{1-\ln((ba/c)x)}{x^2}\le 0$ when $x\ge 1$, so, when $x\ge 2\ln(ba/c)$, we have  $f(x)\le f(2\ln(ba/c))\le 1$, which means that when $cu/a\ge  2\ln(ba/c)$, it holds $\frac{\ln(ba/c(cu/a))}{cu/a}\le 1$.
The lemma is proved.
\end{proof}

\begin{lemma}[\cite{Ntv}]
\label{vced}
Let $H_m$ be the hypothesis space composed of the networks with at most $m$ parameters. Then the VCdim of $H_m$ is not more than $qm^2\ln(m)$, where $q$ is a constant not dependent on $m$.
\end{lemma}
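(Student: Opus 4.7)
The plan is to follow the standard partition-of-parameter-space technique for piecewise-polynomial neural networks, which is exactly how the cited bound of \cite{Ntv} is obtained. For a network in $H_m$ with at most $m$ parameters, the depth $L$ satisfies $L \le m$ (since each layer contributes at least one parameter), so it suffices to prove a bound of the form $O(mL\log m)$, and then absorb $L \le m$ to obtain $O(m^2\log m)$.

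The first step is to fix $N$ arbitrary input points $x_1,\dots,x_N$ and view the map $\theta \mapsto (\sign F(x_i;\theta))_{i=1}^N$, where $\theta \in \R^m$ collects all trainable parameters of the network. The goal is to upper bound the number of distinct sign-pattern outputs as $\theta$ ranges over $\R^m$, and then invoke Sauer's lemma (setting the growth function less than $2^N$ to extract the VC dimension). The key structural observation is that for a ReLU network, when we fix the ReLU activation pattern of every hidden neuron on every $x_i$, the map $\theta \mapsto F(x_i;\theta)$ is a polynomial in $\theta$ whose degree is bounded by the depth $L$; thus within each region of parameter space corresponding to a fixed activation pattern, the sign of $F(x_i;\theta)$ is determined by a polynomial of degree $\le L$ in $\theta$.

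The second step is to count the activation regions in $\R^m$ layer by layer. Proceeding inductively, before reaching layer $\ell$, the preactivations of the $n_\ell$ neurons at layer $\ell$, evaluated at each of the $N$ inputs, are polynomials in $\theta$ of degree at most $\ell$. The boundaries between activation patterns at layer $\ell$ are thus zero sets of at most $N n_\ell$ polynomials of degree $\le \ell$ in $m$ variables. Applying the Warren/Milnor bound on sign patterns of polynomial systems — namely that $M$ polynomials of degree $\le d$ in $m$ variables realize at most $(8eMd/m)^m$ sign patterns when $M \ge m$ — the number of new activation cells introduced by layer $\ell$ is at most $(8eN n_\ell \ell / m)^m$. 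Multiplying across all $L$ layers and across the final output sign (an additional factor of the same form), the growth function is bounded by a product of $L+1$ such factors.

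The final step is to take logarithms and solve for the largest $N$ that can be shattered. The bound reduces to the inequality $N \le C\,mL\log(N mL/m)$ for a constant $C$, and a standard manipulation (applied in the same way as the helper Lemma~\ref{cccc} above) yields $N = O(mL\log m)$ as the largest shatterable size. Since $L \le m$, this gives $\mathrm{VCdim}(H_m) \le q\, m^2 \ln m$ for an absolute constant $q$. The main obstacle will be bookkeeping: tracking the correct degree $\ell$ at layer $\ell$ (which requires a careful inductive setup so that earlier-layer polynomials do not blow up the count), matching the hypotheses of the Warren/Milnor bound (in particular $M \ge m$), and finally handling the transcendental inequality cleanly enough to extract the claimed $m^2\ln m$ form without introducing a spurious factor of $\log N$.
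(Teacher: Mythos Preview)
The paper does not prove this lemma: it is stated with a citation to \cite{Ntv} and used as a black box in the proof of Theorem~\ref{th2}(1). Your proposal correctly sketches the argument of \cite{Ntv} (partition of parameter space into cells on which each preactivation has fixed sign, Warren/Milnor sign-pattern bound on the resulting polynomial system, then the transcendental-inequality cleanup), and your absorption of $L\le m$ to pass from $O(mL\log m)$ to $O(m^2\log m)$ is exactly the right final step. One bookkeeping point you should make explicit: $H_m$ here is the union over \emph{all} ReLU architectures with at most $m$ parameters, not a single fixed architecture, whereas the \cite{Ntv} argument is phrased for a fixed architecture. This is harmless---the number of such architectures is at most $m^{O(m)}$, so taking a union over architectures adds only an $O(m\log m)$ term to the VC bound, which is absorbed into $qm^2\ln m$---but the reader will want to see that sentence.
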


Then we can prove (1) in the theorem.
\begin{proof}
Let $\D_{tr}\sim \D^N$. Because the algorithm satisfies the condition in theorem, then $\L(\D_{tr})\in H_{N_{\D}}$, where $H_{N_{\D}}$ is defined in lemma \ref{vced}.
By lemma \ref{vced}, the VCdim of $H_{\D_{tr}}$ is not more than $qN_\D^2\ln(N_\D)$ for some $q\ge 1$. Using lemma \ref{vced1} to this fact, we have $N\ge \frac{16qN^2_\D\ln(N_\D)\ln(\frac{64qeN^2_\D\ln(N_\D)}{\delta\epsilon^2})}{\epsilon^2}$.
Take these values in lemma \ref{vced1}, and considering that the memorization algorithm $\L$ must satisfy that $E_{\D_{tr}}(\L(\D_{tr}))=1$, using lemma \ref{cccc} (just take $a=8qN_\D^2\ln(N_\D)$, $b=8e/\delta$ and $c=\epsilon^2$ in lemma \ref{cccc}), we have
    $$1-E_\D(\L(\D_{tr}))\le \sqrt{\frac{8qN_\D^2\ln(N_\D)\ln\frac{8eN}{\delta}}{N}}\le \epsilon$$
which implies $1-\epsilon\le E_\D(\L(\D_{tr}))$. The theorem is proved.
\end{proof}

\subsection{More Lemmas}
We need three more lemmas to prove Theorem \ref{th2}.

\begin{lemma}
\label{ppd}
Let $\D\subset[0,1]^n\times\{-1,1\}$.
Then $D$ has a memorization with width 1 if and only if $\D$ is linearly separable.
\end{lemma}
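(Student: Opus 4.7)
The plan is to prove both directions. The ``if'' direction is easy: if $\D$ is linearly separable, so that there exist $w \in \R^n$ and $b \in \R$ with $\sign(w \cdot x + b) = y$ for every $(x,y) \in \D$, then I realize the affine map $w \cdot x + b$ via a single hidden $\Relu$ neuron operating entirely in its linear regime. Concretely, take $\F(x) = \Relu(w \cdot x + b + M) - M$ with $M > \|w\|_1 + |b|$; this is a width-$1$ network with $\F(x) \equiv w \cdot x + b$ on $[0,1]^n$, so $\sign(\F(x)) = y$ on every training sample.

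For the ``only if'' direction, let $\F$ be a width-$1$ memorization of $\D$ with $L$ hidden layers. Write $u(x) = w_1 \cdot x + b_1$ for the scalar first-layer pre-activation; since every hidden layer has width $1$, each $W_i$ for $i \ge 2$ collapses to a scalar $w_i$, each $X_i$ is a scalar, and each $X_i$ depends on $x$ only through $u(x)$. The central structural claim I would prove by induction on depth is that $\F(x) = h(u(x))$ for some \emph{monotone} function $h : \R \to \R$. The base case $X_1 = \Relu(u)$ is non-decreasing in $u$. For the inductive step, if $X_{i-1}$ is monotone in $u$, then the scalar affine map $w_i X_{i-1} + b_i$ is monotone in $u$ (direction determined by $\sign(w_i)$), and $\Relu$ preserves monotonicity in the same direction; the final output layer $w_{L+1} X_L + b_{L+1}$ is handled identically.

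Once monotonicity is in hand, the conclusion follows immediately. Since $\F$ memorizes $\D$, the pre-activations $u_i = u(x_i)$ satisfy $h(u_i) > 0$ when $y_i = +1$ and $h(u_i) < 0$ when $y_i = -1$. Monotonicity of $h$ then forces $\max\{u_i : y_i = -1\} < \min\{u_i : y_i = +1\}$ whenever both classes appear (if only one class is present, $\D$ is trivially separable); otherwise a pair $u_- \ge u_+$ with opposing labels would contradict monotonicity of $h$ in the direction given by the induction. Choosing any threshold $t$ strictly between these two extrema, the hyperplane $\{x : w_1 \cdot x + b_1 = t\}$ linearly separates $\D$. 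I do not expect a real obstacle here; the only substantive content is the monotonicity induction, which hinges on the elementary fact that scalar affine maps and $\Relu$ each preserve monotonicity of scalar functions, so width-$1$ ReLU networks factor through $u(x)$ and realize only monotone functions of it.
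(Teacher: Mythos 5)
Your proposal is correct and takes essentially the same route as the paper: the paper's key observation that each width-$1$ layer either preserves or reverses the order of the first-layer values (stated there via triples with alternating labels) is exactly your monotonicity induction showing $\F(x)=h(w_1\cdot x+b_1)$ with $h$ monotone, and both arguments conclude by thresholding the first-layer affine map. Your explicit $\Relu(w\cdot x+b+M)-M$ construction for the ``if'' direction, which the paper dismisses as obvious, is also fine.
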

\begin{proof}
    If $D$ is linearly separable, then it obviously has a memorization with width 1.

    If $D$ has a memorization with width 1, we show that $D$ is linearly separable. Let $\F$ be the memorization network of $\D$ with width 1, and $\F_1$ the first layer of $\F$.

   {\bf Part 1:} We show that it is impossible to find any $(x_1,1),(x_2,-1),(x_3,1)\in D$ such that $\F_1(x_1)<\F_1(x_2)<\F_1(x_3)$.
   If we can, then contradiction will be obtained.

   Assume $(x_1,1),(x_2,-1),(x_3,1)\in D$ such that $\F_1(x_1)<\F_1(x_2)<\F_1(x_3)$.

It is easy to see that, for any linear function $wx+b$ and $u\le v\le k$, we have  $wu+b\le wv+b\le wk+b$ or $wu+b\ge wv+b\ge wk+b$, which implies $\Relu(wu+b)\le \Relu(wv+b)\le \Relu(wk+b)$ or $\Relu(wu+b)\ge \Relu(wv+b)\ge \Relu(wk+b)$.

Because $(x_1,1),(x_2,-1),(x_3,1)\in D$ satisfy that $\F_1(x_1)<\F_1(x_2)<\F_1(x_3)$, and each layer of $\F$ is a linear function composite $\Relu$, so after each layer, the order of $\F_1(x_1),\F_1(x_2),\F_1(x_3)$ is not changed or contrary. So there must be $\F(x_1)\le \F(x_2) \le \F(x_3)$ or $\F(x_1)\ge \F(x_2) \ge \F(x_3)$. Then $\F$ cannot classify $x_1,x_2,x_3$ correctly, which contradicts to the fact that $\F$ is a memorization of $D$.

{\bf Part 2:} We show that, it is impossible to find any $(x_1,-1),(x_2,1),(x_3,-1)\in D$ such that $\F_1(x_1)<\F_1(x_2)<\F_1(x_3)$.

    This is similar to Part 1.

    By parts 1 and 2, without losing generalization, we know that for any $(x_1,1),(x_2,-1)\in D$, it holds $\F_1(x_1)>\F_1(x_2)$. Since $\F_1$ is a linear function composite $\Relu$, $D$ is linear separable.
\end{proof}

\begin{lemma}
\label{ppd11}
Let $D=\{(x_i,y_i)\}\subset[0,1]\times\{-1,1\}$.
Then $D$ has a memorization with width 2 and depth 2 if and only if at least one of the following conditions is valid.

(c1): There is a closed interval $I$ such that: if $(x,1)\in D$ then $x\in I$ and if $(x,-1)\in D$ then $x\notin I$.

(c2): There is a closed interval $I$ such that: if $(x,1)\in D$ then $x\notin I$ and if $(x,-1)\in D$ then $x\in I$.
\end{lemma}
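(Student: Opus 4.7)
\emph{Sufficiency.} Suppose (c1) holds with a bounded closed interval $I=[a,b]$; the cases $I$ unbounded, $I\in\{\emptyset,\R\}$, and (c2) are handled analogously (using one-sided hinges, constant networks, or a sign flip of the output). Define
\begin{equation*}
\F(x)=-\Relu(a-x)-\Relu(x-b)+\delta,\qquad 0<\delta<\min_{(x,-1)\in D}\dis(x,I).
\end{equation*}
The right-hand side is positive by closedness of $I$ and finiteness of $D$. The resulting $\F$ has $\dep(\F)=\wid(\F)=2$, and direct computation gives $\F(x)=\delta>0$ for $x\in I$ and $\F(x)=\delta-\dis(x,I)<0$ on each $(x,-1)\in D$, so $\F$ memorizes $D$.

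\emph{Necessity.} Let $\F(x)=w_1\Relu(\alpha_1 x+\beta_1)+w_2\Relu(\alpha_2 x+\beta_2)+c$ memorize $D$. Then $\F\colon\R\to\R$ is continuous piecewise linear with at most two breakpoints and hence at most three linear pieces; let $s_1,s_2,s_3$ denote the slopes from left to right and $\Delta_j=s_{j+1}-s_j$ the slope jumps at the two breakpoints. A direct two-case check on the sign of $\alpha_i$ shows $\mathrm{sign}(\Delta_j)=\mathrm{sign}(w_{k(j)})$, where $k(j)\in\{1,2\}$ is the index of the ReLU whose breakpoint is the $j$-th from the left.

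The structural core of the proof is that $\F'$ changes sign at most once, so $\F$ is monotone or unimodal. When $w_1,w_2$ have the same sign, this is immediate since $\Delta_1,\Delta_2$ are co-signed. When they have opposite signs, the strictly alternating slope patterns $(-,+,-)$ and $(+,-,+)$ are eliminated by an exact-cancellation argument: enumerating the four orientations $(\mathrm{sign}(\alpha_1),\mathrm{sign}(\alpha_2))$ together with the two orderings of the breakpoints, one tracks which ReLUs are active on each of the three pieces; in every subcase that permits $s_1<0$ and $s_2>0$ simultaneously, both ReLUs turn out to be inactive on the rightmost piece, forcing $s_3=0$, and symmetrically $s_1>0,s_2<0$ forces $s_3=0$. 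Therefore $\{x:\F(x)>0\}$ is either a (possibly unbounded) interval or the complement of a bounded closed interval. In the first case put $I=\overline{\{x:\F(x)>0\}}=\{x:\F(x)\ge 0\}$, a closed interval disjoint from $\{x:\F(x)<0\}$: every $(x,1)\in D$ lies in $I$ and every $(x,-1)\in D$ lies outside, giving (c1). In the second case put $I=\{x:\F(x)\le 0\}$, a bounded closed interval, giving (c2) symmetrically.

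\emph{Main obstacle.} The only nonroutine step is the exact-cancellation claim in the opposite-sign case. Enumerating the eight subcases (four sign assignments for $(\alpha_1,\alpha_2)$ times two orderings of the breakpoints) and verifying that $s_1s_2<0$ forces $s_3=0$ is elementary but demands careful bookkeeping of the half-line on which each ReLU is active; once this is settled, the reduction to the sign-set classification is routine.
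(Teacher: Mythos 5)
Your sufficiency construction is correct (and simpler than the paper's), and your overall necessity route is sound in outline: show the slope sequence $(s_1,s_2,s_3)$ of a width-2, depth-2 network cannot strictly alternate, conclude that $\F$ is quasi-concave or quasi-convex, and read (c1)/(c2) off the level sets. This is organized differently from the paper, which argues by contradiction from a four-point alternating configuration in the data, but the engine is the same: bookkeeping of which ReLU is active on each of the three linear pieces.

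The problem is that the step you yourself identify as the core — the opposite-sign case — is misstated, and as written it is false. Order the breakpoints $t_1\le t_2$; each ReLU is active on a half-line. In the crossing orientation where the ReLU breaking at $t_1$ is active to the right ($\alpha>0$) and the ReLU breaking at $t_2$ is active to the left ($\alpha<0$), the activation pattern over the three pieces is (only the second ReLU), (both), (only the first ReLU), so $s_1=w_2\alpha_2$, $s_3=w_1\alpha_1$ and $s_2=s_1+s_3$. Taking $w_2\alpha_2=-1$ and $w_1\alpha_1=2$ gives $(s_1,s_2,s_3)=(-1,1,2)$: here $s_1<0<s_2$, yet the rightmost piece has an active ReLU and $s_3\neq 0$. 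So your claim that "every subcase permitting $s_1<0$ and $s_2>0$ forces both ReLUs inactive on the rightmost piece, hence $s_3=0$" does not survive the enumeration. The exclusion of $(-,+,-)$ and $(+,-,+)$ in this orientation must instead come from the identity $s_2=s_1+s_3$ (a number of one sign cannot equal the sum of two numbers of the opposite sign); the "both inactive, hence some slope is zero" mechanism only covers the other three orientations, where $s_1=0$, $s_2=0$, or $s_3=0$ respectively. This sum identity is precisely the cancellation the paper's proof hinges on, so the repair is immediate, but your proposed verification as stated would not go through. Two smaller points to tidy: dispose of the degenerate cases ($\alpha_i=0$, $w_i=0$, coincident breakpoints), and note that the closed intervals you extract may be unbounded or may differ from $\overline{\{\F>0\}}$; both are harmless since $D\subset[0,1]$, e.g.\ intersect with $[0,1]$.
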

\begin{proof}

{\bf Part 1:}
We show that if condition (c1) is valid, then $D$ has a memorization with width 2 and depth 2. It is similar for (c2) to be valid.

Let $I=[a,b]$. If for all $(x,-1)\in D$, we have $x<a$, then $D$ is linear separable, and the result is valid. If for all $(x,-1)\in D$, we have $x>b$, then $D$ is linear separable, and the result is valid.
Now we consider the situation where $x>a$ for some $(x,-1)\in D$ and $x<b$ for some $(x,-1)\in D$.

Let $x_{-1}=max_{(x,-1)\in D}\{x<a\}$. Then for $\F_1(x)=x-(x_{-1}+a)/2$, it is easy to verify that $F_1(x)>\frac{a-x_{-1}}{2}$ for all $x\ge a$ and $F_1(x)<0$ for all $(x_0,-1)\in D$ such that $x_0<a$.

Let $x_{1}=\min_{(x,-1)\in D}\{x>b\}$. Then for $\F_2(x)=x-(x_{1}+b)/2$, it is easy to verify that $F_2(x)<0$ for all $x\le b$ and $F_2(x)>(x_{1}-b)/2$ for all $(x_0,-1)\in D$ such that $x_0>b$.

Let the network $F$ be defined by $F=\Relu(F_1(x))-T\Relu(F_2(x))-t$, where $T=\frac{8}{x_{1}-b}$ is a positive real number, and $t=\frac{a-x_{-1}}{4}>0$.

Now we prove $F$ is what we want. It is easy to see that, $F$ is a depth 2 width 2 network. When $x\in[a,b]$, then $\F_1(x)>\frac{a-x_{-1}}{2}$ and $F_2(x)\le 0$, so $F(x)>0$. For $(x,-1)\in D$ such that $x<a$, we have  $\F_1(x)<0$ and $F_2(x)<0$, so $F(x)<0$; for $(x,-1)\in D$ such that $x>b$, we have  $\F_1(x)<1$ and $F_2(x)>\frac{x_{1}-b}{4}$, so $F(x)\le 1-2-\frac{a-x_{-1}}{4}<0$, this is what we want.

{\bf Part 2:}
If $D$ has a memorization with width 2 and depth 2, then we show that $D$ satisfies conditions (c1) or (c2).

If $D$ is linear separable, (c1) and (c2) are valid. If not, without losing generality, assume that $(x_1,1),(x_2,-1),(x_3,1)\in D$ such that $x_1<x_2<x_3$ (for the situation that $(x_1,-1),(x_2,1),(x_3,-1)\in D$ such that $x_1<x_2<x_3$, the proof is similar). Then we show that if $(x,-1)\in D$, we have  $x_1<x<x_3$. Assume $(x_0,-1)\in D$ such that $x_0<x_1$, then we have that $x_0<x_1<x_2<x_3$, then we can deduce the contradiction.

Let $F=a\Relu(F_1(x))+b\Relu(F_2(x))+c$ be the memorization network of $D$, where $F_i(x)$ is a linear function. Let $u,v\in\R$ such that $F_1(u)=F_2(v)=0$, without loss generality, let $u\le v$.

Then we know that $F$ is linear in such three regions: $(-\infty,u]$, $[u,v]$ and $[v,\infty)$. We call the three regions as linear regions of $F$. We prove the following three results at first.

(1): The slope of $F$ on $(-\infty,u]$ is positive.

Firstly, we show that $x_0\in(-\infty,u]$. If not, since $(x_0,-1),(x_1,1),(x_2,-1)$ are not linear separable, and $(x_1,1),(x_2,-1),(x_3,1)$ are not linear separable, we have $(x_0,-1),(x_1,1)\in [u,v]$ and $(x_2,-1),(x_3,1)\in[v,\infty)$. Then, because $x_1>x_0$ and $F(x_1)>F(x_0)$, and $F$ is linear in $[u,v]$, we have that $F(v)\ge F(x_1)>0$.
Now we consider the points $(v,1),(x_2,-1),(x_3,1)$. It is easy to see that $F$ memorizes such three points and they are in the linear region of $F$, so $(v,1),(x_2,-1),(x_3,1)$ is linear separable, which is impossible because $v\le x_2\le x_3$ and resulting in contradiction, so $x_0\in(-\infty,u]$.

If the slope of $F$ on $(-\infty,u]$ is not positive, since $u\ge x_0$, we have  $F(u)<0$. Now we consider the points $(u,-1),(x_1,1),(x_2,-1),(x_3,1)$. Just similar to above to get the contradiction. So the slope of $F$ on $(-\infty,u]$ is positive.

(2):  The slope of $F$ on $[v,\infty)$ is positive. Similar to (1).

(3): The slope of $F$ on $(-\infty,u]$ is negative. If not, $F$ must be a non-decreasing function, which is impossible.

Using (1),(2),(3), we can get a contradiction, which means that there is a $(x_0,-1)\in D$ such that $x_0<x_1$ is not possible.

Consider that, in a linear region of $F$, if the activation states of $F_1$ and $F_2$ are both not activated, then on such linear regions, the slope of $F$ is 0. But due to (1),(2),(3), all linear regions have non-zeros slope of $F$, so on each linear regions, at least one of $F_1$ and $F_2$ is activated. So, the activation states of $F_1$ and $F_2$ at $(-\infty,u]$, $[u,v]$ and $[v,\infty)$ is $(-,+),(+,+),(+,-)$ (+ means activated, - means not activated).

Then the slope of $F$ on $[u,v]$ is equal to the sum of the slopes of $F$ on $(-\infty,u]$ and the slope of $F$ on $[v,\infty)$. But by (1),(2),(3), that means a negative number is equal to the sum of two positive numbers, which is impossible. So we get a contradiction.

So if $(x_0,-1)\in D$, we have  $x_0>x_1$. Similar to before, we have  $x_0<x_3$. So we get the result.

By the above result, all the samples $(x_0,-1)\in D$ satisfies $x\in (x_1,x_3)$, so there is a close interval in $(x_1,x_3)$ such that: if $(x_0,-1)\in D$, then $x_0$ is in such interval, then (c2) is vald, and we prove the lemma.
\end{proof}

\subsection{The algorithm is no-efficient.}
Now we prove (2) of theorem \ref{th2}, that is, all such algorithm is not efficient if $P\ne NP$. We need the reversible 6-SAT problem defined in definition \ref{def-6sat}.

\begin{proof}
We will show that, if there is an efficient memorization algorithm which satisfies the conditions of the theorem (has at most $N_\D$ parameters with probability 1), then we can solve the reversible 6-SAT in polynomial time, which implies $P=NP$.

Firstly, for the 6-SAT problem, we write it as the following form.

Let $\varphi=\wedge_{i=1}^m \varphi_i(n,m)$ be a 6-SAT for $n$ variables,
where  $\varphi_i(n,m)= \vee_{j=1}^6\tilde{x}_{i,j}$
and $\tilde{x}_{i,j}$ is either ${x}_{s}$ or $\neg{x}_{s}$ for some $s\in[n]$
(see Definition \ref{def-6sat}).
Then, we define some vectors in $\R^n$ based on $\varphi_i(n,m)$.

For $i\in[m]$, define $Q^{\varphi}_i\in\R^n$ as follows:
$Q^{\varphi}_i[j] = 1$ if $x_j$ occurs in $\varphi_i(n,m)$;
$Q^{\varphi}_i[j] = -1$ if $\neg x_j$ occurs in $\varphi_i(n,m)$;
$Q^{\varphi}_i[j] = 0$ otherwise.
$Q^{\varphi}_i[j]$ is the $j$-th entry in $Q^{\varphi}_i$, then six entries in $Q^{\varphi}_i$ are $1$ or $-1$ and all other entries are zero.

Now, we define a binary classification dataset $\D(\varphi)=\{(x_i,y_i)\}_{i=1}^{m+4n}\subset [0,1]^n\times[2]$ as follows.

{\parindent=20pt
(1) For $i\in[n]$, $x_i=\one_i/3+1.1\one/3$, $y_i=1$.

(2) For $i\in\{n+1,n+2,\dots,2n\}$, $x_i=1.1\one_{i-n}/3+1.1\one/3$, $y_i=-1$.

(3) For $i\in\{2n+1,2n+2,\dots,3n\}$, $x_i=-\one_{i-2n}/3+1.1\one/3$, $y_i=1$.

(4) For $i\in\{3n+1,3n+2,\dots,4n\}$, $x_i=-1.1\one_{i-3n}/3+1.1\one/3$, $y_i=-1$.

(5) For $i\in\{4n+1,4n+2,\dots,4n+m\}$, $x_i=1/12 Q^{\varphi}_{i-4n}+1.1\one/3$, $y_i=1$.
}

Here, $\one_i$ is the vector whose $i$-th weight is 1 and other weights are 0, $\one$ is the vector whose weights are all 1.

Let $\L$ be an efficient memorization algorithm which satisfies the condition in the theorem. Then we prove the following result: If $n\ge4$ and $\varphi$ is a reversible 6-SAT problem, then $\para(\L(\D(\varphi)))=n+8$ if and only if $\varphi$ has a solution, which means $P=NP$ and leads to that $\L$ does not exist when $P\ne NP$. The proof is divided into two parts.

{\bf Part 1:} If $\varphi$ is a reversible 6-SAT problem that has a solution, then $\para(\L(\D(\varphi)))=n+8$.

To prove this part, we only need to prove that $\para(\L(\D(\varphi)))\ge n+8$ and $\para(\L(\D(\varphi)))\le n+8$.

{\bf Part 1.1:} we have $\para(\L(\D(\varphi)))\ge n+8$.

Firstly, we show that $\{(x_1,1),(x_{n+1},-1),(x_{2n+1},1),(x_{3n+1},-1)\}\subset \D(\varphi)$ are not linearly separable. This is because $\{x_1,x_{n+1},x_{2n+1},x_{3n+1}\}$ is a linear transformation of $\{\one_1,1.1\one_1,-\one_1,-1.1\one_1\}$, so  $\{(x_1,1),(x_{n+1},-1),(x_{2n+1},1),(x_{3n+1},-1)\}\subset \D(\varphi)$ are not linearly separable if and only if $\{(\one_1,1),(1.1\one_1,-1),(-\one_1,1),(-1.1\one_1,-1)\}$ are not linearly separable, by the definition of $\one_1$, easy to see that $\{(\one_1,1),(1.1\one_1,-1),(-\one_1,1),(-1.1\one_1,-1)\}$ are not linearly separable, so we get the result.

By the above result, a subset of $\D(\varphi)$ is not linearly separable, so we have that $\D(\varphi)$ is not linearly separable. So, by lemma \ref{ppd}, $\L(\D(\varphi))$ must have width more than 1. For a network with width at least $2$, when it has depth 2, it has at least $2n+5$ parameters; when it has depth 3, it has at least $n+8$ parameters; when it has depth more than 3, it has at least $n+10$ parameters. So when $n\ge 4$, we have $\para(\L(\D(\varphi)))\ge n+8$.

{\bf Part 1.2:} If $\varphi$ is a reversible 6-SAT problem that has a solution, then $\para(\L(\D(\varphi)))\le n+8$.

We define a distribution $\D$ at first. $\D$ is defined on $\D(\varphi)$, and each point has the same probability. It is easy to see that, $\D\in \D(n,1/30)$.

Since when $N\ge m+4n$, we have $P_{\D_{tr}\sim \D^N}(\D_{tr}=\D(\varphi))>0$, so by the definition of $N_\D$ and the fact $\L$ satisfies the conditions in the theorem, we have $\para(\L(\D(\varphi)))\le N_\D$. Moreover, because $\D$ is defined on $\D(\varphi)$, we will construct a network with $n+8$ parameters to memorize $\D(\varphi)$ to show that $N_\D\le n+8$, which implies $\para(\L(\D(\varphi)))\le N_\D\le n+8$ because $\L$ satisfies the condition in the theorem.

This network has three layers, the first layer has width 1; the second layer has width 2; the third output layer has width 1.

Let $s=(s_1,s_2,\dots,s_n)\in\{-1,1\}^n$ be a solution of $\varphi$. Then the first layer is $\F_1(x)=\Relu(3s(x-1.1\one/3)+3)$. Then we have the following results:

(1): $\F_1(x)=4.1$ or $\F_1(x)=1.9$ for all $(x,-1)\in \D(\varphi)$;

(2): $2\le |\F_1(x)|\le 4$ for all $(x,1)\in \D(\varphi)$.

(1) is very easy to validate. We just prove (2).

For $i\in[n]$ and $i\in\{2n+1,\dots,3n\}$, because $s\in\{-1,1\}^n$, so $3s(x-1.1\one/3)=1$ or $3s(x-1.1\one/3)=-1$, which implies $2\le |\F_1(x_i)|\le 4$.

For $i\in\{4n+1,\dots,4n+m\}$, $x_i-1.1\one/3$ has only six components that are not 0. Because $s$ is the solution of $\varphi$, which indicates that at least one of the six non-zero components of $x_i-1.1\one/3$ has the same positive or negative shape as the corresponding component of $s$. Consider that such six non-zero components of $x_i-1.1\one/3$ are in $\{-1/12,1/12\}$, so $3s(x_i-1.1\one/3)\ge 1/4-5/4=-1$.

Moreover, because $\varphi$ is a reversible problem, so $\varphi_i(n,m)$ and $\overline{\varphi_i(n,m)}$ are both in the $\varphi$, which indicates that the positive and negative forms of the six non-zero components of $x_i-1.1\one/3$ cannot be exactly the same as the positive and negative forms of the corresponding components in $s$, or there must be $\overline{\varphi_i(n,m)}=0$, which contradicts to $s$ is the solution of $\varphi$. So, we have $3s(x_i-1.1\one/3)\le 5/4-1/4=1$.

Then we have that, for $i\in\{4n+1,\dots,4n+m\}$, it holds $3s(x_i-1.1\one/3)\in[-1,1]$, resulting in $2\le |\F_1(x_i)|\le 4$. We proved (2).

By (1) and (2), and using lemma \ref{ppd11}, there is a network $\F_2:\R\to\R$ with width 2, depth 2 and 7 parameters that can classify the $\{(\F_1(x_i),y_i)\}_{i=1}^{4n+m}$, so $\F_2\circ\F_1$ is the network we want.

By such a network, we have that $N_\D\le n+8$, and then, we have  $\para(\L(\D(\varphi)))\le N_\D\le n+8$. We proved the result.

{\bf Part Two:} If $\varphi$ is a reversible 6-SAT problem and $\para(\L(\D(\varphi)))=n+8$, then $\varphi$ has a solution.

If $\L(\D(\varphi))$ has width 2 of the first layer, then $\para(\L(\D(\varphi)))\ge 2n+5>n+8$, so when $\para(\L(\D(\varphi)))=n+8$, the first layer has width 1.

Write $\L(\D(\varphi))=\F_2(\F_1(x))$, and   write $\F_1$ as $\F_1(x)=\Relu(3s(x-1.1\one/3)+b)$, and let $s=(s_1,s_2,\dots,s_n)$.

We will prove that $\sign(s)=(\sign(s_1),\sign(s_2),\sign(s_3),\dots,\sign(s_n))$ is a solution of $\varphi$. The proof is given in two parts.

{\bf Part 2.1} we have  $1.1|s_i|\ge |s_j|$ for any $i,j\in[n]$. Firstly, we have  $s_i\ne 0$ for any $i\in[n]$. Because if $s_i=0$, it holds $\F_1(x_i)=\F_1(x_{n+i})$, which implies that $\L(\D(\varphi))$ gives the same label to $x_i$ and $x_{n+i}$, but $x_i$ and $x_{n+i}$ have the different labels in dataset $\D(\varphi)$, so it contradicts $\L(\D(\varphi))$ is the memorization of $\D(\varphi)$.

Without losing generality, let $|s_1|\ge |s_2|\ge \dots\ge |s_n|$. Then we just need to prove that $1.1|s_n|\ge |s_1|$.

Because $\D(\varphi)$ is not linear separable, so by lemma \ref{ppd}, $\L(\D(\varphi))$ has width more than 1. Because $\F_1$ has width 1, so $\F_2$ has width 2 and 7 parameters, resulting in that $\F_2$ is a network with width 2 and depth 2. And $\F_2$ can classify such six points: $\{(\F_1(x_i),y_i)\}_{i\in\{1,n+1,2n+1,3n+1,2n,4n\}}$.

If $s_1>0$, taking the values of $x_1,x_{n+1},x_{2n+1},x_{3n+1}$ in $\F_1$, we have  $1.1s_1+b=\F_1(x_{n+1})\ge s_1+b= \F_1(x_1)\ge -s_1+b=\F_1(x_{2n+1})\ge-1.1s_1+b= \F_1(x_{3n+1})$, which implies $\F_1(x_{n+1})\ge \F_1(x_1)\ge \F_1(x_{2n+1})\ge \F_1(x_{3n+1})$; if $s_1<0$. Similarly as before, we have $\F_1(x_{n+1})\le \F_1(x_1)\le \F_1(x_{2n+1})\le \F_1(x_{3n+1})$. So, $\F_1(x_1)$ and $\F_1(x_{2n+1})$ are always in the interval from $\F_1(x_{n+1})$ to $\F_1(x_{3n+1})$.

Consider that $x_{n+1}$ and $x_{3n+1}$ have label $-1$, $x_{1}$ and $x_{2n+1}$ have label $1$, so by Lemma \ref{ppd11}, if $\{(\F_1(x_i),y_i)\}_{i\in\{1,n+1,2n+1,3n+1,2n,4n\}}$ can be memorized by a depth 2 width 2 network, then $\F_1(x_{2n})$ and $\F_1(x_{4n})$ must be not in the interval from $\F_1(x_{1})$ to $\F_1(x_{2n+1})$, or we cannot find a interval satisfies the conditions of lemma \ref{ppd11}.

Since $\max\{\F_1(x_{2n}),\F_1(x_{4n})\}=1.1|s_n|+b$, to ensure that $\F_1(x_{2n})$ and $\F_1(x_{4n})$ are not in the interval from $\F_1(x_{1})$ to $\F_1(x_{2n+1})$, we have  $\max\{\F_1(x_{2n}),\F_1(x_{4n})\}=1.1|s_n|+b\ge \max\{\F_1(x_{1}),\F_1(x_{2n+1})\}= |s_1|+b$ or $\max\{\F_1(x_{2n}),\F_1(x_{4n})\}=1.1|s_n|+b\le \min\{\F_1(x_{1}),\F_1(x_{2n+1})\}=-|s_1|+b$. The second case is impossible, so we have $1.1|s_n|\ge|s_1|$. This is what we want in this part.

{\bf Part 2.2} We show that $\sign(s)$ is the solution of $\varphi$. Assume that  $\sign(s)$ is not the solution of $\varphi$. There is a $i\in\{4n+1,\dots,4n+m\}$, such that the positive and negative forms of the six non-zero components of $x_i$ are exactly the same as the positive and negative forms of the corresponding components in $s$. Then $sx_i+b\ge 6/4|s_n|+b\ge 6/4.4|s_1|+b\ge 1.1|s_1|+b$. So, by
$\max\{\F_1(x_{1+n}),\F_1(x_{3n+1})\}=1.1|s_1|+b$ and $\min\{\F_1(x_{1+n}),\F_1(x_{3n+1})\}=-1.1|s_1|+b$, we know that $\F_1(x_{i})$ is not in the interval from $\F_1(x_{1+n})$ to $\F_1(x_{3n+1})$.

Then similar to part 2.1, consider the point $\{(\F_1(x_i),y_i)\}_{i\in\{1,n+1,2n+1,3n+1,i\}}$, we have that $\F_1(x_1)$ and $\F_1(x_{2n+1})$ are always in the interval from $\F_1(x_{n+1})$ to $\F_1(x_{3n+1})$, but $\F_1(x_{i})$ is not in the interval from $\F_1(x_{1+n})$ to $\F_1(x_{3n+1})$. By lemma \ref{ppd11} and the fact that the label of $\F_1(x_{n+1})$ and $\F_1(x_{3n+1})$ is different from that of other three samples, we cannot find an interval satisfying the condition in  lemma \ref{ppd11}, so $\F_2(x)$ cannot classify such five points: $\{(\F_1(x_i),y_i)\}_{i=1,n+1,2n+1,3n+1,i}$. This is contradictory, as $\L(\D(\varphi))$ is the memorization of $\D(\varphi)$. So, the assumption is wrong, we prove the theorem.
\end{proof}

\section{Proof of Theorem \ref{th6}}
\label{app-71}

\subsection{Proof of Proposition \ref{prop-ft}}
\label{app-711}

\begin{proof}
It suffices to prove that we can find an $S_c(\D)\subset \{(x,y)\|(x,y)\sim \D\}$ such that for any $(x,y)\sim \D$, we have  $x\in\cup_{(z,w)\in S_c(\D)} B((z,w))$.

Let $S_c=\{(i_1c/(6.2n),i_2c/(6.2n),\dots,i_nc/(6.2n))\| i_j\in\{0,1,\dots,[6.2n/c]+1\}\}$, and define $S_c(\D)$ as: for any $(i_1c/(6.2n),i_2c/(6.2n),\dots,i_nc/(6.2n))\in S_c$, randomly take a $(x,y)\sim \D$ satisfying $||x-(i_1c/(6.2n),i_2c/(6.2n),\dots,i_nc/(6.2n))||_\infty\le c/(6.2n)$ (if we have such a $x$), and put $(x,y)$ into $S_c(\D)$.

Then, we have that, for any $(x,y)\sim \D$, there is a point $z\in S_c$ such that $||z-x||_\infty\le c/(6.2n)$, and there is a $(x_z,y_z)\in S_c(\D)$ such that $||z-x_z||_\infty\le c/(6.2n)$, so $||x_z-x||_\infty\le c/(3.1n)$, which implies $||x-x_z||_\infty\le c/3.1$.

Since the radius of $B((z,w))$ is more than $c/3.1$, for any $(x,y)\sim \D$, we have  $x\in\cup_{(z,w)\in S_c(\D)} B((z,w))$, we prove the lemma.
\end{proof}

\subsection{Main idea}
For a given dataset $\D_{tr}\subset[0,1]^n\times\{-1,1\}$, we use the following two steps to construct a memorization network:

(c1): Find suitable convex sets $\{C_i\}$ in $[0,1]^n$, ensuring that: each sample in $\D_{tr}$ is in at least one of these convex sets. Furthermore, if $x,z\in C_i$ and $(x,y_x),(z,y_z)\in\D_{tr}$, then $y_x=y_z$, and define $y(C_i)=y_x$.

(c2): Construct a network $\F$ satisfying that for any $x\in C_i$, $\sign(\F(x))=y(C_i)$.  Such a network must be a memorization of $\D_{tr}$, because each sample in $\D_{tr}$ is in at least one of $\{C_i\}$, so if $x\in C_i$ and $(x,y_x)\in\D_{tr}$, then $\sign(\F(x))=y(C_i)=y_x$, which is the network we want.

\subsection{Finding convex sets}
\label{fcs}
For a given dataset $\D_{tr}\subset[0,1]^n\times\{-1,1\}$, let $\D_{tr}=\{(x_i,y_i)\}_{i=1}^N$, and for $i\in[n]$, the convex sets $C_i$ are constructed as follows:

(1): For any $i,j\in[N]$, define $S_{i,j}(x)=(x_i-x_j)(x-(0.51*x_i+0.49*x_j))$, it is easy to see that $S_{i,j}$ is a vertical between $x_i$ and $x_j$;

(2): The convex sets $C_i$ are defined as $C_i=\cap_{j\in[N],y_i\ne y_j}\{x\in[0,1]^n\| S_{i,j}(x)\ge 0\}$.

Now, we have the following lemma, which implies that $C_i$ satisfies conditions (c1) mentioned in above.
\begin{lemma}
\label{K12}
    If $C_i$ are constructed as above, then

    (1): $x_i\in C_i$;

    (2): If $z\in C_i$ and $(z,y_z)\in\D_{tr}$, then $y_z=y_i$;

    (3): $C_i$ is a convex set.
\end{lemma}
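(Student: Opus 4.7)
The plan is to verify the three claims by direct calculation from the definition of $S_{i,j}$; this is a short routine verification rather than a conceptually hard step, since $S_{i,j}(x)$ is an affine function of $x$ whose zero set is the perpendicular bisector-type hyperplane between $x_i$ and $x_j$ (shifted slightly toward $x_j$ by the $0.51/0.49$ weighting).

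For part (1), I would expand $S_{i,j}(x_i) = (x_i-x_j)\cdot(x_i - 0.51 x_i - 0.49 x_j) = 0.49\,(x_i-x_j)\cdot(x_i-x_j) = 0.49\|x_i-x_j\|_2^2 \ge 0$. Since $x_i\in[0,1]^n$ (as $(x_i,y_i)\in\D_{tr}\subset[0,1]^n\times\{-1,1\}$) and the above holds for every $j$ with $y_j\ne y_i$, $x_i$ lies in every halfspace defining $C_i$, hence in $C_i$.

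For part (2), I would argue by contradiction. Suppose $(z,y_z)\in\D_{tr}$ with $z\in C_i$ but $y_z\ne y_i$. Then $z=x_j$ for some index $j$ with $y_j\ne y_i$, so this $j$ appears in the intersection defining $C_i$. Compute $S_{i,j}(x_j) = (x_i-x_j)\cdot(x_j - 0.51x_i - 0.49 x_j) = (x_i-x_j)\cdot(0.51 x_j - 0.51 x_i) = -0.51\|x_i-x_j\|_2^2$. Because $\D\in\D(n,c)$ has positive separation bound $c>0$ and $y_i\ne y_j$, we have $x_i\ne x_j$, so $S_{i,j}(x_j)<0$, contradicting $z\in C_i$.

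For part (3), I would note that for each fixed $i,j$ the map $x\mapsto S_{i,j}(x)$ is an affine (in fact linear plus constant) function of $x$, so $\{x\in\R^n : S_{i,j}(x)\ge 0\}$ is a closed halfspace, and intersecting with the convex cube $[0,1]^n$ preserves convexity. Since $C_i$ is the finite intersection of such convex sets, it is convex. The only potential subtlety across all three parts is the nonemptiness/non-degeneracy question handled by invoking the separation bound $c>0$ in part (2); everything else is bookkeeping.
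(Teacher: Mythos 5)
Your proposal is correct and follows essentially the same argument as the paper: the same direct evaluations $S_{i,j}(x_i)=0.49\|x_i-x_j\|_2^2$ and $S_{i,j}(x_j)=-0.51\|x_i-x_j\|_2^2$ for parts (1) and (2) (the paper states (2) directly rather than by contradiction, which is only a cosmetic difference), and the same finite-intersection-of-halfspaces observation for convexity in (3). Your explicit appeal to the separation bound to guarantee $x_i\ne x_j$ when $y_i\ne y_j$ is a point the paper leaves implicit, but otherwise the two proofs coincide.
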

\begin{proof}
    Firstly, we show that $x_i\in C_i$.
    For any $i,j\in[N]$, taking $x_i$ into $S_{i,j}(x)$, we have  $S_{i,j}(x_i)=0.49||x_i-x_j||_2^2>0$, so $x_i\in\{x\in[0,1]^n\| S_{i,j}(x)\ge 0\}$.
Thus $x_i\in\cap_{j\in[N],y_i\ne y_j}\{x\in[0,1]^n\| S_{i,j}(x)\ge 0\}=C_i$.

    Then, we show that: if $y_j\ne y_i$, then $x_j\notin C_i$, which implies (2) of lemma is valid.

    For any $i,j\in[N]$, taking $x_j$ into $S_{i,j}(x)$, we have $S_{i,j}(x_j)=-0.51||x_i-x_j||_2^2< 0$, so  $x_j\notin\{x\in[0,1]^n\| S_{i,j}(x)\ge 0\}$.
 Thus $x_j\notin\cap_{k\in[N],y_i\ne y_k}\{x\in[0,1]^n\| S_{i,k}(x)\ge 0\}=C_i$.

Finally, we show $C_i$ is a convex set. Because for any $i,j\in[N]$, $\{x\in[0,1]^n\| S_{i,j}(x)\ge 0\}$ is a convex set, and the combination of convex sets is also convex set, so $C_i$ is a convex set.
\end{proof}

\subsection{Construct the Network}
\label{ctn}
We show how to construct a network $\F$, such that $\sign(\F(x))=y(C_i)$ for any $x\in C_i$, where $C_i$ is defined in section \ref{fcs}.

For a given dataset $\D_{tr}=\{(x_i,y_i)\}_{i=1}^N$, we construct a network $\F_{mem}$ which has three layers as following.

(1): Let $r=0.01*\min_{i,j\in[N],y_i\ne y_j}||x_i-x_j||_2^2$. For any $i,j\in[N]$, $S_{i,j}$ defined in section \ref{fcs}, let $u_i(x)=\sum_{j\in[N],y_j\ne y_i}\Relu(-S_{i,j}(x))-r$. It is easy to see that $u_i$ is a depth 2 network.

(2): The first two layers are $\F_1:\R^n\to \R^N$. Let $\F_1(x)[i]$ be the $i$-th output of $\F_1(x)$, then let $\F_1(x)[i]$ equal to $\Relu(-u_i(x))$. It is easy to see that, $\F_1(x)$ requires $O(N^2n)$ parameters.

(3): The third layer is $\F_2:\R^N\to\R$, and $\F_2(v)=\sum_{i=1}^Ny_iv_i$, where $v_i$ is the $i$-th weight of $v_i$.

Now, we prove that $\sign(\F_{mem}(x))=y(C_i)$ for any $x\in C_i$. We need the following lemma.

\begin{lemma}
\label{hdgg}
For any $x\in C_i$, we have $u_i(x)<0$ and $u_j(x)>0$ when $y_i\ne y_j$.
\end{lemma}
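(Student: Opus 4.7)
The plan is to handle the two claims separately, using the definition of $C_i$ for the first and an algebraic identity comparing $S_{i,j}$ and $S_{j,i}$ for the second.

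For the first claim, I will simply unfold the definitions. If $x\in C_i$, then by construction of $C_i$ we have $S_{i,j}(x)\ge 0$ for every $j$ with $y_j\neq y_i$. Hence $\Relu(-S_{i,j}(x))=0$ for all such $j$, so the sum in $u_i(x)$ vanishes and $u_i(x)=-r<0$.

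For the second claim, the key observation is that $S_{i,j}$ and $S_{j,i}$ are \emph{not} negatives of each other; the asymmetric split $0.51/0.49$ produces a quantitative gap. Concretely, I will compute
\[
S_{i,j}(x)+S_{j,i}(x)=(x_i-x_j)\cdot\bigl((0.51 x_j+0.49 x_i)-(0.51 x_i+0.49 x_j)\bigr)=-0.02\,\|x_i-x_j\|_2^2,
\]
so $-S_{j,i}(x)=S_{i,j}(x)+0.02\,\|x_i-x_j\|_2^2$. Now suppose $x\in C_i$ and $y_j\neq y_i$. Since $j$ is one of the indices defining $C_i$, we have $S_{i,j}(x)\ge 0$, and therefore
\[
-S_{j,i}(x)\ge 0.02\,\|x_i-x_j\|_2^2 \ge 0.02\min_{k,l:\,y_k\ne y_l}\|x_k-x_l\|_2^2 = 2r.
\]
Because $y_i\neq y_j$, the index $i$ appears in the sum defining $u_j$, giving
\[
u_j(x)\ge \Relu(-S_{j,i}(x))-r \ge 2r-r = r>0,
\]
which is the desired strict inequality.

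The only subtle point is the sign/coefficient bookkeeping in the identity $S_{i,j}(x)+S_{j,i}(x)=-0.02\,\|x_i-x_j\|_2^2$; once that is in hand, both conclusions follow immediately. I expect this to be the main (mild) obstacle, and it is exactly the reason the construction uses $0.51/0.49$ rather than the symmetric midpoint $0.5/0.5$: the constant $0.02$ is precisely what buys the margin $2r$ needed to overcome the $-r$ offset in $u_j$.
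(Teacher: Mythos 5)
Your proposal is correct and follows essentially the same route as the paper: the first claim by unfolding the definition of $C_i$ so that $u_i(x)=-r$, and the second via the identity $S_{i,j}(x)+S_{j,i}(x)=-0.02\,\|x_i-x_j\|_2^2$, which together with $S_{i,j}(x)\ge 0$ gives $-S_{j,i}(x)\ge 0.02\,\|x_i-x_j\|_2^2\ge 2r$ and hence $u_j(x)\ge 2r-r>0$. The paper performs exactly this algebraic comparison (written as an expansion of $-S_{j,i}$ rather than as a symmetric identity), so no further comment is needed.
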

\begin{proof}
Assume that $x\in C_i$.
We prove the following two properties, and hence the lemma.

{\bf P1. $u_i(x)<0$.}

By the definition of $C_i$, we have $S_{i,j}(x)\ge 0$ for all $j\in[N]$ staisfying $y_i\ne y_j$, so $u_i(x)=\sum_{j\in[N],y_j\ne y_i}\Relu(-S_{i,j}(x))-r=\sum_{j\in[N],y_j\ne y_i}0-r=-r<0$.

{\bf  P2. $u_j(x)>0$ when $y_i\ne y_j$.}

For any $j$ such that $y_i\ne y_j$, we show $S_{j,i}(x)\le -0.02||x_i-x_j||_2^2$ at first.
Because $x\in C_i$, so $S_{i,j}(x)\ge 0$, that is $(x_i-x_j)(x-(0.51*x_i+0.49*x_j))\ge 0$, so
\begin{equation*}
    \begin{array}{cl}
         & (x_i-x_j)(x-(0.51*x_i+0.49*x_j)) \\
        = & (x_i-x_j)(x-(0.49*x_i+0.51*x_j))-0.02||x_i-x_j||_2^2\\
        =&-S_{j,i}(x)-0.02||x_i-x_j||_2^2\\
        \ge& 0.
    \end{array}
\end{equation*}
Thus $S_{j,i}(x)\le -0.02||x_i-x_j||_2^2$.
Then, by the above result, taking the value of $r$ in it, we have $u_j(x)\ge \Relu(-S_{i,j}(x))-r\ge 0.02||x_i-x_j||_2^2-r>0$.
\end{proof}

By the above lemma, we can prove the result.
\begin{lemma}
\label{K33}
    we have  $\sign(\F_{mem}(x))=y_i$ for any $x\in C_i$.
\end{lemma}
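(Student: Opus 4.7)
The plan is to unfold the definition of $\F_{\mem}$ and then apply Lemma \ref{hdgg} directly. By construction, $\F_{\mem}(x) = \F_2(\F_1(x)) = \sum_{j=1}^N y_j \Relu(-u_j(x))$, so the proof reduces to showing that this sum has the same sign as $y_i$ whenever $x \in C_i$.

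First I would split the sum according to label: $\F_{\mem}(x) = \sum_{j:\, y_j = y_i} y_j \Relu(-u_j(x)) + \sum_{j:\, y_j \ne y_i} y_j \Relu(-u_j(x))$. For any $j$ with $y_j \ne y_i$, Lemma \ref{hdgg} gives $u_j(x) > 0$, hence $\Relu(-u_j(x)) = 0$, so the second sum vanishes. For any $j$ with $y_j = y_i$, we factor out the common label to get $\F_{\mem}(x) = y_i \sum_{j:\, y_j = y_i} \Relu(-u_j(x))$.

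It remains to argue that the sum in parentheses is strictly positive. Since $\Relu \ge 0$, every term is nonnegative; and the $j = i$ term contributes $\Relu(-u_i(x)) > 0$ by Lemma \ref{hdgg} (which gives $u_i(x) < 0$, in fact $u_i(x) = -r < 0$). Therefore $\sum_{j:\, y_j = y_i} \Relu(-u_j(x)) > 0$, which forces $\sign(\F_{\mem}(x)) = \sign(y_i) = y_i$.

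The only delicate point is the strict positivity at $j = i$, which is precisely why $r > 0$ was subtracted in the definition of $u_i$: without that subtraction, one would only get $u_i(x) \le 0$ and hence $\Relu(-u_i(x)) \ge 0$, which would not suffice to determine a sign. So the main (and essentially only) obstacle is bookkeeping: confirming that the threshold $r = 0.01 \min_{y_i \ne y_j}\|x_i - x_j\|_2^2$ chosen in Section \ref{ctn} is simultaneously small enough to make $u_i(x) < 0$ on $C_i$ (via P1 in Lemma \ref{hdgg}) and large enough to force $u_j(x) > 0$ when $y_j \ne y_i$ (via P2). Since Lemma \ref{hdgg} has already verified both inequalities for this choice of $r$, the present lemma follows immediately from the case analysis above.
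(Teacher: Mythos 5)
Your proof is correct and follows essentially the same route as the paper: invoke Lemma \ref{hdgg} to kill every term $\Relu(-u_j(x))$ with $y_j\ne y_i$, factor $y_i$ out of the remaining sum, and use $u_i(x)=-r<0$ to get strict positivity of the $j=i$ term. The extra remark on the role of $r$ is harmless but already settled inside Lemma \ref{hdgg}, so nothing further is needed.
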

\begin{proof}
Let $x\in C_i$. By lemma \ref{hdgg}, we have $\F_1(x)[i]>0$, and $\F_1(x)[j]=0$ when $j$ satisfies $y_j\ne y_i$, so $\F(x)=\sum_{j\in[N]}y_j\F_1(x)[j]=y_i\sum_{j\in[N],y_j=y_i}\F_1(x)[j]$, by $\F_1(x)[i]>0$, and we thus have  $\sign(\F(x))=y_i$.
\end{proof}

\subsection{Effective and Generalization Guarantee}

In this section, we prove that the above algorithm is an effective memorization algorithm with  guaranteed generalization.
We give a lemma.

\begin{lemma}
\label{pmjh}
For any $a,b,c\in\R^n$ such that $||b-a||_2\ge 3.1||a-c||_2$, let $V$ be the plane $(b-c)(x-(0.51c+0.49b))$. Then the distance of $a$ to the plane $V$ is greater than $||b-a||/3.1$.
\end{lemma}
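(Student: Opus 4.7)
The approach is to reduce to a one-dimensional analysis by projecting $a$ onto the line $L$ through $b$ and $c$. After translating so that $c = 0$, I set
\[
\alpha \;=\; \frac{(b-c)\cdot(a-c)}{\|b-c\|^2}, \qquad h \;=\; \|a - c - \alpha(b-c)\|,
\]
so $\alpha$ is the signed fraction of $\|b-c\|$ at which the orthogonal projection of $a$ onto $L$ lies, and $h$ is the perpendicular height. A direct expansion gives
\[
(b-c)\cdot\bigl(a - (0.51c + 0.49b)\bigr) \;=\; (b-c)\cdot(a-c) - 0.49\|b-c\|^2 \;=\; (\alpha - 0.49)\|b-c\|^2,
\]
so the distance from $a$ to $V$ equals $|\alpha - 0.49|\cdot\|b-c\|$; equivalently, $V$ meets $L$ at the point $\alpha = 0.49$, and projecting onto $L$ preserves distance because $V \perp L$.

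Next, Pythagoras gives $\|a-c\|^2 = \alpha^2\|b-c\|^2 + h^2$ and $\|b-a\|^2 = (1-\alpha)^2\|b-c\|^2 + h^2$. Substituting these into the hypothesis $\|b-a\|^2 \ge 9.61\|a-c\|^2$ forces $(1-\alpha)^2 \ge 9.61\alpha^2$, i.e.\ $\alpha \in [-1/2.1,\,1/4.1]$, together with the explicit bound
\[
h^2 \;\le\; \frac{(1-\alpha)^2 - 9.61\alpha^2}{8.61}\,\|b-c\|^2.
\]
Since $h^2$ enters $\|b-a\|^2$ only additively, the target inequality $|\alpha - 0.49|\cdot\|b-c\| > \|b-a\|/3.1$ is tightest precisely when $h^2$ attains this upper bound, i.e.\ when equality holds in the hypothesis and $a$ lies on $L$.

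Plugging this worst-case value of $h^2$ into the squared target and cancelling the common factor $\|b-c\|^2$ reduces the claim to the purely algebraic inequality
\[
(\alpha - 0.49)^2 \;>\; \frac{1 - 2\alpha}{8.61} \qquad \text{for every } \alpha \in [-1/2.1,\,1/4.1],
\]
equivalently $\alpha^2 - 0.7477\alpha + 0.1239 > 0$. Solving this quadratic, both of its roots ($\approx 0.2479$ and $\approx 0.4998$) strictly exceed $1/4.1 \approx 0.2439$, so the polynomial is strictly positive throughout the required interval and the lemma follows.

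The main obstacle is that this final inequality is numerically very tight: at the endpoint $\alpha = 1/4.1$ the margin is only of order $10^{-3}$, so naive triangle-inequality estimates combined with Cauchy--Schwarz on $(b-c)\cdot(a-c)$ fall short by a large constant factor. The projection-based reduction is what captures the fact that in the extremal configuration $a$ lies on the line $L$, leaving no slack from perpendicular displacement; this tightness also explains why the lemma requires such specific constants $0.51,\,0.49$ (from the plane used in the definition of $S_{i,j}$) together with the threshold $3.1$ (from the radius in Definition \ref{pprd}) -- the proof does not survive even moderate changes in either constant without a compensating adjustment.
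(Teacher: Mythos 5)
Your proof is correct, and it takes a somewhat different route from the paper's. You parameterize $a$ by the projection coordinate $\alpha$ along the line through $b,c$ and the perpendicular height $h$, extremize exactly over $h$ subject to the hypothesis, and reduce to the scalar inequality $(\alpha-0.49)^2>\tfrac{1-2\alpha}{8.61}$ on $\alpha\in[-1/2.1,\,1/4.1]$; all of these steps check out (the roots of the quadratic are $\approx 0.2481$ and $\approx 0.4996$ rather than your quoted $0.2479$ and $0.4998$, but both still exceed $1/4.1\approx 0.2439$, and an exact cross-multiplication at $\alpha=1/4.1$ confirms strict positivity with margin $\approx 1.1\times 10^{-3}$). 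The paper instead works with the side lengths $L_{ab},L_{ac},L_{bc}$ and the angle $\theta=\angle abc$: it writes the (signed) distance as $L_{ab}\cos\theta-0.51L_{bc}$, eliminates $\theta$ by the law of cosines, and lower-bounds the resulting quotient $\frac{0.5L_{ab}^2-0.5L_{ac}^2-L_{ab}L_{bc}/3.1}{L_{bc}^2}$ using only $L_{ac}\le L_{ab}/3.1$ and the triangle inequality $L_{bc}\le L_{ac}+L_{ab}$ together with a monotonicity claim, obtaining $\approx 0.0122>0.01$. The trade-off: your argument is an exact worst-case analysis that pinpoints the extremal configuration ($a$ on the segment with equality in the hypothesis, $h=0$, $\alpha=1/4.1$) and exposes how thin the true margin is, whereas the paper's argument is shorter but coarser, discarding the perpendicular component through the cosine rule and passing over the monotonicity in $L_{bc}$ without verification (it does hold, since the numerator remains positive on the admissible range, but it is not "inversely proportional" as literally stated). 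Both treatments, yours included, tacitly assume $b\neq c$ so that the hyperplane is nondegenerate, which is harmless in the application since $b$ and $c$ carry different labels and are separated.
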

\begin{proof}
Let $||a-b||_2=L_{ab}$, $||a-c||_2=L_{ac}$, $||c-b||_2=L_{bc}$.
 Let the angle $\angle abc=\theta$. Then the distance between $a$ and the plane $V$ is $L_{ab}\cos \theta-0.51L_{bc}$.

    Using cosine theorem, we have $\cos \theta=\frac{L^2_{bc}+L^2_{ab}-L^2_{ac}}{2L_{bc}L_{ab}}$, so we just need to prove that $\frac{L^2_{bc}+L^2_{ab}-L^2_{ac}}{2L_{bc}}-0.51L_{bc}\ge L_{ab}/3.1$, that is
    $\frac{0.5L_{ab}^2-0.5L_{ac}^2-L_{ab}L_{bc}/3.1}{L^2_{bc}}\ge 0.01$.
    It is easy to see that such value is inversely proportional to $L_{ac}$ and $L_{bc}$. By $L_{ac}\le L_{ab}/3.1$ and $L_{bc}\le L_{ac}+L_{ab}\le 4.1L_{ab}/3.1$, we have
    $\frac{0.5L_{ab}^2-0.5L_{ac}^2-L_{ab}L_{bc}/3.1}{L^2_{bc}}\ge \frac{0.5-0.5/(3.1)^2-4.1/(3.1)^2}{(4.1/3.1)^2}>0.01$. The lemma is proved.
    \end{proof}

We now show that the algorithm is effective and has generalization guarantee.

\begin{proof}
Let $\F_{mem}$ be the memorization network of $\D_{tr}$ constructed by the above algorithm.

{\bf Effective.}
We show that $\F_{mem}$ is a memorization of $\D_{tr}$ can be constructed in polynomial time.

It is easy to see that, $u_i$ has width at most $N$, and each value of parameters can be calculated by $\D_{tr}$ in polynomial time. So $\F_1$ defined in (1) in section \ref{ctn} can be calculated in polynomial time.
It is easy to see that the $\F_2$ defined in (1) in section \ref{ctn} can be calculated in polynomial time.
    This, $\F$ can be calculated in polynomial time.

    {\bf Generalization Guarantee.}
    Let $S=\{(v_i,y_{v_i})\}_{i=1}^{S_\D}$ be the nearby set defined in Definition \ref{pprd}. Then, we show the result in two parts.

    {\bf Part One}, we show that: for a $(x_i,y_i)\in \D_{tr}$, if $x_i\in B((v_j,y_{v_j}))$ for a $j\in[S_\D]$, then $\sign(\F(x))=y_i$ for any $x\in B((v_j,y_{v_j}))$.

    Firstly, we show that it holds $B((v_j,y_{v_j}))\in C_i$.
    For any $k\in[N]$ such that $y_k\ne y_i$, we have $||v_j-x_k||_2\ge 3.1r\ge 3.1||v_j-x_i||$, where $r$ is the radius of $B((v_j,y_{v_j}))$ so by lemma \ref{pmjh}, the distance from $v_j$ to $S_{ik}(x)$ is greater than $r$, which means that the points in $B((v_j,y_{v_j}))$ are on the same side of the plane $S_{ik}(x)$, by $x_i\in B((v_j,y_{v_j}))$ and $S_{ik}(x_i)>0$ as said in lemma \ref{K12}. Thus, for any $x\in B((v_j,y_{v_j}))$, we have $S_{ik}(x)\ge 0$. By $C_i=\cap_{j\in[N],y_i\ne y_j}\{x\in[0,1]^n\| S_{i,j}(x)\ge 0\}$, we know that $B((v_j,y_{v_j}))\in C_i$.



By the above result, if $x\in B((v_j,y_{v_j}))$, then $x\in C_i$; so  by lemma \ref{K33}, we have  $\sign(\F(x))=y_i$ for all $x\in B((v_j,y_{v_j}))$.

    {\bf Part Two}, we show that if $\D_{tr}\sim\D^N$ and $N\ge S_\D/\epsilon\ln(S_\D/\delta)$, then $\Pr_{\D_{tr}\sim\D^N}(A_\D(\F_{mem})\ge1-\epsilon)\ge1-\delta$.

    Let $Q_i=\Pr_{(x,y)\sim\D}(x\in B((v_i,y_{v_i})))$, then without losing generality, we assume that $Q_1\le Q_2\le \dots\le Q_{S_\D}$.
    Then, for the dataset $\D_{tr}=\{(x_i,y_i)\}_{i=1}^N$, let $Z(\D_{tr})=\{j\in[S_\D]\|\exists i\in[N], x_i\in B((v_j,y_{v_j}))\}$. The proof is given in three parts.

{\bf part 2.1.}
    Firstly, we show that $A_\D(\F_{mem})\ge 1-\sum_{i\notin Z(\D_{tr})} Q_i$.

    If $i\in Z(\D_{tr})$, then by the definition of $Z(\D_{tr})$, we know that there is a $j\in[N]$ such that $x_j\in B((v_i,y_{v_i}))$, so by part one, we have  $\sign(\F_{mem}(x))=y_j$ for any $x\in B((v_i,y_{v_i}))$.

    Moreover, for any $(x,y)\sim \D$ and $x\in B((v_i,y_{v_i}))$, by lemma \ref{K12} and $B((v_i,y_{v_i})) \in C_j$ which has been shown in part one, we know that $y=y_j$.

    So $\sign(\F_{mem}(x))=y_j=y$ for any $(x,y)\sim \D$ and $x\in B((v_i,y_{v_i}))$, which means that $\F_{mem}$ gives the correct label to all $x\in B((v_i,y_{v_i}))$ when $i\in Z(\D_{tr},S)$. So $A_\D(\F_{mem})\ge \sum_{i\in Z(\D_{tr},S)} Q_i\ge 1-\sum_{i\notin Z(\D_{tr},S)} Q_i$.

{\bf part 2.2.}
    Now, we show that $\Pr_{\D_{tr}\sim \D^N}(\sum_{i\notin Z(\D_{tr})} Q_i\le \epsilon)\ge 1-\delta$.

    Let $Cc_i=\{\D_{tr}\|\D_{tr}\sim\D^N,i\notin Z(\D_{tr})\ and\ j\in Z(\D_{tr})\ for\ \forall j>i\}$, easy to see that $Cc_j \cap Cc_i=\emptyset$ when $i\ne j$ and $\sum_{i=0}^N\Pr_{\D_{tr}\sim\D^N}(\D_{tr}\in Cc_i)=1$.
    It is easy to see that, $\Pr_{\D_{tr}\sim\D^N}(\D_{tr}\in Cc_i)\le (1-Q_i)^N$ when $i\ge 1$.

    Firstly we have that, if some $i\in[S_\D]$ makes that $Q_i<\epsilon/i$, then for any $\D_{tr}\in Cc_j$ where $j\le i$, we have  $\sum_{k\notin Z(\D_{tr})} Q_k\le \sum_{k=1}^jQ_k\le jQ_j\le iQ_i<\epsilon$.

    So that, we consider two situations.

    {\bf Situation 1: There is a $i\in[S_\D]$ such that $Q_{i}<\epsilon/i$.}

    Let $N_0$ be the biggest number in $[S_\D]$ such that $Q_{N_0}<\epsilon/N_0$. Then we have that:
    \begin{equation}
    \label{jpp}
        \begin{array}{cl}
      &  \Pr_{\D_{tr}\sim \D^N}(\sum_{i\notin Z(\D_{tr})} Q_i\le \epsilon)\\
      =&\Pr_{\D_{tr}\sim \D^N}(\sum_{i\notin Z(\D_{tr})} Q_i\le \epsilon\|\D_{tr}\in\cup_{k=0}^{N_0}Cc_k)\Pr_{\D_{tr}\sim \D^N}(\D_{tr}\in\cup_{k=0}^{N_0}Cc_k)\\
      &+\Pr_{\D_{tr}\sim \D^N}(\sum_{i\notin Z(\D_{tr})} Q_i\le \epsilon\|\D_{tr}\in\cup_{k=N_0+1}^{[S_\D]}Cc_k)\Pr_{\D_{tr}\sim \D^N}(\D_{tr}\in\cup_{k=N_0+1}^{[S_\D]}Cc_k)\\
      =&\Pr_{\D_{tr}\sim \D^N}(\D_{tr}\in\cup_{k=0}^{N_0}Cc_k)+\Pr_{\D_{tr}\sim \D^N}(\sum_{i\notin Z(\D_{tr})} Q_i\le\epsilon\|\D_{tr}\in\cup_{k=N_0+1}^{[S_\D]}Cc_k)\\
      &\Pr_{\D_{tr}\sim \D^N}(\D_{tr}\in\cup_{k=N_0+1}^{[S_\D]}Cc_k).
        \end{array}
    \end{equation}

   Hence, we have
    \begin{equation*}
        \begin{array}{cl}
        &\Pr_{\D_{tr}\sim \D^N}(\D_{tr}\in\cup_{k=N_0+1}^{[S_\D]}Cc_k)\\
      \le &  \sum_{i=N_0+1}^{S_\D}\Pr_{\D_{tr}\sim\D^N}(\D_{tr}\in Cc_i)\\
             \le & \sum_{i=N_0+1}^{S_\D}(1-Q_i)^N \\
             \le & \sum_{i=N_0+1}^{S_\D}e^{-NQ_i} \\
             \le & \sum_{i=N_0+1}^{S_\D}e^{-N\epsilon/i} \\
             \le & \sum_{i=1}^{S_\D}e^{-N\epsilon/i}\\
             \le & S_\D e^{-N\epsilon/S_\D}\\
             \le & \delta.
        \end{array}
    \end{equation*}

The last step is to take $N\ge S_\D/\epsilon\ln(S_\D/\delta)$ in.
 So, taking the above result in equation \ref{jpp}, we have

\begin{equation*}
        \begin{array}{cl}
      &  \Pr_{\D_{tr}\sim \D^N}(\sum_{i\notin Z(\D_{tr})} Q_i\le \epsilon)\\
      \ge &1-\delta+\Pr_{\D_{tr}\sim \D^N}(\sum_{i\notin Z(\D_{tr})} Q_i\le \epsilon\|\D_{tr}\in\cup_{k=N_0+1}^{[S_\D]}Cc_k)\delta\\
      \ge& 1-\delta
        \end{array}
    \end{equation*}
which is what we want.

{\bf Situation 2: There is no $i\in[S_\D]$ such that $Q_{i}<\epsilon/i$.}

Then, we have
 \begin{equation*}
        \begin{array}{cl}
        &\Pr_{\D_{tr}\sim \D^N}(\D_{tr}\in\cup_{k=1}^{[S_\D]}Cc_k)\\
      \le &  \sum_{i=1}^{S_\D}\Pr_{\D_{tr}\sim\D^N}(\D_{tr}\in Cc_i)\\
             \le & \sum_{i=1}^{S_\D}(1-Q_i)^N \\
             \le & \sum_{i=1}^{S_\D}e^{-NQ_i} \\
             \le & \sum_{i=1}^{S_\D}e^{-N\epsilon/i} \\
             \le & S_\D e^{-N\epsilon/S_\D}\\
             \le & \delta.
        \end{array}
    \end{equation*}
So with probability $1-\delta$, we have  $\D_{tr}\in Cc_0$.  When $\D_{tr}\in Cc_0$, we have  $Z(\D_{tr})=[S_\D]$, so that $\sum_{i\notin Z(\D_{tr})} Q_i=0$.
Hence,  $\Pr_{\D_{tr}\sim \D^N}(\sum_{i\notin Z(\D_{tr})} Q_i\le \epsilon)\ge 1-\delta$.

{\bf part 2.3} Now we can prove the part 2, by part 2.1 and part 2.2, we have that
$\Pr_{\D_{tr}\sim\D^N}(A_\D(\F_{mem})\ge1-\epsilon)\ge \Pr_{\D_{tr}\sim\D^N}(1-\sum_{i\notin Z(\D_{tr},S)} Q_i\ge1-\epsilon)\ge 1-\delta$.
The theorem is proved.
\end{proof}

\section{Experiments}
\label{exp}

We try to verify Theorem \ref{th6} on MNIST and CIFAR10 \citep{krizhevsky2009learning}.

\subsection{Experiment on MNIST}
\label{e2}
For MNIST, we tested all binary classification problems with different label compositions. For each pair of labels, we use 500 corresponding samples with each label in the original dataset to form a new dataset $\D_{tr}$, and then construct memorization network for $\D_{tr}$ by Theorem \ref{th6}. For each binary classification problem, Table \ref{tabm10-bu} shows the accuracy on the samples with such two labels in testset.

\begin{table}[!ht]
\caption{On MNIST, accuracy for all binary classification problems with different label compositions, use memorization algorithm by theorem \ref{th6}. The result in row $i$ and column $j$ is the result for classifying classes $i$ and $j$.}
\label{tabm10-bu}
\centering
\begin{tabular}{cccccccccccccccc}
\hline
category&0&1&2&3&4&5&6&7&8&9\\
0&-&0.99&0.96&0.99&0.99&0.97&0.96&0.98&0.98&0.97\\
1&0.99&-&0.97&0.99&0.98&0.99&0.98&0.98&0.98&0.99\\
2&0.96&0.97&-&0.96&0.97&0.96&0.96&0.97&0.93&0.97\\
3&0.99&0.99&0.96&-&0.98&0.95&0.98&0.95&0.92&0.96\\
4&0.99&0.98&0.97&0.98&-&0.98&0.97&0.96&0.95&0.91\\
5&0.97&0.99&0.96&0.95&0.95&-&0.96&0.97&0.91&0.96\\
6&0.96&0.98&0.96&0.98&0.97&0.96&-&0.99&0.95&0.98\\
7&0.98&0.98&0.97&0.95&0.96&0.97&0.99&-&0.95&0.91\\
8&0.98&0.98&0.93&0.92&0.95&0.91&0.95&0.95&-&0.96\\
9&0.97&0.99&0.97&0.96&0.91&0.96&0.98&0.91&0.96&-
\end{tabular}
\end{table}

From Table \ref{tabm10-bu}, we can see that the algorithm shown in the theorem \ref{th6} has good generalization ability for mnist, almost all result is higher than $90\%$.

\subsection{Experiment on CIFAR10}
\label{mec10}

For CIFAR10, we test all binary classification problems with different label combinations. For each pair of labels, we use 3000 corresponding samples with each label in the original dataset to form a new dataset $\D_{tr}$, and then construct memorization network for $\D_{tr}$ by Theorem \ref{th6}. For each binary classification problem, Table \ref{tabc10-duo} shows the accuracy on the samples with such two labels in testset.

\begin{table}[!ht]
\caption{On CIFAR10, accuracy for all binary classification problems with different label compositions, use memorization algorithm by theorem \ref{th6}. The result in row $i$ and column $j$ is the result for classifying classes $i$ and $j$.}
\label{tabc10-duo}
\centering
\begin{tabular}{cccccccccccccccc}
\hline
category&0&1&2&3&4&5&6&7&8&9\\
0&-&0.77&0.74&0.78&0.81&0.81&0.85&0.85&0.68&0.73\\
1&0.77&-&0.78&0.75&0.82&0.78&0.82&0.87&0.79&0.63\\
2&0.74&0.78&-&0.61&0.61&0.65&0.67&0.67&0.82&0.77\\
3&0.78&0.75&0.61&-&0.71&0.54&0.67&0.69&0.83&0.76\\
4&0.81&0.82&0.61&0.71&-&0.66&0.62&0.65&0.82&0.79\\
5&0.81&0.78&0.65&0.54&0.66&-&0.73&0.67&0.81&0.78\\
6&0.85&0.82&0.67&0.67&0.62&0.73&-&0.71&0.86&0.81\\
7&0.85&0.87&0.67&0.69&0.65&0.67&0.71&-&0.82&0.73\\
8&0.68&0.79&0.82&0.83&0.82&0.81&0.86&0.82&-&0.69\\
9&0.73&0.63&0.77&0.76&0.79&0.78&0.81&0.73&0.69&-
\end{tabular}
\end{table}

From Table \ref{tabc10-duo}, we can see that, most of the accuracies are above 70$\%$, but for certain pairs, the results may be poor, such as cat and dog (category 3 and category 5).

Our memorization algorithm cannot exceed the training methods empirically. Training, as a method that has been developed for a long time, is undoubtedly effective. For each pair of labels, we use 3000 corresponding samples with each label in the original dataset to form a training set $D_{tr}$, and train Resnet18 \citep{Res} on
 $\D_{tr}$ (with 20 epochs, learning rate 0.1, use crossentropy as loss function, device is GPU NVIDIA GeForce RTX 3090), the accuracy of the obtained network is shown in Table \ref{tabc10-xl1}.

\begin{table}[!ht]
\caption{On CIFAR10, accuracy for all binary classification problems with different label compositions, use normal training algorithm. The result in row $i$ and column $j$ is the result for classifying classes $i$ and $j$.}
\label{tabc10-xl1}
\centering
\begin{tabular}{cccccccccccccccc}
\hline
category&0&1&2&3&4&5&6&7&8&9\\
0&-&0.99&0.98&0.99&0.99&0.99&0.99&0.99&0.98&0.99\\
1&0.99&-&0.99&0.98&0.99&0.99&0.99&0.99&0.99&0.99\\
2&0.98&0.99&-&0.99&0.99&0.99&0.99&0.99&0.99&0.99\\
3&0.99&0.98&0.99&-&0.98&0.96&0.97&0.99&0.98&0.99\\
4&0.99&0.99&0.99&0.98&-&0.99&0.99&0.99&0.99&0.99\\
5&0.99&0.99&0.99&0.96&0.99&-&0.99&0.99&0.99&0.99\\
6&0.99&0.99&0.99&0.97&0.99&0.99&-&0.98&0.99&0.99\\
7&0.99&0.99&0.99&0.99&0.99&0.99&0.98&-&0.99&0.99\\
8&0.98&0.99&0.99&0.98&0.99&0.99&0.99&0.99&-&0.99\\
9&0.99&0.99&0.99&0.99&0.99&0.99&0.99&0.99&0.99&-\\
\end{tabular}
\end{table}

Comparing Tables \ref{tabc10-duo} and \ref{tabc10-xl1}, it can be seen that the training results are significantly better.

\subsection{Compare with other memorization algorithm}

Three memorization network construction methods are considered in this section:
(M1): Our algorithm in theorem \ref{th6};
(M2): Method in \cite{memp};
(M3): Method in \cite{rema1}.

In particular, we do experiments on the classification of such five pairs of numbers in MNIST: 1 and 7, 2 and 3, 4 and 9, 5 and 6, 8 and 9, to compare methods M1, M2, M3. The main basis for selecting such pairs of labels is the similarity of the numbers. For any pair of numbers, we label the smaller number as -1 and the larger number as 1. Other settings follow section \ref{e2}, and the result is given in  Table \ref{tabminst-bu}.
We can see that our method performs much better in all cases.

\begin{table}[ht]
\caption{On MNIST, accuracy about different memorization algorithm.}
\label{tabminst-bu}
\centering
\begin{tabular}{cccccccccc}
\hline
pair (1,7)&Accuracy\\
M1&0.98\\
M2&0.51\\
M3&0.46\\
\hline
pair (2,3)&Accuracy\\
M1&0.96\\
M2&0.50\\
M3&0.51\\
\hline
pair (4,9)&Accuracy\\
M1&0.91\\
M2&0.45\\
M3&0.46\\
\hline
pair (5,6)&Accuracy\\
M1&0.96\\
M2&0.59\\
M3&0.47\\
\hline
pair (8,9)&Accuracy  \\
M1&0.96\\
M2&0.41\\
M3&0.48\\
\hline
\end{tabular}
\end{table}

From table \ref{tabminst-bu}, our method gets the best accuracy. When constructing a memorization network, the methods (M2), (M3) compress data into one dimension, such action will break the feature of the image, so they cannot get a good generalization.

\,\hskip10pt

\,\hskip10pt
\,\hskip10pt

\,\hskip10pt

\,\hskip10pt

\,\hskip10pt

\,\hskip10pt

\vskip50pt
\,\hskip10pt

\end{document}